\documentclass[twoside,11pt]{article}

\usepackage{keytheorems}
\newkeytheorem{theorem}[style=definition]
\newkeytheorem{definition}[style=definition]
\newkeytheorem{lemma}[style=definition]
\newkeytheorem{observation}[style=definition]
\newkeytheorem{assumption}[style=definition]
\newkeytheorem{proposition}[style=definition]
\newkeytheorem{corollary}[style=definition]

\usepackage{jmlr2e-arxiv}

\jmlrheading{1}{2000}{1-48}{4/00}{10/00}{00-000}{author list}

\usepackage[utf8]{inputenc} 
\usepackage[T1]{fontenc}    
\usepackage{hyperref}       
\usepackage{url}            
\usepackage{booktabs}       
\usepackage{amsfonts}       
\usepackage{nicefrac}       
\usepackage{microtype}      
\usepackage{xcolor}         

\usepackage{amssymb}            
\usepackage{mathtools}          
\usepackage{mathrsfs}           
\usepackage{graphicx}           
\usepackage[space]{grffile}     
\usepackage{url}                
\usepackage{lipsum}             
\usepackage{subfigure}
\usepackage{enumitem}
\usepackage{wrapfig}
\usepackage{lineno}

\usepackage{tikz}
\usetikzlibrary{automata, arrows.meta, positioning, calc}
\usepackage[dvipsnames]{xcolor}

\usepackage{hyperref}
\usepackage{url}
\definecolor{darkblue}{rgb}{0, 0, 0.5}
\hypersetup{colorlinks=true, citecolor=darkblue, linkcolor=darkblue, urlcolor=darkblue}

\usepackage[capitalize,noabbrev]{cleveref} 
\crefname{assumption}{assumption}{assumption}
\Crefname{assumption}{Assumption}{Assumption}

\newcommand{\natnum}{\mathbb{N}}
\newcommand{\realnum}{\mathbb{R}}
\newcommand{\statespace}{\mathcal{S}}
\newcommand{\actionspace}{\mathcal{A}}
\newcommand{\obsspace}{\mathcal{O}}
\newcommand{\histories}{\mathcal{H}}
\newcommand{\bellmanop}{\mathcal{T}}
\newcommand{\bellmanopinf}{\mathcal{B}}
\newcommand{\rmax}{{R_{\max}}}
\newcommand{\vmax}{{V_{\max}}}
\newcommand{\eps}{{\varepsilon}}

\newcommand{\start}{{0}}
\newcommand{\train}{{\text{train}}}
\newcommand{\test}{{\text{test}}}
\newcommand{\obs}{{\text{obs}}}
\newcommand{\unk}{{\text{unk}}}
\newcommand{\est}{{\text{est}}}
\newcommand{\ood}{{\text{\tiny OOD}}}
\newcommand{\defeq}{{\overset{\text{def.}}{~=~}}}

\newcommand{\expec}[2]{{ \mathbb{E}_{#1} \left[ #2 \right]  }}
\newcommand{\probsimplex}[1]{{\Delta ( #1 ) }}
\newcommand{\norm}[1]{{ \| #1 \| }}
\newcommand{\abs}[1]{{ | #1 | }}

\newcommand{\prob}[1]{{ \text{Pr}( #1 ) }}
\newcommand{\ind}[1]{{ \textbf{1}_{[ #1 ]} }}

\ShortHeadings{Smaller Abstract State Spaces Enable Cross-Scale Generalization}{Mustakim and Lehnert}

\begin{document}

\title{Smaller Abstract State Spaces Enable Cross-Scale Generalization in Reinforcement Learning}

\author{\name Nasehatul Mustakim \email nasehatul.mustakim@usask.ca\\
\addr Department of Computer Science\\
University of Saskatchewan\\
Saskatoon, Saskatchewan, Canada
\AND
\name Lucas Lehnert \email lucas.lehnert@usask.ca\\
\addr Department of Computer Science\\
University of Saskatchewan\\
Saskatoon, Saskatchewan, Canada}

\maketitle

\begin{abstract}
While humans readily generalize abstract concepts to more complex or larger tasks, building Reinforcement Learning (RL) systems with this ability remains elusive.
Here, we present the first theoretical model of how such Out-of-Distribution (OOD) generalization can be achieved in RL agents.
Our approach considers Partially Observable Markov Decision Processes (POMDPs) and assumes that an intelligent agent uses an abstraction function to determine which experiences can be treated as equivalent and which must be distinguished. 
First, we extend the existing state abstraction framework and proof techniques to POMDPs. 
Then, we define a successor-weighted model reduction, a model reduction variant that enables compression into smaller abstract spaces than prior definitions allow. 
We derive a bound on the agent's OOD test performance, thereby defining the conditions under which OOD generalization is achievable.
This bound decomposes an agent's performance loss into approximation and estimation errors, revealing how reducing an agent's abstract state space size improves test performance and OOD generalization. 
Our analysis suggests that constraining an agent to operate over a small, finite set of abstract states is necessary for achieving generalization to more complex tasks. 
Our results motivate further research into learning RL architectures that scale across tasks of varying complexity levels.
\end{abstract}

\begin{keywords}
    Reinforcement Learning, 
    Out-of-distribution Generalization, 
    Partially Observable Markov Decision Processes,
    State Abstractions,
    Performance Loss Bounds
\end{keywords}

\section{Introduction}
\label{sec:introduction}

Biological intelligence is distinguished by its capacity to generalize learned rules and principles to novel and more complex tasks. 
For instance, after mastering long division with small integers, an elementary school student can apply the same algorithm to larger numbers without additional instruction. 
Here, we present a theoretical model and analysis of how such Out-of-Distribution (OOD) generalization can be achieved in Reinforcement Learning (RL) agents~\citep{sutton2018ReinforcementLearningIntroduction}.
Understanding the computational principles that underlie this level of generalization and abstraction could not only pave the way for more data-efficient and lightweight Artificial Intelligence (AI) systems, but could also provide a deeper understanding of intelligence itself---ranging from insights into human cognition to the design of more effective human–AI interfaces.

\begin{figure}[h!]
    \centering
    \subfigure[
        Warm--cold lattice POMDP. 
    ]{
        \label{fig:warm-cold}
\begin{tikzpicture}[scale=0.6]

    \begin{scope}[shift={(-6.0,3)}]
        \draw[step=0.5 cm, gray, very thin, opacity = 0.1] (-2.45, -2.45) grid (2.45,2.45);
        
        \filldraw[Green] (0,0) circle (3pt);
         \node[below, font = \tiny] at (0,0) {Goal};
        
        \draw[fill=black] (1.5,2.0) circle (2pt);
        \node[left, font = \tiny] at (1.5,2.0) {$s_1$};
        
        \draw[->, thick, Orange, dashed, >=latex] (2,2.5) -- (2,2) node[midway, right, font = \tiny] {W};
        \draw[->, thick, Orange, dashed, >= latex] (2,2) -- (1.5,2)   node[midway, below, font = \tiny] {W};
        
        \draw[fill=black] (1,-.5) circle (2pt);
        \node[below left, font = \tiny] at (1,-.5) {$s_2$};
        
        \draw[->, thick, Cerulean, dashed, >=latex] (1,1) -- (1.5,1) node[midway, below, font = \tiny] {C};
        
        \draw[->, thick, Orange, dashed, >= latex] (1.5,1) -- (1.5,.5)   node[midway, right, font = \tiny] {W};
        \draw[->, thick, Orange, dashed, >=latex]  (1.5,.5) -- (1.5,0) node[midway, right, font = \tiny] {W};
        \draw[->, thick, Cerulean, dashed, >=latex] (1.5,0) -- (1.5,-.5) node[midway, right, font = \tiny] {C};
        \draw[->, thick, Orange, dashed, >=latex]  (1.5,-.5) -- (1,-.5) node[midway, below, font = \tiny] {W};

        \draw[fill=black] (-1.0,.5) circle (2pt);
        \node[right, font = \tiny] at (-1.0,.5) {$s_3$};
        
        \draw[->, thick, Cerulean, dashed ,>=latex] (-1.5,-1.5) arc[start angle=0, end angle=180, radius=0.25cm]  node[midway, below, font=\tiny] {C};
        
        \draw[->, thick, Orange, dashed,>=latex] (-2,-1.5) arc[start angle=180, end angle=360, radius=0.25cm]  node[midway, below, font=\tiny] {W};
        \draw[->, thick, Orange, dashed, >=latex] (-1.5,-1.5) -- (-1.5,-1) node[midway, right, font = \tiny] {W};
        \draw[->, thick, Orange, dashed, >= latex] (-1.5,-1) -- (-1.5,-0.5)   node[midway, right, font = \tiny] {W};
        \draw[->, thick, Orange, dashed, >=latex]  (-1.5,-0.5) -- (-1.5,0) node[midway, right,font = \tiny] {W};
        \draw[->, thick, Cerulean, dashed, >=latex] (-1.5,0) -- (-1.5,0.5) node[midway, right, font = \tiny] {C};
        \draw[->, thick, Orange, dashed, >=latex]  (-1.5,0.5) -- (-1,0.5) node[midway, above, font = \tiny] {W};
    
    \end{scope}
    \begin{scope}[shift={(0, 3)}]
    
        
        \draw[step=1 cm, gray, very thin, opacity = 0.3] (-1.5, -1.5) grid (1.5,1.5);
        
        \filldraw[Green, draw=none]  (0,0) circle (3pt);
        \node[below , font = \tiny] at (0,0) {Goal};
        
        \draw[fill=Orange, draw=none] (0,1) circle (2pt);
        \node[above, font = \tiny] at (0,1) {down};
        \draw[->, thin, Orange, >=latex]  (0,1) -- (0, 0.5);
        
        \draw[fill=Orange, draw=none] (0,-1) circle (2pt);
        \node[below, font = \tiny] at (0,-1) {up};
        \draw[->, thin, Orange, >=latex]  (0,-1) -- (0, -0.5);
        
        \draw[fill=Orange, draw=none] (1,0) circle (2pt);
        \node[right, font = \tiny] at (1,0) {left};
        \draw[->, thin, Orange, >=latex]  (1,0) -- (0.5, 0);
        
        \draw[fill=Orange, draw=none] (-1,0) circle (2pt);
        \node[left, font = \tiny] at (-1,0) {right};
        \draw[->, thin, Orange, >=latex]  (-1,0) -- (-0.5, 0);
        
        \draw[fill=Orange, draw=none] (-1,1) circle (2pt);
        \node[above left, font = \tiny] at (-1,1) {right \& down};
        \draw[->, thin, Orange,  >=latex]  (-1,1) -- (-0.5,1);
        \draw[->, thin, Orange,, >=latex]  (-1,1) -- (-1,0.5);
        
        \draw[fill=Orange, draw=none] (1,1) circle (2pt);
        \node[above right, font = \tiny] at (1,1) {left \& down};
        \draw[->, thin, Orange, >=latex]  (1,1) -- (0.5,1);
        \draw[->, thin, Orange,  >=latex]  (1,1) -- (1,0.5);
        
        
        \draw[fill=Orange, draw=none] (1,-1) circle (2pt);
        \node[below right, font = \tiny] at (1,-1) {left \& up};
        \draw[->, thin, Orange,  >=latex]  (1,-1) -- (1, -0.5);
        \draw[->, thin, Orange,  >=latex]  (1,-1) -- (0.5, -1);
        
        \draw[fill=Orange, draw=none] (-1,-1) circle (2pt);
        \node[below left, font = \tiny] at (-1,-1) {right \& up};
        \draw[->, thin, Orange, >=latex]  (-1,-1) -- (-0.5,-1);
        \draw[->, thin, Orange, >=latex]  (-1,-1) -- (-1,-0.5);
    
    \end{scope}
    
    \draw[->, gray, bend left=30,>=latex] (-7,3.5) to (-3.8,4.4);
    \draw[->, gray, bend right=30, >=latex] (-4.5,2) to (1.1,1.4);
    \draw[->, gray, bend left=35, >=latex] (-4.5,5.2) to (1.6,4.6);

        


\end{tikzpicture} 
    }
    \hspace{0.4in}
    \subfigure[
        Lattice start-state distributions.
    ]{
        \label{fig:warm-cold-start-states}
        \begin{tikzpicture}[scale=0.6]
    \filldraw[Green] (0,0) circle (2.5pt);

    \def\r{3}
    \def\step{0.25}

    \foreach \i in {-3,...,3} {
        \foreach \j in {-3,...,3} {
            \pgfmathtruncatemacro{\cond}{abs(\i)+abs(\j)<=\r && !(\i==0 && \j==0)}
            \ifnum\cond=1
                \filldraw[Cerulean] (\i*\step,\j*\step) circle (1.7 pt);
            \fi
        }
    }

   \def\r{4}
    \def\step{0.25}

    \foreach \i in {-4,...,4} {
        \foreach \j in {-4,...,4} {
            \pgfmathtruncatemacro{\cond}{abs(\i)+abs(\j)==\r && !(\i==0 && \j==0)}
            \ifnum\cond=1
                \filldraw[Magenta] (\i*\step,\j*\step) circle (1.7 pt);
            \fi
        }
    }

    \def\r{5}
    \def\step{0.25}

    \foreach \i in {-5,...,5} {
        \foreach \j in {-5,...,5} {
            \pgfmathtruncatemacro{\cond}{abs(\i)+abs(\j)==\r }
            \ifnum\cond=1
                \filldraw[Peach] (\i*\step,\j*\step) circle (1.7 pt);
            \fi
        }
    }

    \def\r{10}
    \def\step{0.25}

    \foreach \i in {-10,...,10} {
        \foreach \j in {-10,...,10} {
            \pgfmathtruncatemacro{\cond}{abs(\i)+abs(\j)==\r }
            \ifnum\cond=1
                \filldraw[Brown] (\i*\step,\j*\step) circle (1.7 pt);
            \fi
        }
    }

    \filldraw[Gray] (6.5*0.25, 0) circle (0.75pt);
    \filldraw[Gray] (7.5*0.25, 0) circle (0.75pt);
    \filldraw[Gray] (8.5*0.25, 0) circle (0.75pt);

    \begin{scope}[shift={(5.9,0.7)}]
        \node[
            font=\tiny,
            align=left, scale = 1
        ] at (-1.2,1) {
            \textcolor{Green}{$\bullet$} Goal\\
            \textcolor{Cerulean}{$\bullet$} Training $\leq 3$ steps\\
            \textcolor{Magenta}{$\bullet$} $4$ steps away\\
            \textcolor{Peach}{$\bullet$} $5$ steps away\\
            \textcolor{Brown}{$\bullet$} $n$ steps away
        };
    \end{scope}
\end{tikzpicture} 
    }
    \caption{
        Cross-scale generalization in the warm--cold lattice environment.
        \subref{fig:warm-cold}: 
        In the warm--cold lattice environment, the agent can navigate across an infinitely large lattice by moving up, down, left, or right to reach the rewarding goal location.
        The agent never observes its lattice position and is only given a warm (W) or cold (C) observation depending on it moving closer to or farther from the goal.
        To find the goal location in near-optimal time, it is sufficient for an agent to map past observation-action sequences into one of eight movement directions illustrated in the centre schematic. 
        Reasoning over infinitely many states is not required.
        \subref{fig:warm-cold-start-states}:
        Start-state distributions induce different training and test tasks.
        In the warm--cold lattice Partially Observable Markov Decision Process (POMDP) training start-states lie within three steps of the goal.
        Test start-states lie on an outside diamond and are $n$ steps away from the goal.
        The test start-state distribution samples uniformly from these states and is thus disjoint from the training start-state distribution.
        Solving the test tasks requires navigation and reasoning over many more time steps than during training---a form of OOD generalization.
    }
    \label{fig:motivation}
\end{figure}

To illustrate how an intelligent agent may generalize to more complex tasks, consider the children's game of ``hot and cold'': 
Based on immediate feedback (warm or cold), a child can find an object in their surroundings.
The child does not need to understand their surroundings perfectly to complete the task, they only need to understand how to act based on a binary feedback signal. 
\Cref{fig:warm-cold} formalizes this task as a Partially Observable Markov Decision Process (POMDP) in which the agent does not observe its exact lattice position, but instead receives minimal binary feedback indicating whether it is moving closer to (warm feedback) or farther from (cold feedback) the goal. 
Under full observability, an agent would memorize optimal actions for each lattice position and would start to reason over a countably infinite set of lattice positions. 
Under limited feedback, the agent can construct an observation-action history.
Given this history, the same action becomes optimal in many distinct lattice positions; the agent need not distinguish among all underlying positions to find the goal in a near-optimal manner. 

\Cref{fig:warm-cold} illustrates how this task can be compressed into a much smaller abstract structure that models eight movement directions. 
If an agent learns this abstract structure, it can solve the task in near-optimal time.
Ultimately, this model enables generalization across task scales because it is independent of any lattice position or the distance between the start and goal, thereby achieving a form of OOD generalization.

A key element of our approach is the use of start-state distributions that induce different RL task instances within the same infinite-state POMDP.
\Cref{fig:warm-cold-start-states} plots different start-state distributions for the warm--cold lattice environment.
These start-state distributions determine how many steps an agent needs to navigate.
They defines the task's complexity level while assuming fixed transitions, rewards, and observations.

In this article we formalize this concept and identify conditions under which such OOD generalization is possible.
Specifically, we assume that an RL agent uses an abstraction~\citep{ravindran2004approximate,li2006towards,ravindran2004approximate} that determines which interaction histories can be treated as equivalent and which must be distinguished.
For POMDPs, we define a \emph{successor-weighted model reduction}, a model reduction~\citep{givan2003EquivalenceNotionsModel} variant that enables compression into much smaller abstract spaces than prior work~\citep{abel2018StateAbstractionsLifelong,lehnert2020SuccessorFeaturesCombine} would allow.
Our main result (\cref{thm:ood-pomdp}) is a bound on a RL agent's performance under a test start-state distribution that can be disjoint from a training start-state distribution the agent previously learned from.
Using this bound, we can identify conditions under which a successor-weighted model reduction achieves OOD generalization.
Lastly, we present a corollary and examples which demonstrate how OOD test performance can be improved by reducing the number of abstract states despite training and test tasks being probabilistically different.
This form of OOD generalization also encompasses generalization to more complex tasks that require reasoning over longer horizons or larger state spaces.

To unpack how state abstractions enable OOD generalization across tasks of different complexity levels,~\cref{sec:background} first covers preliminaries and outlines our proof strategy.
In~\cref{sec:infty-state-extension} extends existing proof techniques for performance loss bounds from finite to countably infinite state spaces.
This extension allows us to state a simulation lemma~\citep{kearns2002NearOptimalReinforcementLearning} and the telescoping performance difference theorem~\citep[Theorem 2]{xie2020q} for infinite state POMDPs.
Subsequently, we apply the existing state abstraction framework~\citep{ravindran2004approximate,li2006towards,abel2016OptimalBehaviorApproximate}, to infinite state POMDPs and formally define successor-weighted model reductions (\cref{sec:successor-weighted-model-reduction}).
Using these definitions and our proof technique extension, we derive our main result in~\cref{sec:ood-pomdp}.
\Cref{thm:ood-pomdp} presents a bound on the test performance loss in our OOD setting, thereby outlining conditions under which OOD generalization is possible.
Subsequently,~\cref{sec:estimation-error-types} presents different error types that limit an RL agent's test performance in our OOD setting and finds that reducing an agent's abstract state space size improves performance in test tasks that probabilistically differ from a previously used training task.
\Cref{sec:discussion} concludes with a discussion of how cross-scale generalization can be achieved by constraining an RL agent to work with a small number of abstract states and how our results motivate research into learning RL architectures from trial-and-error interactions that support this form of OOD generalization. 

\section{Background}
\label{sec:background}

We consider finite observation-action POMDPs such as the warm--cold task (\cref{fig:motivation}).
In the warm--cold task, the infinite latent state space $\statespace$ consists of the agent's true lattice position and past action. 
The agent selects actions from the finite action set $\actionspace$ to cause a state transition described by the transition function $p$ and then receives an observation $o \in \obsspace$ and a reward specified by the reward function $r$.
In contrast, a Markov Decision Process (MDP) assumes full observability of the underlying state $s$ and thus does not use an observation function $\eta$ or observation set $\obsspace$. 
Formal definitions are presented in~\cref{app:mdps-pomdps}.

\begin{figure}
    \centering

\begin{tikzpicture}[
  scale=1,
  transform shape,
  >=Stealth,
  node distance=0.5cm
]


\node[minimum width=1cm] (pomdp) at (0,0)  {$P$};

\node[minimum width=1cm,above left=of pomdp] (hist1)   {$M^\histories_\train$};
\node[minimum width=1cm,above right=of pomdp] (hist2)   {$M^\histories_\test$};
\node[minimum width=1cm,above=of hist1]       (natmdp1) {$M^\natnum_\train$};
\node[minimum width=1cm,above=of hist2]       (natmdp2) {$M^\natnum_\test$};
\node[minimum width=1cm,above=of natmdp1]     (absmdp1) {$M^\phi_\train$};
\node[minimum width=1cm,above=of natmdp2]     (absmdp2) {$M^\phi_\test$};

\draw[->]  (pomdp) -- node[below,xshift=-0.2cm] {$\pmb{d}_\start^\train$} (hist1);
\draw[->]  (pomdp) -- node[below,xshift=0.6cm] {$\pmb{d}_\start^\test$}  (hist2);


\draw[<->] (hist1) -- node[right]{\tiny bisimilar}(natmdp1);
\draw[<->] (hist2) -- node[right]{\tiny bisimilar}(natmdp2);
\draw[->]  (natmdp1) -- node[right]{\tiny $\phi$}(absmdp1);
\draw[->]  (natmdp2) -- node[right]{\tiny $\phi$}(absmdp2);
\draw[<->, bend left=40] (absmdp1) to node[above]{\tiny $\eps_r^\est$, $\eps_p^\est$} (absmdp2);

\end{tikzpicture}





    \caption{
        Out-of-distribution generalization via changing start-state distributions.
        Given two start-state distributions $\pmb{d}_\start^\train$ and $\pmb{d}_\start^\test$, a POMDP $P$ can be mapped to the history Markov Decision Processes (MDPs) $M^\histories_\train$ and $M^\histories_\test$ with state spaces that consist of the observation-action sequences that are observed since starting in a start state. 
        These history MDPs generate the same reward sequences with equal probability given the same action sequence and start conditions as the natural number MDPs $M^\natnum_\train$ and $M^\natnum_\test$---they are bisimilar~\citep{givan2003EquivalenceNotionsModel}.
        We generalize the existing state abstraction framework to construct the abstract MDPs $M^\phi_\train$ and $M^\phi_\test$.
    }
    \label{fig:proof-schematic}
\end{figure}

\Cref{fig:proof-schematic} outlines how we develop our results for infinite state POMDPs:
Given a start-state distribution, a POMDP can be mapped to an MDP whose state space consists of all possible observation-action histories an agent could observe. 
Such an observation-action history $h_t = (o_0,a_1,o_1,...,a_t,o_t)$ forms a Markovian state.
For finite observation and action spaces, the set of all possible observation-action histories $\histories$ is countably infinite\footnote{One can always pick an arbitrary ordering of the observation and action set and generate histories lexicographically. Thus the history set $\histories$ is countably infinite.}, and thus a history MDP can be viewed as equivalent to an MDP whose states are indexed by the natural numbers.
Therefore, it suffices to work with such natural number MDPs.
We assume that an RL agent uses a state abstraction function~\citep{ravindran2004approximate,li2006towards} that then compresses this history or natural number MDP into an abstract finite state MDP.
Such an RL agent then computes all predictions and decisions given the abstract state.

A task instance is characterized by a distribution over initial states.
We denote this start-state distribution as an infinite-dimensional probability vector $\pmb{d}_0$ (a function $\natnum \to \realnum$) that lies in the infinite-dimensional probability simplex $\probsimplex{\infty}$ (the set of vectors whose non-negative entries sums to one)\footnote{The set of $n$-dimensional probability vectors is denoted with $\probsimplex{n}$.}.
Picking different start-state distributions may induce probabilistically different history MDPs, requiring an agent to reason over a much larger state space or much longer horizon during testing than during training.
In this setting it is also possible that the agent cannot visit the same states during both testing and training.
Consequently, our training and testing settings encompass a form of OOD generalization.

In a POMDP, a policy $\pi$ maps an observation-action history to a probability distribution over actions. 
Such a policy is evaluated under a start-state distribution $\pmb{d}_0$ by its expected $\gamma$-discounted return (for $\gamma \in [0,1)$) denoted with $J$ and
\begin{equation}
    J(\pi, \pmb{d}_\start) \defeq \expec{\pi, \pmb{d}_\start}{ \sum_{t=1}^\infty \gamma^{t-1} r_t } . \label{eq:discounted-return-pi}
\end{equation}
This definition applies equally to the original POMDP and to the resulting history MDP: 
For a fixed policy and start-state distribution, both POMDP and MDP induce the same distribution over trajectories and reward sequences, resulting in the same expected discounted return $J$. 

\section{Results}
\label{sec:results}

To prove our main result in~\cref{sec:ood-pomdp}, we first extend existing results on deriving performance loss bounds from finite to countably infinite state spaces in~\cref{sec:infty-state-extension}.
Because we are considering finite observation and action spaces, we first note that the set of observation-action histories $\histories$ can be sorted lexicographically under an arbitrary ordering of the observation and action set.
This allows us to index history states using the natural numbers and consider MDPs with the natural numbers as a state space without loss of generality. 

\begin{proposition}[note={History--Natural Number MDP Equivalence},store={prop:hist-nat-num-mdp}]
    \label{prop:hist-nat-num-mdp}
    For every history MDP $M^\histories$ a natural number MDP $M^\natnum$ can be constructed using a bijection $f$ between the history set $\histories$ and the natural numbers $\natnum$.
    Moreover, the MDPs $M^\histories$ and $M^\natnum$ are bisimilar.
\end{proposition}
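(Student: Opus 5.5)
We prove the statement in three steps: build the bijection $f$, transport the history MDP along it, and show that the graph of $f$ is a bisimulation.

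\textbf{Step 1: the bijection.} Since $\obsspace$ and $\actionspace$ are finite, fix arbitrary total orders on both. Every history $h_t=(o_0,a_1,o_1,\dots,a_t,o_t)$ is then a finite word over a finite alphabet, and histories of a fixed length $t$ form a finite set of size $\abs{\obsspace}^{t+1}\abs{\actionspace}^{t}$. Enumerate $\histories$ in shortlex order: first by length $t$, then lexicographically within each length. Every history receives a finite index, and every natural number is hit, because each length class is finite and nonempty and there are infinitely many classes. This gives a bijection $f:\histories\to\natnum$. Shortlex, rather than plain lexicographic order, is needed here: plain lexicographic order on infinitely many words need not have order type $\omega$.

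\textbf{Step 2: the transported MDP.} Writing $p^\histories$, $r^\histories$, $\pmb{d}_\start^\histories$ for the components of $M^\histories$ (as defined in \cref{app:mdps-pomdps}), define $M^\natnum$ by
\begin{equation*}
  p^\natnum(j\mid i,a)=p^\histories\bigl(f^{-1}(j)\mid f^{-1}(i),a\bigr),\quad
  r^\natnum(i,a)=r^\histories\bigl(f^{-1}(i),a\bigr),\quad
  d_\start^\natnum(i)=d_\start^\histories\bigl(f^{-1}(i)\bigr),
\end{equation*}
with the same action set $\actionspace$ and the same $\gamma$. We check that these are valid objects. Each $p^\natnum(\cdot\mid i,a)$ is a probability vector in $\probsimplex{\infty}$, because summing over $j\in\natnum$ is the same as summing over $h'\in\histories$; reindexing a series of nonnegative terms by a bijection does not change its sum. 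The same argument shows $\pmb{d}_\start^\natnum\in\probsimplex{\infty}$. If the history MDP's rewards are random, reward distributions are transported in the same way as $p$.

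\textbf{Step 3: bisimulation.} Take the relation $E=\{(h,f(h)) : h\in\histories\}$ on the disjoint union of the two state spaces, and let $\sim$ be the equivalence relation it generates. Because $f$ is a bijection, every equivalence class has the form $\{h,f(h)\}$. We verify the conditions of \citet{givan2003EquivalenceNotionsModel} for any related pair $(h,f(h))$ and any action $a$. Rewards agree by construction. For transitions, the probability of moving to a class $\{h',f(h')\}$ is $p^\histories(h'\mid h,a)$ from $h$ and $p^\natnum(f(h')\mid f(h),a)=p^\histories(h'\mid h,a)$ from $f(h)$, so the two agree. The start distributions also assign equal mass to every class. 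Hence $\sim$ is a bisimulation and the two MDPs are bisimilar.

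As a consequence, for any history policy $\pi$ the policy $\pi\circ f^{-1}$ induces the same distribution over action and reward sequences, and in particular the same return $J$. This follows by induction on the trajectory length.

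The only step that is not immediate is in Step 3: we must check that Givan et al.'s definition, stated for finite state spaces, remains valid for countably infinite ones. The block-transition sums there become series. Each equivalence class is finite (it has two elements), and every series involved has nonnegative terms, so reindexing along $f$ is legitimate and no further measure-theoretic issues arise.
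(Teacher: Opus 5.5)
Your proposal is correct and follows the same route as the paper's proof: define $M^\natnum$ by transporting rewards and transitions along $f$, check by reindexing that each transition row still sums to one, and verify that the relation pairing each history $h$ with $f(h)$ matches rewards and transitions. Your use of shortlex order instead of plain lexicographic order is a worthwhile refinement. Plain lexicographic sorting of histories does not give an enumeration of order type $\omega$ (a history beginning with a later observation has infinitely many predecessors), so the paper's "e.g." construction is slightly imprecise, although the existence of $f$ follows anyway from the countability of the history set.
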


Bisimilarity between the history MDP $M^\histories$ and natural number MDP $M^\natnum$ ensures that the two MDPs are behaviourally equivalent. 
Therefore, we conduct our analysis on natural number MDPs and transfer our results back to history-based representations and then to infinite state POMDPs.
A proof of~\cref{prop:hist-nat-num-mdp} is listed in \cref{app:hist-nat-num-mdp-proof}

\Cref{sec:successor-weighted-model-reduction} defines a successor-weighted model-reduction abstraction, providing the definitions needed to derive our main results presented in~\cref{sec:ood-pomdp,sec:estimation-error-types}. 
\Cref{sec:infty-state-extension} presents how existing performance loss bound derivations can be extended to infinite-state POMDPs, which serve as the foundation for these results.

\subsection{Extending performance bounds from finite to countably infinite spaces}
\label{sec:infty-state-extension}

To derive a value loss bound for natural number MDPs, we define distribution-based norms for vectors and matrices.
These norms measure differences in state values with respect to visitation frequencies over the state or state-action space.
Given an infinite-dimensional probability vector\footnote{We denote the infinite dimensional probability simplex with $\probsimplex{\infty}$.} $\pmb{d} \in \Delta(\infty)$, we define the distribution weighted $L_1$ norm of an infinite-dimensional vector\footnote{An infinite-dimensional vector $\pmb{v} \in \realnum^\infty$ is equivalent to a function $\pmb{v}: \natnum \to \realnum$. An infinite-dimensional matrix $\pmb{P} \in \realnum^{\infty \times \infty}$ is equivalent to a function $\pmb{P}: \natnum \times \natnum \to \realnum$. Please also refer to~\cref{app:norms-and-vectors}.} $\pmb{v}$ as 
\begin{equation*}
    \norm{ \pmb{v} }_{\pmb{d}} = \sum_{i=1}^\infty \pmb{d}(i) |\pmb{v}(i)|.
\end{equation*}
Corresponding definitions for matrices can be found in~\cref{app:norms-and-vectors}.

The use of distribution-weighted norms differs from the more common use of maximum norms ($L_\infty$) for deriving simulation lemmas~\citep{kearns2002NearOptimalReinforcementLearning,jiang2020NotesTabularMethods,lobel2024optimal} that present a worst-case analysis.
In contrast, we measure value differences relative to a probability or visitation frequency vector over the state space, and thus quantify errors by the frequency with which states are visited. 
\Cref{app:norms-and-vectors} for formal definitions for infinite-dimensional vectors and matrices and their properties.

In this work, we consider $\gamma$-discounted visitation frequencies, similar to the Successor Representation (SR)~\citep{dayan1993ImprovingGeneralizationTemporal}.
Formally, we define a \emph{normalized SR} that conditions on a start-state distribution and a policy $\pi$ for countably infinite state spaces as
\begin{equation}
    \forall_{i \in \natnum}, ~ \pmb{d}^\pi_{\texttt{S}}(i) = \expec{ \pi, \pmb{d}_0 }{ ( 1 - \gamma) \sum_{t=1}^\infty \gamma^{t-1} \ind{ s_t=i } } \label{eq:sr-state-def}
\end{equation}
The SR definition in~\cref{eq:sr-state-def} differs from the usual SR definition~\citep{dayan1993ImprovingGeneralizationTemporal} in that the expectation does not condition on a specific start state and that the vector is scaled by a factor of $1 - \gamma$.
Therefore, we refer to this SR as normalized because $\pmb{d}^\pi_{\texttt{S}} \in \probsimplex{\infty}$.
The entry $\pmb{d}^\pi_{\texttt{S}}(i)$ predicts the frequency with which state $i$ is visited when using the start-state distribution $\pmb{d}_0$ and following policy $\pi$.

The definition in~\cref{eq:sr-state-def} can be extended to predict the visitation frequency across state-actions pairs.
For an indexing bijection $k: \natnum \times \actionspace \to \natnum$, we define a \emph{normalized state-action SR} as
\begin{equation}
    \forall_{(i,a) \in \natnum \times \actionspace}, ~ \pmb{d}^\pi(k(i,a)) = \expec{ \pi, \pmb{d}_0 }{ (1 - \gamma) \sum_{t=1}^\infty \gamma^{t-1} \ind{ s_t=i,a_t=a } }. \label{eq:sr-state-action-def}
\end{equation}
Here, the entry $\pmb{d}^\pi(k(i,a))$ predicts the frequency with which the state-action pair $(i,a)$ is visited when using start-state distribution $\pmb{d}_0$ and following policy $\pi$.

Using this framework, we can prove a simulation lemma for countably infinite state spaces and POMDPs\footnote{The maximum achievable reward is denoted with $\rmax$ and the maximum achievable state value with $\vmax = \rmax / (1 - \gamma)$.}.
Instead of providing a worst case bound for all states, value differences are measured with respect to a start-state distribution $\pmb{d}_0$.

\begin{proposition}[note={Simulation lemma for countably infinite state spaces},store={prop:simulation-lemma}]
    \label{prop:simulation-lemma}
    Consider two natural number MDPs $M$ and $\widehat{M}$ with a normalized SR $\widehat{\pmb{d}}^\pi_{\texttt{S}}$ for some policy $\pi$.
    If
    \begin{eqnarray}
        \forall_{a \in \actionspace} & \norm{ \pmb{r}^a - \widehat{\pmb{r}}^a }_{ \widehat{\pmb{d}}^\pi_{\texttt{S}} } \le \eps_r, & \norm{ \pmb{P}^a - \widehat{\pmb{P}}^a }_{ \widehat{\pmb{d}}^\pi_{\texttt{S}} } \le \eps_p, \label{eq:simulation-lemma-eps}
    \end{eqnarray}
    then,
    \begin{equation*}
        \norm{ \pmb{v}^\pi - \widehat{\pmb{v}}^\pi }_{ \pmb{d}_0 } \le \frac{1}{1 - \gamma} \left( \eps_r + \frac{\gamma}{2} \vmax \eps_p \right).
    \end{equation*}
\end{proposition}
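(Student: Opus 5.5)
We prove the bound with the standard resolvent-identity argument, taking care that every infinite sum converges. Write $\pmb{P}^\pi$ and $\pmb{r}^\pi$ for the policy-averaged transition matrix and reward vector of $M$, and $\widehat{\pmb{P}}^\pi$, $\widehat{\pmb{r}}^\pi$ for those of $\widehat{M}$. Both are row-stochastic, or bounded, operators on bounded sequences. Then $\pmb{v}^\pi = \pmb{r}^\pi + \gamma \pmb{P}^\pi \pmb{v}^\pi$ and $\widehat{\pmb{v}}^\pi = \sum_{t\ge 0}\gamma^t (\widehat{\pmb{P}}^\pi)^t \widehat{\pmb{r}}^\pi$, and these Neumann series converge absolutely because $\norm{\pmb{v}}_\infty \le \vmax$. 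Subtracting the two Bellman equations gives
\begin{equation*}
\pmb{v}^\pi - \widehat{\pmb{v}}^\pi = \sum_{t=0}^\infty \gamma^t (\widehat{\pmb{P}}^\pi)^t \left[ (\pmb{r}^\pi - \widehat{\pmb{r}}^\pi) + \gamma (\pmb{P}^\pi - \widehat{\pmb{P}}^\pi)\pmb{v}^\pi \right].
\end{equation*}
Here the $\widehat{M}$-rollout sits outside the bracket. This placement is what lets the weighting be the normalized SR $\widehat{\pmb{d}}^\pi_{\texttt{S}}$ of $\widehat{M}$.

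Next we take absolute values entrywise and weight by $\pmb{d}_0$. Nonnegativity lets us exchange the order of summation by Tonelli's theorem, which gives
\begin{equation*}
\norm{\pmb{v}^\pi-\widehat{\pmb{v}}^\pi}_{\pmb{d}_0} \le \frac{1}{1-\gamma}\sum_i \widehat{\pmb{d}}^\pi_{\texttt{S}}(i)\,\big|\pmb{e}_i^\top[\cdots]\big|,
\end{equation*}
using $\pmb{d}_0^\top\sum_t\gamma^t(\widehat{\pmb{P}}^\pi)^t = \widehat{\pmb{d}}^\pi_{\texttt{S}}/(1-\gamma)$. This identity follows from the definition in \cref{eq:sr-state-def}, up to the indexing convention $t=1$ versus $t=0$, which we align with the start-state convention.

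It then remains to bound the bracket in the $\widehat{\pmb{d}}^\pi_{\texttt{S}}$-norm. Since $\pi$ mixes actions, the reward term satisfies $|\pmb{r}^\pi(i)-\widehat{\pmb{r}}^\pi(i)| \le \sum_a \pi(a|i)|\pmb{r}^a(i)-\widehat{\pmb{r}}^a(i)|$. For the transition term, $\pmb{P}^a(i,\cdot)-\widehat{\pmb{P}}^a(i,\cdot)$ has zero sum, so centering $\pmb{v}^\pi$ gives $|(\pmb{P}^a(i,\cdot)-\widehat{\pmb{P}}^a(i,\cdot))\pmb{v}^\pi| \le \tfrac12 \vmax \norm{\pmb{P}^a(i,\cdot)-\widehat{\pmb{P}}^a(i,\cdot)}_1$. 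This step assumes values lie in $[0,\vmax]$, so the span is at most $\vmax$, which is the source of the factor $\gamma/2$.

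The main obstacle is that the hypothesis \cref{eq:simulation-lemma-eps} is stated per action, with the state-weighted norm, while the bracket mixes actions through $\pi(a|i)$. We first try the bound $\pi(a|i)\le 1$ inside the sum. This is a valid but loose bound unless we sum over actions. To avoid paying a factor $|\actionspace|$, we instead use the convexity $\sum_a \pi(a|i) x_a \le \max_a x_a$ and then need $\sum_i \widehat{\pmb{d}}^\pi_{\texttt{S}}(i)\max_a x_a(i)$. That is stronger than the per-action hypothesis, so we expect the intended reading of the matrix norm in the appendix to be the state-action-weighted one, or a matrix norm for which the bound per action already dominates the mixture. We would check the appendix definition and, if needed, interpret the hypothesis as holding for the $\pi$-mixture, noting that a deterministic or action-independent error makes these readings coincide. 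Finally, we verify the infinite-dimensional technicalities: summability of $\widehat{\pmb{d}}^\pi_{\texttt{S}}$ to one, and boundedness so that $\pmb{P}\pmb{v}$ is well defined. Both hold because the rows are probability vectors and $\pmb{v}^\pi$ is bounded.
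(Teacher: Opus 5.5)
\textbf{Main argument.} Your analytic core is the paper's argument written in closed form. The paper proves the proposition by applying \cref{lemm:mrp-value-loss} to the $\pi$-averaged MRPs of $\widehat{M}$ and $M$. That lemma unrolls the one-step inequality of \cref{lemm:one-step-value-function-difference-infty} by induction and then lets $T\to\infty$: the rollout runs under the MRP whose SR weights the norm, and the other value function is centred at $\vmax/2$. Your Neumann-series identity with $\widehat{\pmb{P}}^\pi$ outside the bracket, the zero-row-sum centring of $\pmb{v}^\pi$, and the Tonelli exchange give the same bound $\frac{1}{1-\gamma}\big(\norm{\pmb{r}^\pi-\widehat{\pmb{r}}^\pi}_{\widehat{\pmb{d}}^\pi_{\texttt{S}}}+\frac{\gamma}{2}\vmax\norm{\pmb{P}^\pi-\widehat{\pmb{P}}^\pi}_{\widehat{\pmb{d}}^\pi_{\texttt{S}}}\big)$. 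You also handle convergence more cleanly than the paper's limit argument does.

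\textbf{The step you flagged.} This obstacle is genuine, and it is exactly where the paper's proof is incomplete. The paper simply asserts that the per-action hypothesis gives $\norm{\pmb{r}^\pi-\widehat{\pmb{r}}^\pi}_{\widehat{\pmb{d}}^\pi_{\texttt{S}}}\le\eps_r$ and $\norm{\pmb{P}^\pi-\widehat{\pmb{P}}^\pi}_{\widehat{\pmb{d}}^\pi_{\texttt{S}}}\le\eps_p$. That implication fails for state-dependent $\pi$, and so does the proposition as literally stated, so no sharper argument can close the gap. Here is a counterexample:
\begin{itemize}
\item Take two states, absorbing under every action in both MDPs, with all other states unreachable. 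Then $\pmb{d}_0=\widehat{\pmb{d}}^\pi_{\texttt{S}}=(\tfrac12,\tfrac12)$.
\item Set $\rmax=1$. All rewards of $\widehat{M}$ are zero, and $M$ has reward $1$ only at $(1,a_1)$ and $(2,a_2)$.
\item Let $\pi$ play $a_1$ in state $1$ and $a_2$ in state $2$.
\end{itemize}
Each per-action reward error is $\tfrac12$ and $\eps_p=0$, so the claimed bound is $\frac{1}{2(1-\gamma)}$. Yet $\norm{\pmb{v}^\pi-\widehat{\pmb{v}}^\pi}_{\pmb{d}_0}=\frac{1}{1-\gamma}$. With $n$ actions and $n$ states the same construction loses a factor $n$. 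So under the per-action hypothesis, the factor $\abs{\actionspace}$ you hoped to avoid cannot be avoided.

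\textbf{What to do instead.} Rather than hedging, state and prove a corrected version. Any of the following works:
\begin{itemize}
\item Assume the bounds for the $\pi$-mixtures, which is what both your argument and the paper's actually use.
\item Weight by the state-action SR $\widehat{\pmb{d}}^\pi(k(i,a))=\widehat{\pmb{d}}^\pi_{\texttt{S}}(i)\pi(a|i)$, as in \cref{def:successor-weighted-model-reduction}. This dominates the mixture errors by the triangle inequality.
\item Keep the per-action hypothesis and multiply the bound by $\abs{\actionspace}$.
\end{itemize}
Checking the appendix will not rescue the literal reading, because the matrix norm in \cref{def:weighted-L1-norm} is the state-weighted, per-action one.

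Finally, if your ``deterministic'' remark refers to a deterministic policy, the two readings do not coincide: the counterexample's policy is deterministic. What does suffice is a state-independent policy $\pi(a|i)=\pi(a)$. Then, with $x_a(i)=\abs{\pmb{r}^a(i)-\widehat{\pmb{r}}^a(i)}$, you get $\sum_i\widehat{\pmb{d}}^\pi_{\texttt{S}}(i)\sum_a\pi(a)x_a(i)=\sum_a\pi(a)\norm{x_a}_{\widehat{\pmb{d}}^\pi_{\texttt{S}}}\le\eps_r$. Errors that do not depend on the action also suffice.
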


Because~\cref{prop:simulation-lemma} uses a distribution-weighted norm and evaluates errors for countably infinite state spaces using a normalized SR $\widehat{\pmb{d}}^\pi_{\texttt{S}}$, a dependency on a specific policy $\pi$ is introduced that is not present in the original Simulation Lemma~\citep{kearns2002NearOptimalReinforcementLearning}.
Nevertheless, this approach permits high differences between rewards and transitions for less frequently visited states.

\subsubsection{Performance bounds for infinite state POMDPs}

Building on these results, we can derive the first bounds on an RL agent's test performance loss in infinite state POMDPs.
First we make the following assumption.

\begin{assumption}[Learning in finite steps]
    \label{asmpt:learning-finite-steps}
    An RL agent learns in a POMDP $P$ with start state distribution $\pmb{d}_0$ for $T$ time steps a policy $\widehat{\pi}$, and observes the set of histories $\histories_\obs \subset \histories$. 
    We denote the set of unvisited or unknown histories with $\histories_\unk = \histories \setminus \histories_\obs$.
    We assume that for the observed histories $\histories_\obs$ the learned policy $\widehat{\pi}$ is $\eps$-optimal and 
    \begin{equation*}
        \forall h \in \histories,~ \widehat{\pi}(h) = \begin{cases}
            \pmb{\pi}^\eps_h &\text{for}~h \in \histories_\obs, \\ 
            \pmb{\pi}_h &\text{for}~h \in \histories_\unk,
        \end{cases}
    \end{equation*}
    where $\pmb{\pi}^\eps_h,\pmb{\pi}_h \in \probsimplex{\abs{\actionspace}}$. 
    The action-selection probability vectors for a history $h$ of the optimal policy and an arbitrary policy are denoted with $\pmb{\pi}^\star_h$ and $\pmb{\pi}_h$ respectively.
    We assume that $\norm{ \pmb{\pi}^\eps_h - \pmb{\pi}^\star_h }_1 \le \eps$.
\end{assumption}

\Cref{asmpt:learning-finite-steps} does not make any assumption about how the policy $\widehat{\pi}$ is learned---it only assumes that the learned policy is near optimal for the observed histories.
Using~\cref{lemm:mrp-value-loss}, we can bound the value loss incurred by using the learned policy $\widehat{\pi}$ under the same start-state distribution $\pmb{d}_0$.
Formally, the value loss is the difference between the discounted return $J$ (defined in~\cref{eq:discounted-return-pi}) of the optimal policy $\pi^\star$ and the learned policy $\widehat{\pi}$.

\begin{proposition}[
    store={prop:value-loss-learning-finite-steps}
]
    \label{prop:value-loss-learning-finite-steps}
    Given a start-state distribution $\pmb{d}_0$, consider the normalized SRs $\pmb{d}^{\pi^\star}_{\texttt{S}}$ and $\pmb{d}^{\widehat{\pi}}_{\texttt{S}}$ for policies $\pi^\star$ and $\widehat{\pi}$.
    Under~\cref{asmpt:learning-finite-steps}, for either normalized SR $\pmb{d} \in \{ \pmb{d}^{\pi^\star}_{\texttt{S}}, \pmb{d}^{\widehat{\pi}}_{\texttt{S}} \}$, 
    \begin{equation*}
        J(\pi^\star,\pmb{d}_0) - J(\widehat{\pi},\pmb{d}_0) \le \frac{1}{1 - \gamma} \left( \pmb{d}(\histories_\obs) \eps \left( \rmax + \frac{\gamma}{2} \vmax \right) + \pmb{d}(\histories_\unk) \vmax \right) ,
    \end{equation*}
     where $\pmb{d}(\histories_\obs) = \sum_{i \in \histories_\obs} \pmb{d}(i)$ and $\pmb{d}(\histories_\unk) = \sum_{i \in \histories_\unk} \pmb{d}(i)$.
\end{proposition}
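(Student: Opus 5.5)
The plan is to reduce the value loss to a comparison of two Markov reward processes over the history (natural number) MDP, as suggested by the reference to \cref{lemm:mrp-value-loss}. By \cref{prop:hist-nat-num-mdp} we may work on the natural number MDP $M^\natnum$ induced by $\pmb{d}_0$, where $J(\pi,\pmb{d}_0) = \pmb{d}_0^\top \pmb{v}^\pi$. Fixing a policy collapses the MDP to a Markov reward process with reward vector $\pmb{r}^\pi(i) = \sum_a \pi(a|i)\pmb{r}^a(i)$ and transition matrix $\pmb{P}^\pi(i,\cdot) = \sum_a \pi(a|i)\pmb{P}^a(i,\cdot)$. We compare the MRPs of $\pi^\star$ and $\widehat{\pi}$. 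Since $J(\pi^\star,\pmb{d}_0) - J(\widehat{\pi},\pmb{d}_0) \le \norm{\pmb{v}^{\pi^\star} - \pmb{v}^{\widehat{\pi}}}_{\pmb{d}_0}$, it suffices to bound this weighted norm.

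\textbf{Step 1.} We use the MRP version of the simulation argument, i.e. \cref{lemm:mrp-value-loss} or the derivation behind \cref{prop:simulation-lemma}. Writing $\pmb{v} - \widehat{\pmb{v}} = (I-\gamma \widehat{\pmb{P}})^{-1}\big[(\pmb{r}-\widehat{\pmb{r}}) + \gamma(\pmb{P}-\widehat{\pmb{P}})\pmb{v}\big]$ and weighting by $\pmb{d}_0$ gives
\begin{equation*}
\norm{\pmb{v}^{\pi^\star} - \pmb{v}^{\widehat{\pi}}}_{\pmb{d}_0} \le \frac{1}{1-\gamma}\sum_i \pmb{d}(i)\left( \abs{\pmb{r}^{\pi^\star}(i)-\pmb{r}^{\widehat{\pi}}(i)} + \gamma\, \Big|\textstyle\sum_j (\pmb{P}^{\pi^\star}-\pmb{P}^{\widehat{\pi}})(i,j)\,\pmb{v}(j)\Big| \right).
\end{equation*}
Here $\pmb{d}$ is the normalized SR of whichever policy's resolvent we expand around. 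Expanding around $\widehat{\pi}$ yields $\pmb{d} = \pmb{d}^{\widehat{\pi}}_{\texttt{S}}$. Swapping the roles, so that $\pmb{v}^{\widehat{\pi}}$ multiplies the transition difference, yields $\pmb{d}^{\pi^\star}_{\texttt{S}}$. This symmetry is why the statement holds for either SR. In both cases the value vector multiplying the transition difference has entries in $[0,\vmax]$.

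\textbf{Step 2: the per-state error.} The reward term is $\abs{\sum_a(\pi^\star(a|i)-\widehat{\pi}(a|i))\pmb{r}^a(i)} \le \rmax \norm{\pmb{\pi}^\star_i - \widehat{\pmb{\pi}}_i}_1$. For the transition term, $\sum_j (\pmb{P}^{\pi^\star}-\pmb{P}^{\widehat{\pi}})(i,j)$ sums to zero, so we may subtract the constant $\vmax/2$ from $\pmb{v}$. Since each $\pmb{P}^a(i,\cdot)$ is a distribution, the transition term is then at most $\frac{\vmax}{2}\norm{\pmb{\pi}^\star_i - \widehat{\pmb{\pi}}_i}_1$. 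The per-state error is therefore $\le (\rmax + \frac{\gamma}{2}\vmax)\norm{\pmb{\pi}^\star_i-\widehat{\pmb{\pi}}_i}_1$.

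\textbf{Step 3: split over histories.} For $i\in\histories_\obs$, \cref{asmpt:learning-finite-steps} gives $\norm{\cdot}_1\le\eps$, which contributes $\pmb{d}(\histories_\obs)\,\eps(\rmax+\frac{\gamma}{2}\vmax)$. For $i \in \histories_\unk$ we bound the per-state term crudely by $\vmax$, which produces the second summand. This works because the combined reward-plus-continuation difference at a single state is at most $\rmax + \gamma\vmax = \vmax$ when rewards lie in $[0,\rmax]$. To get this cleanly, I would bound the unknown states before applying the $\norm{\cdot}_1\le 2$ estimate, rather than applying it and getting $2\rmax+\gamma\vmax$.

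\textbf{Main obstacle.} The hardest part is the bookkeeping in Step 1 for infinite matrices. Convergence of $(I-\gamma \pmb{P})^{-1}=\sum_t\gamma^t\pmb{P}^t$ and the interchange of sums need justification, which I would supply from \cref{app:norms-and-vectors} and boundedness of all terms (dominated convergence). It also remains to confirm that $\pmb{d}_0^\top(I-\gamma\pmb{P}^\pi)^{-1} = \pmb{d}^\pi_{\texttt{S}}/(1-\gamma)$ for the infinite-state normalized SR, and that the unknown-history case truly achieves the constant $\vmax$ rather than $2\vmax$.
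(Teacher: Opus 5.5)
Your proposal is correct and follows the paper's route. The paper applies \cref{lemm:mrp-value-loss} with the SR of either policy, using the symmetry you describe; it proves that lemma by unrolling a one-step inequality inductively, and your resolvent expansion is an equivalent one-shot form. It then uses per-history bounds of $\eps\rmax$ and $\eps$ on observed histories and $\rmax$ and $2$ on unknown ones, so it gets the $\vmax$ constant exactly as you propose, since $\rmax + \frac{\gamma}{2}\vmax\cdot 2 = \rmax + \gamma\vmax = \vmax$.
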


\Cref{app:value-loss-learning-finite-steps-proof} lists a formal proof of~\cref{prop:value-loss-learning-finite-steps}.
This proposition demonstrates that the loss bound depends predominantly on the probability $\pmb{d}(\histories_\unk)$ for small $\eps$.
If the probability $\pmb{d}(\histories_\obs)$ is close to one and the policy $\pi^\star$ or $\widehat{\pi}$ predominantly visits known states, then the value loss is small.
However, if the agent has not learned a near-optimal policy for many unknown states that are visited often under $\pi^\star$ or $\widehat{\pi}$, then the value loss increases.

\Cref{prop:value-loss-learning-finite-steps} highlights the need to use an abstraction function $\phi: \histories \to \statespace^\phi$ to generalize a learned policy to the unobserved histories $\histories_\unk$.
More importantly, only if the abstract state space $\statespace^\phi$ is finite, can the agent learn a policy that is near-optimal at all possible histories $h \in \histories$ (because the history set $\histories$ is countably infinite, but the agent can only visit a finite number of abstract states within finitely many time steps).

\subsubsection{Extending the telescoping performance difference theorem}

So far, our analysis expressed reward and transition functions as a set of vectors and matrices that are indexed by the action space (\cref{prop:simulation-lemma}).
To obtain the generalization bound in~\cref{thm:ood-pomdp}, we express reward and transition functions on state-action spaces.

Consider an infinite-dimensional vector $\pmb{f} \in \realnum^\infty$ that is indexed by state-action pairs using the indexing bijection $k : \natnum \times \actionspace \to \natnum$ that maps each natural number state–action pair to a unique index in the natural numbers.
For infinite-dimensional vectors $\pmb{f} \in \realnum^\infty$, the infinite Bellman operator $\bellmanopinf$ is defined as\footnote{Here, the vector $\nu(\pmb{f}, \pi) \in \realnum^\infty$ and $\pmb{P}_\natnum(k(s,a)) \nu_{\pmb{f}} = \sum_{i=1}^\infty \pmb{P}_\natnum(k(s,a))(i) \nu_{\pmb{f}}(i)$.}
\begin{equation}
    \bellmanopinf^\pi_{M^\natnum} \pmb{f}(k(s,a)) \defeq \pmb{r}_\natnum(k(s,a)) + \gamma \pmb{P}_\natnum(k(s,a)) \nu(\pmb{f}, \pi) ~\text{with}~\nu(\pmb{f}, \pi)(s') = \pmb{f}(s',\pi(s')) . \label{eq:bellmanopinf}
\end{equation}
This operator computes the expected one-step reward plus the discounted value of the next state $s'$ under policy $\pi$, similar to the standard Bellman operator~\citep{jiang2020NotesTabularMethods,howard1960DynamicProgrammingMarkov}, but for infinite-dimensional vectors.

To derive a bound on an agent's test performance loss with this notation, we re-derive the telescoping performance difference theorem~\citep[Theorem 2]{xie2020q} for countably infinite state spaces using the Bellman operator $\bellmanopinf$.
The telescoping performance difference theorem bounds the performance gap between an optimal policy and a policy $\pi_{\pmb{q}}$, that selects actions greedily with respect to some Q-value vector $\pmb{q}$, using the Bellman error $\pmb{q}-\bellmanop_M^{\pi_{ \pmb{q} }} \pmb{q}$.
In~\cref{app:telescoping-perf-diff-proof,lemm:policy-perf-diff} we prove that for any vector $\pmb{f} \in \realnum^\infty$ and start state distribution $\pmb{d}_0 \in \probsimplex{\infty}$, the policy performance difference 
\begin{equation}
    J(\pi, \pmb{d}_0) - J(\pi_{\pmb{f}}, \pmb{d}_0)  
    \le \frac{1}{1 - \gamma}
    \Big(
          \norm{ \pmb{f} - \bellmanopinf^{\pi_{\pmb{f}}}_{M^\natnum} \pmb{f} }_{\pmb{d}^\pi}
        + \norm{ \pmb{f} - \bellmanopinf^{\pi_{\pmb{f}}}_{M^\natnum} \pmb{f} }_{\pmb{d}^{\pi_{\pmb{f}}}}
    \Big) , \label{eq:telescoping-performance-difference-inf}
\end{equation}
where $\pi_{ \pmb{f} }$ is the policy that selects actions greedily with respect to the values stored in $\pmb{f}$.\footnote{Specifically, for a state $i \in \natnum$, $\pi_{ \pmb{f} }(i) \in \arg \max_{a \in \actionspace} \pmb{f}(k(i,a))$.}

\subsection{Successor-weighted model reductions}
\label{sec:successor-weighted-model-reduction}

Building on these results, we define a variant of a softened model reduction, called the successor-weighted model reduction.
This abstraction type enables us in~\cref{sec:ood-pomdp} to show how approximation and estimation errors contribute to an agent's test performance loss and how reducing the abstract state space size can improve performance in OOD test tasks.

A model reduction constructs an abstract MDP such that the abstracted MDPs rewards and transitions are approximate the original task as closely as possible.
One approach to formally defining model reductions is via projection operators~\citep{lehnert2018TransferModelFeatures}.
\Cref{app:projection-operators-and-model-reduction} presents up- and down-projection operators $\Phi^\uparrow$ and $\Phi^\downarrow$.
Intuitively, the term $\Phi^\downarrow \pmb{r}_\phi$ projects the abstract reward vector down to the state-action space of the orignal task.
The difference $\pmb{r} - \Phi^\downarrow \pmb{r}_\phi$ then computes the difference between the original task's reward vector and the down projected reward vector.
These differences are small if the state abstraction permits precise approximation of the reward function.

For transition matrices, approximation errors are computed differently because they are calculated for matrices.
Here, the state to state partition transition probabilities should match the abstract state to abstract state transition probabilities, as defined by~\citet{givan2003EquivalenceNotionsModel}.
\Cref{app:projection-operators-and-model-reduction} presents a more detailed description of how these projection operators are used to define approximation errors for model reductions.
We now formally define a successor-weighted model reduction.

\begin{definition}[Successor-Weighted Model Reduction for Countably Infinite States]
    \label{def:successor-weighted-model-reduction}
    For some natural number MDP $M^\natnum$ with start-state distribution $\pmb{d}_\start$ and some policy $\pi$, a successor-weighted model reduction under policy $\pi$ is a state abstraction function such that for $\eps_r^\phi,\eps_p^\phi > 0$,
    \begin{eqnarray*}
        \underbrace{
            \norm{ \pmb{r} - {\Phi^\downarrow} \pmb{r}_\phi }_{ \pmb{d}^\pi }
        }_{\substack{\text{reward}\\\text{approximation error}}} \le \eps_r^\phi, 
        & 
        \underbrace{
            \norm{ {\Phi^\uparrow} \pmb{P} - {\Phi^\downarrow} \pmb{P}_\phi }_{ \pmb{d}^\pi }
        }_{\substack{\text{transition}\\\text{approximation error}}} \le \eps_p^\phi,
    \end{eqnarray*}
    where $\pmb{d}^\pi$ is the normalized state-action SR under policy $\pi$ and start state distribution $\pmb{d}_\start$.
\end{definition}

The use of the normalized SR $\pmb{d}^\pi$ causes a successor-weighted model reduction to depend on a policy $\pi$.
Normally, a model reduction would not depend on any specific policy, because it is designed to reduce the MDP itself. 
Here, we make a softer assumption and bound the differences using the visitation distribution (the normalized SR) under a specific policy. 
As a result, larger differences in rarely visited states may be deemphasized by $\pmb{d}^\pi$, because they contribute to the norm with a smaller weight.
Apart from this policy dependency, this approach makes the abstraction assumption less restrictive than existing definitions~\citep{abel2016OptimalBehaviorApproximate} that rely on maximum norms and require errors to be bounded uniformly.

\subsubsection{Distribution-weighted model reductions enable higher compression}
\label{sec:high-compression-example}

\begin{figure}
    \centering
    \subfigure[
        Infinite chain example.    
    ]{
        \label{fig:infity-chain-abstraction}
\begin{tikzpicture}[scale=0.5]
    \begin{scope}[shift={(-4,1)}]
        \node[circle, Cerulean,  draw, thick] (S) at (-4.6,0.35) {};

        \node[circle, Green,  draw, thick] (S2) at ($(S)+(-4.9,0)$) {};   
        
        \path (S) -- coordinate[midway] (M) (S2);
        
        \draw[thin,gray,] (S) -- node[midway,below, gray, font = \tiny]{ left} (M);
        \draw[->,thin,gray] (M) -- node[midway,below, font = \tiny]{ $p$} (S2);
        \fill[gray] (M) circle (1.8pt);
        \draw[->,thin,gray]
        (M) to[out=70,in=110] node[above, font = \tiny]{ $1-p$} (S);
        \draw[thin,->, gray] (S) to[out=20,in=-20,looseness=6] node[right,font = \tiny] { right} (S);

    \end{scope}
    \begin{scope}[shift={(-4,-2)}]
        
        \draw[step=0.7cm,Green,very thin, opacity = 0.3] (-9.8,0) grid (-9.1,0.7);
        \draw[step=0.7cm,Gray,very thin, opacity= 0.3] (-9.1,0) grid (-7.7,0.7);
        \node[Green, font = \tiny] at (-9.5,0.35) {+1};
        \draw[Gray,very thin, opacity = 0.3] (-7.7,0) rectangle (-2.8,0.7);
        \node[gray] at (-6.6,0.35) {$\cdots$};
        
        \draw[step=0.7cm,Gray,very thin, opacity = 0.3] (-5.8,0) grid (-2.50,0.7);
        \node[gray] at (-2.2,0.35) {$\cdots$};
    
        \foreach \i [count=\j from 1] in {-9.8,-9.1, ...,-8.4}
        {
        \node[text=gray, font = \tiny] at (\i+0.35,-0.35) {  \j};
         }
    
        \node[text=gray, font = \tiny] at (-8.75,0.35) {  $\emptyset$};
        \node[text=gray, font = \tiny] at (-8.05,0.35) {  $\emptyset$};
        \foreach \i  in {-7.0, -5.2}{
        
            \node[text=gray] at (\i+0.35,-0.35){$\cdots$};
        }
        \node[text=gray, font = \tiny]  at (-3.2,0.35) { $\emptyset$};
        \node[text=gray, font = \tiny]  at (-4.6,0.35) { $\emptyset$};
        \node[text=gray, font = \tiny]  at (-5.2,0.35) { $\emptyset$};
        \node[text=gray, font = \tiny]  at (-3.8,-0.35) { $N$};
        
        \node[text=black, font = \tiny]  at (-3.8,0.35) { $\leftarrow$};
    \end{scope}

    \draw[->,Green  ,>=latex, very thin] (-13.50,-1.15) to (-13.5,1);
    \draw[->, Cerulean, ,>=latex, very thin] (-11.80,-1.15) to (-8.7,1);
    \draw[->, Cerulean, ,>=latex, very thin] (-12.70,-1.15) to (-9.0,1);

    \draw[->, Cerulean, ,>=latex, very thin] (-9.20,-1.15) to (-8.6,1);
    \draw[->, Cerulean, ,>=latex, very thin] (-8.40,-1.15) to (-8.4,1);

\end{tikzpicture}
    }
    \hspace{0.4in}
    \subfigure[
        Transition error plot.
    ]{  
        \label{fig:infty-chain-transition-error}
        \includegraphics[scale=1]{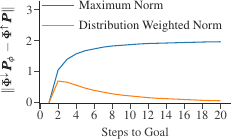}
    }
    \caption{
        Successor-weighted model reductions enable small abstract state spaces.
        \subref{fig:infity-chain-abstraction}: 
        The agent starts out at a state $N$ and then moves left towards the reward while observing first a left arrow ($\leftarrow$) and then empty observations ($\emptyset$) (until the goal state, the terminal state, is entered).
        This POMDP can be compressed into a two-state abstract MDP: 
        The terminal reward state (green) is modelled as one abstract state where every other state (and observation-action history) is mapped to the same abstract state.
        \subref{fig:infty-chain-transition-error}:
        Plot of the transition error for each $N$ for the example on the left.
        The distribution-weighted norm uses the normalized SR of the policy that always moves to the left.
        Because this norm is used, high differences in transitions for certain state-action pairs are de-emphasized.
        Consequently, even for a small $\eps_p^\phi$ setting, compression to a small two-state abstract MDP is possible.
    }
    \label{fig:chain-model-reduction}
\end{figure}

\citet{li2006towards} demonstrates that bisimulation relations induce partitions that are finer than those obtained from clustering Q-values, for example. 
Intuitively, bisimulation requires states to match both rewards and transition dynamics, while Q-value clustering abstractions only require equality of value functions.
\Cref{fig:chain-model-reduction} presents an example that demonstrates how a successor-weighted model reduction behaves differently and can construct a significantly smaller abstract MDP.
In this task, Q-values depend on the distance to the rewarding position, so states at different positions in the chain generally have different Q-values and cannot be merged by a Q-value clustering abstraction. 
A similar property holds for model reductions that are defined using the maximum norm where the clustering $\eps_p^\phi$ lies above one in almost all cases (\cref{fig:infty-chain-transition-error}).
In our setting, we consider a softened model reduction that measures reward and transition differences with respect to a state visitation distribution $\pmb{d}^\pi$. 
While this introduces a dependency of the abstraction to a specific policy and can induce non-determinism in the abstract MDP that is not present in the original task,~\cref{fig:chain-model-reduction} illustrates how this approach constructs much smaller abstract MDPs.

\paragraph{Policy dependency}
Because the choice of policy $\pi$ determines which state-action pairs are visited more often and have a higher weight in the normalized SR $\pmb{d}^\pi$, this policy determines how a successor-weighted model reduction is constructed.
For example, consider a policy that always moves to the right in the infinite chain example in~\cref{fig:infity-chain-abstraction}.
In this case, no weight would be assigned for position one or two and the agent would only observe a zero reward.
Consequently, a valid successor-weighted model reduction would be to collapse the entire task into a single state.
However, this collapse only happens due to the choice of the policy $\pi$.
It would not happen for a policy that always moves to the left, as illustrated in~\cref{fig:chain-model-reduction}.

\subsection{Out-of-distribution generalization in POMDPs}
\label{sec:ood-pomdp}

Building on these results, we now derive a bound for the performance gap between an optimal policy and an abstract policy, the policy that is optimal in the abstracted MDP.
To simplify notation, we define and define reward and transition approximation errors as
\begin{eqnarray}
    \pmb{\eps}_r^\phi \defeq \pmb{r} - {\Phi^\downarrow} \pmb{r}_\phi & \text{and} & \pmb{\eps}_p^\phi \defeq {\Phi^\uparrow} \pmb{P} - {\Phi^\downarrow} \pmb{P}_\phi. \label{eq:approximation-difference}
\end{eqnarray}
The following lemma presents a bound of an agent's performance loss that is incurred due to using a successor-weighted model reduction.

\begin{lemma}[note={Performance Loss under Successor-Weighted Model Reductions},store={lemm:performance-loss-succ-weighted-model-reduction}]
    \label{lemm:performance-loss-succ-weighted-model-reduction}
    For a state abstraction $\phi: \natnum \to \statespace_\phi$ with $\abs{ \statespace^\phi } < \infty$, consider a natural-number MDP $M^\natnum$ with optimal policy $\pi^\star$ and an abstract MDP $M^\phi$ with optimal abstract policy $\pi^\phi$.
    Then, for any start state distribution $\pmb{d}_0 \in \probsimplex{\infty}$ the value loss 
    \begin{align*}
        J( \pi^\star, \pmb{d}_0 ) - J( \pi^\phi, \pmb{d}_0 ) 
        \le \frac{1}{1 - \gamma} \sum_{ \pmb{d} } \norm{ \pmb{\eps}_r^\phi }_{ \pmb{d} } + \frac{\gamma}{2} \norm{ \pmb{\eps}_p^\phi }_{ \pmb{d} } \vmax,
    \end{align*}
    where the summation ranges over the two vectors $\pmb{d}^{\pi^\star}$ and $\pmb{d}^{\pi^\phi}$, the normalized SR under the policies $\pi^\star$ and $\pi^\phi$ when using the start-state distribution $\pmb{d}_0^\test$.
\end{lemma}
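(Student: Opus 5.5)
The plan is to apply the telescoping performance difference bound \eqref{eq:telescoping-performance-difference-inf} with $\pi = \pi^\star$ and a vector $\pmb{f}$ that lifts the abstract optimal Q-values to the natural-number state-action space. Concretely, let $\pmb{q}_\phi$ be the optimal Q-vector of the finite abstract MDP $M^\phi$ (finite, so it exists and satisfies $\norm{\pmb{q}_\phi}_\infty \le \vmax$). Set $\pmb{f} = \Phi^\downarrow \pmb{q}_\phi$, i.e.\ $\pmb{f}(k(i,a)) = \pmb{q}_\phi(\phi(i),a)$. The greedy policy $\pi_{\pmb{f}}$ then coincides with the lifted abstract optimal policy $\pi^\phi$ (ties broken consistently). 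Then \eqref{eq:telescoping-performance-difference-inf} gives
\begin{equation*}
J(\pi^\star,\pmb{d}_0) - J(\pi^\phi,\pmb{d}_0) \le \frac{1}{1-\gamma}\sum_{\pmb{d}\in\{\pmb{d}^{\pi^\star},\pmb{d}^{\pi^\phi}\}} \norm{\pmb{f} - \bellmanopinf^{\pi^\phi}_{M^\natnum}\pmb{f}}_{\pmb{d}},
\end{equation*}
so it remains to bound the Bellman residual of the lifted vector pointwise.

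The key step is to compare $\bellmanopinf^{\pi^\phi}_{M^\natnum}\pmb{f}$ with the lift of the abstract Bellman equation. Since $\pmb{q}_\phi$ is a fixed point of the abstract optimality operator, and greedy equals $\pi^\phi$, we have $\pmb{f} = \Phi^\downarrow\pmb{r}_\phi + \gamma \Phi^\downarrow \pmb{P}_\phi \pmb{v}_\phi$, where $\pmb{v}_\phi(x) = \max_a \pmb{q}_\phi(x,a)$. On the other hand, $\nu(\pmb{f},\pi^\phi)$ is constant on each partition block and equals $\pmb{v}_\phi\circ\phi$. Hence $\pmb{P}_\natnum \nu(\pmb{f},\pi^\phi) = (\Phi^\uparrow\pmb{P})\pmb{v}_\phi$, where $\Phi^\uparrow\pmb{P}$ aggregates next-state probabilities into blocks. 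Subtracting gives, for each row $k(i,a)$,
\begin{equation*}
\bigl(\pmb{f} - \bellmanopinf^{\pi^\phi}\pmb{f}\bigr)(k(i,a)) = -\pmb{\eps}_r^\phi(k(i,a)) - \gamma\,\pmb{\eps}_p^\phi(k(i,a))\,\pmb{v}_\phi .
\end{equation*}

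To finish, bound the absolute value by $\abs{\pmb{\eps}_r^\phi(k(i,a))} + \gamma\abs{\pmb{\eps}_p^\phi(k(i,a))\pmb{v}_\phi}$. Each row of $\pmb{\eps}_p^\phi$ is a difference of two probability vectors over the finite set $\statespace^\phi$, so its entries sum to zero. Shifting $\pmb{v}_\phi$ by the constant $\vmax/2$ (with rewards in $[0,\rmax]$, $\pmb{v}_\phi\in[0,\vmax]$) then gives $\abs{\pmb{\eps}_p^\phi(k(i,a))\pmb{v}_\phi} \le \frac{\vmax}{2}\norm{\pmb{\eps}_p^\phi(k(i,a))}_1$. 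Weighting by $\pmb{d}$ and summing, using the appendix definition of the matrix norm as the $\pmb{d}$-weighted sum of row $L_1$ norms, yields $\norm{\pmb{\eps}_r^\phi}_{\pmb{d}} + \frac{\gamma}{2}\vmax\norm{\pmb{\eps}_p^\phi}_{\pmb{d}}$. This is the claimed bound.

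I expect the main obstacle to be the identity $\pmb{P}_\natnum\nu(\pmb{f},\pi^\phi) = (\Phi^\uparrow\pmb{P})\pmb{v}_\phi$ together with the exact conventions of $\Phi^\uparrow,\Phi^\downarrow$ from the appendix. I must check that $\Phi^\uparrow$ indeed sums the infinite next-state row over each block, which uses absolute convergence so the rearrangement into finitely many blocks is legal. I must also check that the greedy lifted policy is exactly $\pi^\phi$. The range assumption on rewards needed for the $\vmax/2$ centering should also be stated; if rewards may be negative, I would instead use $\pmb{v}_\phi\in[-\vmax,\vmax]$ and lose the factor $1/2$, so I will follow the paper's convention used in \cref{prop:simulation-lemma}.
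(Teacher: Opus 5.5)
Your proposal is correct and is essentially the paper's proof. The paper likewise applies the telescoping performance-difference bound (\cref{lemm:policy-perf-diff}) with $\pmb{f} = \Phi^\downarrow \pmb{g}$, where $\pmb{g}$ is the abstract greedy fixed point (your $\pmb{q}_\phi$), and then bounds the lifted Bellman residual in \cref{lemm:gen-bound-natnum-mdp} using the abstract fixed-point equation, the aggregation identity of \cref{lemm:up-down-id}, and the same $\vmax/2$ centering of a zero-sum row. Your explicit identification $\nu(\pmb{f},\pi^\phi) = \pmb{v}_\phi \circ \phi$, your justification of the block rearrangement, and your stated range assumption on the abstract rewards make that step cleaner than the paper's, but the argument is the same.
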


\Cref{lemm:performance-loss-succ-weighted-model-reduction} shows that the performance loss of using a policy that is learned with a successor-weighted model reduction is bounded by the reward and transition conditions stated in~\cref{def:successor-weighted-model-reduction}.
A key element of this bound are the policies used to evaluate the transition and reward errors:
The abstraction function must be both an approximate successor-weighted model reduction for both the optimal policy $\pi^\star$ and the abstract policy $\pi^\phi$ that is optimal in the abstracted MDP.
To achieve near-optimal performance, the abstraction $\phi$ must be constructed such that the differences in rewards and transitions between the original and abstracted tasks are small when evaluated with respect to the normalized SR of these two policies. 

While~\cref{lemm:performance-loss-succ-weighted-model-reduction} is stated for countably infinite state spaces, it also applies to finite state MDPs.\footnote{One can always extend a finite state MDP with infinitely many states that are self-looping and are not connected to any other state.}
\Cref{lemm:performance-loss-succ-weighted-model-reduction} improves upon bounds presented in prior work~\citep{abel2016OptimalBehaviorApproximate} by avoiding a linear dependency on the number of states and removing a factor of $1 / (1 - \gamma)$.

Before stating our main result, we first assume that an RL agent is trained using a training start-state distribution $\pmb{d}_0^\train$ and then evaluated under a different test start-state distribution $\pmb{d}_0^\test$. 
These distributions induce the abstract MDPs $M^\phi_\train$ and $M^\phi_\test$ respectively.
We define the difference between the rewards and transitions as
\begin{eqnarray}
    \pmb{\eps}_r^\ood \defeq \pmb{r}_{M^\phi_\train} - \pmb{r}_{M^\phi_\test} & \text{and} & \pmb{\eps}_p^\ood \defeq \pmb{P}_{M^\phi_\train} - \pmb{P}_{M^\phi_\test}. \label{eq:geneneralization-difference}
\end{eqnarray}
The following theorem presents a bound on the test performance loss incurred when using an abstract policy optimal in $M^\phi_\train$ in the original POMDP under a test start-state distribution $\pmb{d}_\start^\test$ that is possibly disjoint from the training start-state distribution $\pmb{d}_\start^\train$.

\begin{theorem}[note={Out-of-Distribution Generalization Bound for POMDPs}, store={thm:ood-pomdp}]
    \label{thm:ood-pomdp}
    Consider a POMDP $P$ with training and test start distributions $\pmb{d}_\start^\train$ and $\pmb{d}_\start^\test$ and an abstraction function $\phi$.
    Let $\pi^\star$ be an optimal policy under the test start-state distribution and let $\pi^\phi$ be the abstract policy optimal in the abstract training MDP $M_\train^\phi$ for an abstraction function $\phi$.
    Then,
    \begin{align*}
        J(\pi^\star, \pmb{d}_\start^\test) - J(\pi^\phi, \pmb{d}_\start^\test) 
        \le \frac{1}{1 - \gamma} \sum_{ \pmb{d} }
            \underbrace{
                \vphantom{\sum_{ \pmb{d} }}
                \norm{ \pmb{\eps}_r^\phi }_{ \pmb{d} } + \frac{\gamma}{2} \norm{ \pmb{\eps}_p^\phi }_{ \pmb{d} } \vmax
            }_{\text{approximation error}}
            + 
            \underbrace{
                \vphantom{\sum_{ \pmb{d} }}
                \norm{ \pmb{\eps}_r^\ood }_{ {\Phi^\uparrow} \pmb{d} } + \frac{\gamma}{2} \norm{ \pmb{\eps}_p^\ood }_{ {\Phi^\uparrow} \pmb{d} } \vmax
            }_{\text{OOD estimation error}}
    \end{align*} 
    where the summation ranges over the two vectors $\pmb{d}^{\pi^\star}$ and $\pmb{d}^{\pi^\phi}$, the normalized SR under the policies $\pi^\star$ and $\pi^\phi$ when using the test start-state distribution $\pmb{d}_\start^\test$.
\end{theorem}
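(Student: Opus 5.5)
The plan is to reduce \cref{thm:ood-pomdp} to the telescoping performance difference bound \eqref{eq:telescoping-performance-difference-inf}, applied to a suitable Q-vector, and then split the resulting Bellman error into an approximation part and an OOD estimation part by the triangle inequality. First, by \cref{prop:hist-nat-num-mdp}, we may work with the test natural number MDP $M^\natnum_\test$ induced by $\pmb{d}_\start^\test$. Returns of all policies are preserved under bisimilarity, so a bound proved there transfers back to the POMDP $P$.

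Let $\pmb{q}_\phi$ be the optimal abstract Q-vector of $M^\phi_\train$ and set $\pmb{f} = \Phi^\downarrow \pmb{q}_\phi$, its lift to the state-action space of $M^\natnum_\test$. The greedy policy $\pi_{\pmb{f}}$ then coincides with $\pi^\phi$. Applying \eqref{eq:telescoping-performance-difference-inf} with $\pi = \pi^\star$ and start distribution $\pmb{d}_\start^\test$ gives
\begin{equation*}
J(\pi^\star,\pmb{d}_\start^\test) - J(\pi^\phi,\pmb{d}_\start^\test) \le \frac{1}{1-\gamma}\sum_{\pmb{d}} \norm{ \pmb{f} - \bellmanopinf^{\pi^\phi}_{M^\natnum_\test} \pmb{f} }_{\pmb{d}},
\end{equation*}
with $\pmb{d}$ ranging over $\pmb{d}^{\pi^\star}$ and $\pmb{d}^{\pi^\phi}$. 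It remains to bound the Bellman residual pointwise.

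Since $\pmb{q}_\phi$ satisfies the abstract Bellman equation in $M^\phi_\train$, we have $\pmb{f} = \Phi^\downarrow(\pmb{r}_{M^\phi_\train} + \gamma \pmb{P}_{M^\phi_\train} \pmb{v}_\phi)$, where $\pmb{v}_\phi$ is the abstract greedy value. We insert the intermediate term $\Phi^\downarrow(\pmb{r}_{M^\phi_\test} + \gamma \pmb{P}_{M^\phi_\test}\pmb{v}_\phi)$, which is the abstract test model applied to the same value vector. The residual then splits into two differences.
\begin{itemize}
\item The first is the abstract train-versus-test difference $\Phi^\downarrow(\pmb{\eps}_r^\ood + \gamma \pmb{\eps}_p^\ood \pmb{v}_\phi)$. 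Its $\pmb{d}$-weighted norm equals the corresponding $\Phi^\uparrow\pmb{d}$-weighted norm on the abstract space, because $\Phi^\downarrow$ copies abstract entries to their preimages, and summing $\pmb{d}$ over a preimage gives $\Phi^\uparrow \pmb{d}$.
\item The second is the test approximation error $\pmb{\eps}_r^\phi + \gamma(\ldots)$, obtained by comparing $\Phi^\downarrow \pmb{P}_{M^\phi_\test}\pmb{v}_\phi$ with $\pmb{P}\,\nu(\pmb{f},\pi^\phi)$. Here $\nu(\pmb{f},\pi^\phi)$ is constant on abstraction classes, equal to the lift of $\pmb{v}_\phi$. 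Hence $\pmb{P}\nu = (\Phi^\uparrow \pmb{P})\pmb{v}_\phi$, with $\Phi^\uparrow$ aggregating next-state columns into abstract states, and the difference becomes $\pmb{\eps}_p^\phi \pmb{v}_\phi$.
\end{itemize}

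Each transition term has the form $\gamma\,\pmb{e}\,\pmb{v}$, where every row of $\pmb{e}$ is a difference of two probability distributions and so sums to zero. Shifting $\pmb{v}$ by the midpoint of its range, which lies in $[0,\vmax]$, gives the bound $\frac{\gamma}{2}\vmax\norm{\pmb{e}}_1$ row by row. This is the same device used in \cref{prop:simulation-lemma} and \cref{lemm:performance-loss-succ-weighted-model-reduction}. Summing over the two distributions $\pmb{d}$ yields exactly the claimed approximation and OOD estimation terms.

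The main obstacle is bookkeeping, not analysis. We must verify that $\pmb{\eps}^\phi$ is computed with the test abstract MDP $M^\phi_\test$, which is the only reading under which the split is consistent. We must also check that the weighted norm of a lifted vector really equals the $\Phi^\uparrow\pmb{d}$-norm on the abstract space under the projection operators of \cref{app:projection-operators-and-model-reduction}. Finally, the infinite sums need absolute convergence; boundedness of rewards and values makes each sum dominated by $\vmax$ times a probability mass, so I would justify the interchanges by dominated convergence, as in \cref{sec:infty-state-extension}.
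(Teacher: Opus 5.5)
Your proposal is correct and follows essentially the paper's proof: you lift the optimal abstract training Q-vector via $\Phi^\downarrow$, apply the telescoping performance difference bound in $M^\natnum_\test$ with $\pi = \pi^\star$, and split the Bellman residual through the abstract test model using the projection identities and the $\vmax/2$ midpoint shift. The only cosmetic difference is the order of the steps. The paper first bounds the residual by $\norm{ \Phi^\downarrow \pmb{r}_{M^\phi_\train} - \pmb{r}_{M^\natnum_\test} }_{\pmb{d}}$ and $\norm{ \Phi^\downarrow \pmb{P}_{M^\phi_\train} - \Phi^\uparrow \pmb{P}_{M^\natnum_\test} }_{\pmb{d}}$ (\cref{lemm:gen-bound-natnum-mdp}) and then applies the triangle inequality to these norms, whereas you split the residual vector first and apply the midpoint shift to each piece.
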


\Cref{thm:ood-pomdp} decomposes the test performance loss into an approximation and an OOD estimation error.
In comparison, bias-complexity tradeoff in supervised learning~\citep[Ch. 5.2]{shalev-shwartz2014UnderstandingMachineLearning} and empirical risk minimization (ERM)~\citep{vapnik1991PrinciplesRiskMinimization} decomposes the test performance loss into an approximation and estimation error---similar to~\cref{thm:ood-pomdp}.
In ERM, the test performance loss arises because the model is trained on one dataset and subsequently evaluated on a different dataset, leading to an estimation error because the true underlying model cannot be estimated with perfect precision given a finite training dataset.
Both datasets are sampled from the same probabilistic process.

In this article, we consider reusing or generalizing a learned policy across probabilistically different tasks.
Even if an RL agent had access to infinite amounts of data and computation, it would, in the limit, only learn how to solve the training task.
Depending on the choice of start-state distributions and POMDP, high performance on the test task may not be achievable, because the test task is probabilistically too different from the training task.
For example, an agent that learns how to perform long division may not be able to reuse this knowledge to solve complex navigation tasks.
In this case, what the agent has learned during training is not relevant to the test task at hand and thus the resulting \emph{OOD estimation error} cannot be overcome---regardless of the used learning algorithm.
Ultimately, this article presents a model under which a form of OOD generalization can be achieved.
\Cref{app:ood-pomdp-thm} presents the formal proofs of~\cref{thm:ood-pomdp,lemm:performance-loss-succ-weighted-model-reduction}.

\subsection{Small abstract states spaces improve out-of-distribution generalization}
\label{sec:estimation-error-types}

The approximation error term in~\cref{thm:ood-pomdp} can always be reduced through the introduction of additional abstract states~\citep{abel2016OptimalBehaviorApproximate}, although this may in turn increase the OOD estimation error term. 
In total, our OOD generalization setting gives rise to four distinct sources of error that may cause an agent to select suboptimal actions during testing:
\begin{itemize}
    \item \textbf{Type 1 error:} The state abstraction aggregates histories into abstract states such that the agent is prevented from representing the optimal policy (approximation error).
    \item \textbf{Type 2 error:} Given a finite number of training steps, the agent cannot learn enough about all abstract states (resulting in a estimation error during training).
    \item \textbf{Type 3 error:} The test start-state distribution causes the agent to visit abstract states that were either not visited or infrequently visited during training (resulting in a high  estimation error during testing).
    \item \textbf{Type 4 error:} The training and test abstract MDPs are probabilistically different causing an abstract policy optimal in the abstract training MDP to select suboptimal actions during testing (resulting in a high  estimation error during testing).
\end{itemize}

The size of the abstract state space mediates a trade-off between Type 1 and Type 2 errors. 
Increasing the number of abstract states and constructing a finer state abstraction reduces approximation error (Type 1 error) at the cost of requiring the agent to learn in larger state spaces. 
When the number of transitions is fixed, learning accurate predictions for a greater number of abstract states becomes more challenging, as studied by sample complexity bounds presented in prior work (for example~\citet{jiang2020NotesTabularMethods}).

Depending on the specific training and test task instances, the size of the abstract state space may also mediate a trade-off between Type 1 and Type 3 errors. 
This tradeoff is characteristic to the presented OOD generalization setting.
It can occur even if Type 2 errors are zero and the agent learns a perfect approximation (or solution) of the training task.

To formalize this trade-off, we assume that an RL agent learns within $T$ time steps a $\eps$-optimal approximation of the training tasks abstract transition and reward functions for a subset of the abstract state space.

\begin{assumption}[Learning in finite steps with state abstractions]
    \label{asmpt:learning-finite-steps-abstraction}
    Given a fixed state abstraction $\phi: \histories \to \statespace^\phi$, an RL agent learns in a POMDP $P$ with training start-state distribution $\pmb{d}_0^\train$ for $T$ steps and estimates an abstract reward table $\widehat{\pmb{r}}_{M_\train^\phi}$ and transition table $\widehat{\pmb{P}}_{M_\train^\phi}$ for the abstract states $\statespace^\phi_\obs \subseteq \statespace^\phi$.
    For the other unknown states $\statespace^\phi_\unk = \statespace^\phi \setminus \statespace^\phi_\obs$ the agent makes arbitrary predictions.
    Thus, for every abstract state-action pair $(s_\phi,a)$,
    \begin{align*}
        \abs{ ( \widehat{\pmb{r}}_{M_\train^\phi} - \pmb{r}_{M_\train^\phi} ) (s_\phi,a) } 
            &= 
            \begin{cases}
                \eps &\text{if}~s_\phi \in \statespace^\phi_\obs \\ \rmax &\text{if}~s_\phi \in \statespace^\phi_\unk
            \end{cases} \\
        \norm{ ( \widehat{\pmb{P}}_{M_\train^\phi} - \pmb{P}_{M_\train^\phi} ) (s_\phi,a) }_1 
            &= 
            \begin{cases}
                \eps &\text{if}~s_\phi \in \statespace^\phi_\obs \\ 2 &\text{if}~s_\phi \in \statespace^\phi_\unk.
            \end{cases}
    \end{align*}
\end{assumption}

\Cref{asmpt:learning-finite-steps-abstraction} characterizes learning under a fixed training distribution where prediction quality depends on whether abstract states are observed during training. 
While assuming that the learned abstract reward and transition tables are $\eps$-optimal may seem restrictive, we note that existing sample complexity bounds (for example~\citet{jiang2020NotesTabularMethods}) provide worst-case bounds that hold with high probability.
These bounds determine the $\eps$ value stated in~\cref{asmpt:learning-finite-steps-abstraction}.

To model variability across OOD generalization experiments, we make an additional assumption on the abstract state visitation distributions that are encountered during in the training and test tasks. 

\begin{assumption}[Uniform abstract state distribution prior]
    \label{asmpt:uniform-prior-assumption}
    For any OOD generalization experiment, the abstract state visitation distributions $\pmb{d}^\train$ and $\pmb{d}^\test$ are assumed to be sampled from a uniform Dirichlet distribution.
\end{assumption}

Under these assumptions, we can derive the following corollary.

\begin{corollary}[note={Abstract state space size}, store={corl:state-space-dependency}]
    \label{corl:state-space-dependency}
    Under~\cref{asmpt:learning-finite-steps-abstraction,asmpt:uniform-prior-assumption},
    \begin{equation*}
        J(\pi^\star, \pmb{d}_\start^\test) - J(\pi^\phi, \pmb{d}_\start^\test) 
        \le O \left( \frac{1}{\abs{\statespace^\phi}^{\abs{\actionspace}}}
        + \left( 1 - \frac{\abs{ \statespace^\phi }-1}{T+\abs{ \statespace^\phi }-1} \right) \eps
        + \frac{\abs{ \statespace^\phi }-1}{T+\abs{ \statespace^\phi }-1} + B \right),
    \end{equation*}
    where the term $B$ upper bounds the Type 4 error.
\end{corollary}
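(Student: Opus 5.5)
The plan starts from \cref{thm:ood-pomdp} and bounds its two groups of terms separately: the approximation error by a term decreasing in $\abs{\statespace^\phi}$, and the OOD estimation error split into a learning part (Types 2 and 3) and a residual part (Type 4, absorbed into $B$).

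\textbf{Splitting the OOD estimation error.} Write
\[
\pmb{r}_{M^\phi_\train} - \pmb{r}_{M^\phi_\test} = (\pmb{r}_{M^\phi_\train} - \widehat{\pmb{r}}_{M^\phi_\train}) + (\widehat{\pmb{r}}_{M^\phi_\train} - \pmb{r}_{M^\phi_\test}),
\]
and do the same for the transitions. Here $\pi^\phi$ is read as the policy optimal in the learned abstract training model, which is what the agent actually has. By the triangle inequality for the distribution-weighted norm, the second summand of each pair contributes a quantity we define to be bounded by $B$; this is the Type 4 error.

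\textbf{Bounding the learning part.} By \cref{asmpt:learning-finite-steps-abstraction}, the first summand has pointwise magnitude $\eps$ on observed abstract states and $\rmax$ (respectively $2$ for transitions) on unobserved ones. Its $\Phi^\uparrow\pmb{d}$-weighted norm is therefore at most
\[
(1-q)\,\eps\left(1+\tfrac{\gamma}{2}\vmax\right) + q\left(\rmax+\gamma\vmax\right),
\]
where $q = (\Phi^\uparrow\pmb{d})(\statespace^\phi_\unk)$ is the test mass on abstract states unseen during training.

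\textbf{Estimating $q$ under the Dirichlet prior.} Under \cref{asmpt:uniform-prior-assumption}, I would interpret $T$ training steps as updating the uniform Dirichlet prior into a posterior with total concentration $T+\abs{\statespace^\phi}$. The expected mass that a fresh test visitation places on states not yet covered is then of order $(\abs{\statespace^\phi}-1)/(T+\abs{\statespace^\phi}-1)$, a Laplace/Chinese-restaurant-type predictive probability of a "new" state. Substituting this expectation for $q$ and $1-q$ yields the middle two terms of the bound, with constants depending only on $\gamma$ and $\rmax$ absorbed into $O(\cdot)$. This is the step I expect to be the main obstacle: the statement is an "in expectation over experiments" bound, and the exact coupling between the training visitation and the observed set is heuristic. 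I would first try to formalize it as the posterior predictive probability that a test-visited abstract state has zero training count.

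\textbf{Bounding the approximation error.} For the Type 1 term, I would argue that the number of distinct abstract deterministic policies is $\abs{\statespace^\phi}^{\abs{\actionspace}}$-type large. Under a uniform random assignment of the test-relevant structure, the weighted reward and transition approximation errors in \cref{def:successor-weighted-model-reduction} then shrink at rate $1/\abs{\statespace^\phi}^{\abs{\actionspace}}$. This bound is coarse, and is justified only up to constants by appeal to the refinement argument of \citet{abel2016OptimalBehaviorApproximate}: finer partitions monotonically reduce the error.

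Summing the two normalized SRs $\pmb{d}^{\pi^\star}$ and $\pmb{d}^{\pi^\phi}$ only doubles the constants, and the prefactor $1/(1-\gamma)$ is absorbed into $O(\cdot)$. Collecting the terms gives the stated bound.
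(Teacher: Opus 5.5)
There is a genuine gap in the step that produces $S=(\abs{\statespace^\phi}-1)/(T+\abs{\statespace^\phi}-1)$, and the mechanism you propose does not yield it. Write $n=\abs{\statespace^\phi}$. Treating the test visitation as a posterior-predictive draw after $T$ training samples makes the test distribution the training one. Under \cref{asmpt:uniform-prior-assumption}, as the paper uses it, $\pmb{d}^\train$ and $\pmb{d}^\test$ are instead independent flat-Dirichlet draws. Carried through, your mechanism puts predictive mass $U/(T+n)$ on the $U$ unseen states, so marginally $n(n-1)/((T+n)(T+n-1))$, which decays like $n^2/T^2$ rather than $n/T$. The Chinese-restaurant value $\alpha/(\alpha+T)$ comes from a Dirichlet process; its agreement with $S$ at $\alpha=n-1$ is only a coincidence of form.

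The paper's argument has two separate ingredients. First, for $T$ i.i.d.\ draws from $\pmb{p}=\pmb{d}^\train$, $\mathbb{E}\abs{\statespace^\phi_\unk}=\sum_i\mathbb{E}(1-\pmb{p}_i)^T$. Each marginal is $\pmb{p}_i\sim\mathrm{Beta}(1,n-1)$, so $\mathbb{E}(1-\pmb{p}_i)^T=(n-1)/(T+n-1)$ and $\mathbb{E}\abs{\statespace^\phi_\unk}=n(n-1)/(T+n-1)$ (\cref{lemm:unknown-indices,lemm:unknown-indices-sampled}). Second, because $\pmb{d}^\test$ is independent of the training sample, $\mathbb{E}[\pmb{d}^\test(\statespace^\phi_\unk)\mid\statespace^\phi_\unk]=\abs{\statespace^\phi_\unk}/n$ (\cref{lemm:subset-prob}). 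Hence $\mathbb{E}[q]=S$ exactly (\cref{lemm:uniform-prior-lemma}). This independence is what keeps the unseen-state penalty at order $n/T$, the OOD effect the corollary is meant to show.

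Your split of the OOD term is also reversed. If $\pi^\phi$ is optimal in the learned model, \cref{thm:ood-pomdp} is applied with that model, so the term to control is $\widehat{\pmb{r}}_{M^\phi_\train}-\pmb{r}_{M^\phi_\test}$. The paper writes it as $(\widehat{\pmb{r}}_{M^\phi_\train}-\pmb{r}_{M^\phi_\train})+\pmb{\eps}_r^\ood$, and likewise for transitions. Then $B$ bounds exactly $\norm{\pmb{\eps}_r^\ood}_{\pmb{d}}+\frac{\gamma}{2}\vmax\norm{\pmb{\eps}_p^\ood}_{\pmb{d}}$. You instead split $\pmb{\eps}_r^\ood$ and assign $\widehat{\pmb{r}}_{M^\phi_\train}-\pmb{r}_{M^\phi_\test}$ to $B$. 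That quantity contains the full estimation error on unvisited abstract states (up to $\rmax$ and $2$). So your $B$ is not a Type 4 bound, and it absorbs the Type 3 contribution the middle terms are supposed to isolate.

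Finally, the approximation term cannot come from counting policies. There are $\abs{\actionspace}^{\abs{\statespace^\phi}}$ deterministic abstract policies, not $\abs{\statespace^\phi}^{\abs{\actionspace}}$. More importantly, a policy count says nothing about the weighted reward and transition errors of \cref{def:successor-weighted-model-reduction}. The paper instead uses \cref{lemm:approx-error-decrease-abstract-states}: restrict to a high-mass index set via \cref{eq:maximum-norm-connection}, then bin per-state reward and transition vectors into grid cells, one abstract state per cell. Be aware that this binning gives $\abs{\statespace^\phi}^{-1/\abs{\actionspace}}$ for the reward term. So even the paper's lemma does not deliver the exponent in the stated bound.
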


\begin{figure}
    \centering
    \subfigure[
        $T=10$, $\eps=0.01$, $B=0$, $\abs{ \actionspace }=2$
    ]{
        \includegraphics[scale=1]{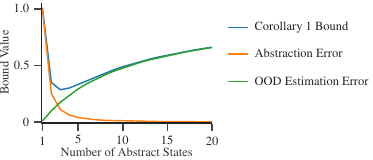}
    }\hspace{0.5in}
    \subfigure[
        $\eps=0.01$, $B=0$, $\abs{ \actionspace }=2$
    ]{
        \includegraphics[scale=1]{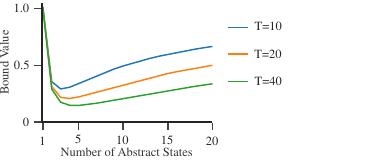}
    }
    \caption{
        The abstract state space size trades off test performance.
        Both plots visualize the bound presented in~\cref{corl:state-space-dependency}.
    }
    \label{fig:corollary-bounds}
\end{figure}

\Cref{corl:state-space-dependency} formalizes the trade-off between approximation and OOD estimation error as a function of abstract state space size. 
As the abstract state space size $|\statespace^\phi|$ increases, approximation errors decrease because there is less aggregation over histories, resulting in a more accurate approximation of the POMDP's rewards and transitions. 
When the number of training steps $T$ is fixed, larger abstract state spaces increase the likelihood of encountering unobserved or insufficiently estimated abstract states under an abstract state train–test visitation shift, which in turn raises Type 3 errors. 
Collectively, these factors result in a U-shaped relationship between OOD estimation error and approximation error (\cref{fig:corollary-bounds}).

\Cref{app:ood-estimation-error-bound} presents the proof for~\cref{corl:state-space-dependency}.
In the following section, we consider two examples to illustrate this trade-off between Type 1 errors (approximation errors) and Type 3 errors (errors due to abstract state visitation distribution shift).

\subsubsection{Warm--cold generalization experiment}
\label{sec:warm-cold-experiment}

To illustrate how the number of abstract states mediates the trade-off between Type 1 and Type 3 errors, we revisit the warm--cold task presented in~\cref{fig:motivation}. 
In this task, retaining the last few observation-action steps is sufficient for navigation to the rewarding goal location. 
Specifically, we consider a state abstraction mapping an observation history to a $k$-step suffix of the same sequence and
\begin{equation*}
    \phi: (o_0,a_1,o_1,...,a_T,o_T) \mapsto (a_{T-k},o_{T-k},...,a_T,o_T).
\end{equation*}
Here, the suffix $(a_{T-k},o_{T-k},...,a_T,o_T)$ forms an abstract state itself.
As the suffix length $k$ increases, the number of abstract states increases as well.

\begin{figure}
    \centering
    
    \subfigure[
        Combined number of mistakes.
    ]{
        \label{fig:U-curve-warm-cold-goal-dist-50}
        \includegraphics[scale=1]{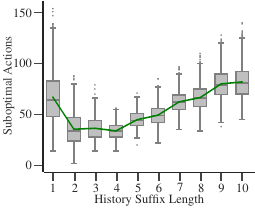}
    }
    \hfill
    \subfigure[
        Number of mistakes for steps where history suffix was found.
    ]{
        \label{fig:incorrect-prediction-warm-cold-goal-dist-50}
        \includegraphics[scale=1]{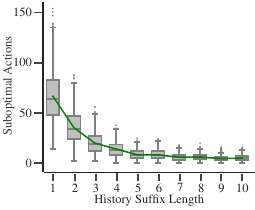}
    }
    \hfill
    \subfigure[
        Number of mistakes for steps where no history suffix was found.
    ]{
        \label{fig:table-misses-warm-cold-goal-dist-50}
        \includegraphics[scale=1]{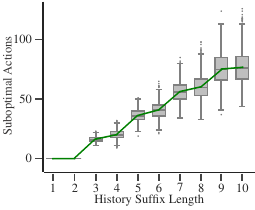}
    }

    \subfigure[
        Number of mistakes for different distances.
    ]{
        \label{fig:U-curve-warm-cold}
        \includegraphics[scale=1]{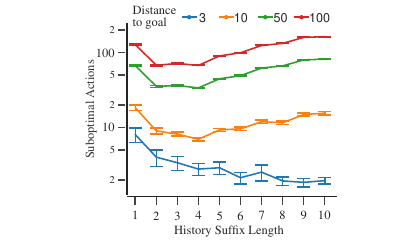}
    }
    \hfill
    \subfigure[
        Normalized number of mistakes for distances.
    ]{
        \label{fig:U-curve-warm-cold-scaled}
        \includegraphics[scale=1]{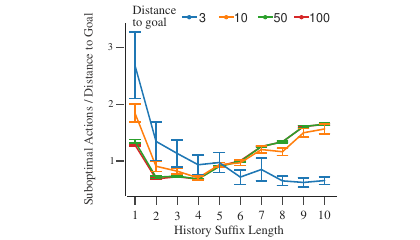}
    }
    
    \caption{%
        Reducing the number of abstract states improves OOD generalization.
        \subref{fig:U-curve-warm-cold-goal-dist-50}: 
        The number of mistakes (number of suboptimally selected actions) when starting the computed policy 50 steps away from the goal.
        For each history suffix length, each start state was sampled five times and the plot shows a standard box plot.
        The green curve shows the average number of suboptimally selected actions (average number of mistakes).
        This plot shows the sum of the values plotted in~\subref{fig:incorrect-prediction-warm-cold-goal-dist-50} and~\subref{fig:table-misses-warm-cold-goal-dist-50}. 
        \subref{fig:incorrect-prediction-warm-cold-goal-dist-50}:
        Number of mistakes when a history suffix was found in the policy dictionary.
        \subref{fig:table-misses-warm-cold-goal-dist-50}:
        Number of mistakes when a history suffix was not found in the policy dictionary.
        If no history suffix is found, then actions are selected uniformly at random. 
        \subref{fig:U-curve-warm-cold}:
        Number of suboptimal actions averaged across all repeats and all test start states that lie at a fixed distance to the goal.
        Error bars plot a 95\% confidence Standard Error of Mean (SEM).
        \subref{fig:U-curve-warm-cold-scaled}: 
        This plot shows the same data as \subref{fig:U-curve-warm-cold}, but the y-axis is scaled by the distance to the goal.
    }
    \label{fig:warm-cold-experiments}
\end{figure}

\Cref{fig:warm-cold-experiments} illustrates how test performance varies as a function of the history suffix length $k$ of this $k$-step suffix abstraction. 
For the training start-state distribution (\cref{fig:warm-cold-start-states}), an abstract policy is computed that approximates the optimal policy by enumerating all possible $k$-length suffix histories and all lattice positions that are reachable within a fixed horizon given a training start state. 
Using the underlying lattice position, we determine how often a specific action is optimal for a given $k$-step history suffix, and these optimal action counts are stored together with the corresponding suffix in a policy dictionary. 
When testing this policy dictionary in a test task, these counts are then used to compute an action-selection probability distribution and this action-selection probability distribution is then sampled to select the next action.
By enumerating all possible history suffixes that can occur from any training start state within a fixed horizon, this approach effectively computes a marginal over all (reachable) POMDP states and thus ensures that the resulting policy dictionary does not suffer from a Type 2 error.
Once this policy dictionary is computed, it is tested using different test start-state distributions.
\Cref{fig:warm-cold-start-states} illustrates the test start-state distributions for this experiment.
We measure test performance by counting the number of suboptimally selected actions.

\Cref{fig:U-curve-warm-cold-goal-dist-50} plots the number of suboptimal actions as a function of the history suffix length. 
Using a one- or two-step suffix leads to significantly poorer performance than using a four-step suffix. 
This is expected because longer histories encode more information about the agent's location and better inform which direction the goal lies in. 

Increasing the history suffix length beyond four steps results in diminished test performance. 
As the history length grows, the resulting abstract state more precisely captures the underlying lattice position (the underlying Markovian state of the POMDP) instead of only encoding information about a broad movement direction. 
Consequently, when the agent is tested on lattice positions not visited during training, the $k$-step history suffix obtained during testing may not be present in the previously computed policy dictionary. 
\Cref{fig:incorrect-prediction-warm-cold-goal-dist-50,fig:table-misses-warm-cold-goal-dist-50} show that these dictionary lookup failures account for the increased number of suboptimally selected actions. 
For higher history lengths, the frequency of suboptimal actions due to dictionary lookup failures is elevated (\cref{fig:table-misses-warm-cold-goal-dist-50}), whereas the number of suboptimal actions resulting from incorrect action predictions for suffixes found in the policy dictionary remains low (\cref{fig:incorrect-prediction-warm-cold-goal-dist-50}). 
These results indicate that not all history suffixes (abstract states) necessary for solving the test task are encountered under training start-state distribution, leading to Type 3 errors.
This example illustrates how the number of abstract states mediates a trade-off between Type 1 and Type 3 errors.

\Cref{fig:U-curve-warm-cold,fig:U-curve-warm-cold-scaled} illustrate how the test performance loss varies as the chosen test start-state distribution moves the agent's start location farther away from the goal state. 
In~\cref{fig:U-curve-warm-cold}, the blue curve shows that if the agent is tested in-distribution (the test start-state distribution is the same as the training start-state distribution), then test performance becomes almost optimal and the number of suboptimal actions drops down to two.
Note that in the warm--cold task the agent must first select at least two actions to position itself in the lattice to obtain information about which direction it should move into.
As the test start locations move further away from the goal, the number of suboptimal actions increases in absolute numbers, but not in relation to task scale (distance to goal), as illustrated in~\cref{fig:U-curve-warm-cold-scaled}.

This generalization experiment demonstrates that the size of the abstract state space plays a crucial role in achieving high OOD test performance. 
Too few abstract states may lead to a high approximation error (Type 1 error), whereas too many may shift the distribution of abstract state visits (Type 3 error). 
We note that these results are not influenced by a Type 2 error, because the policy dictionary is not obtained through trial-and-error learning.
Instead, it is obtained by enumerating all possible history suffixes.
\Cref{app:experiments} presents the details for this experiment.

\subsubsection{The sign chain example}
\label{sec:sign-chain}

\begin{figure}
    \centering
    \subfigure[Sign-chain task]
    {
\begin{tikzpicture}[scale=0.6]

    \begin{scope}[shift={(-4,0)}]
        \draw[step=0.7cm,gray,very thin, opacity= 0.3] (-0.3,0) grid (2.8,0.7);
        \draw[step=0.7cm,gray,very thin, opacity= 0.3] (4.18,0) grid (7.3,0.7);
        
        \node at (-0.8,0.35) {$\cdots$};
        \node at (7.8,0.35) {$\cdots$};
        
        \node[font = \tiny] at (1.1,0.35) {$\rightarrow$};
        
        \node[font = \tiny] at (1.15, -0.7) {A};
        \draw[->, thick] (1.15,-0.5) -- (1.15,0);
        
        \node[font = \tiny] at (6.0,0.35) {$\leftarrow$};
        
        \node[font = \tiny] at (6.0,-0.8) {B};
        \draw[->, thick] (6.0,-0.5) -- (6.0,0);
        
        \foreach \i in {0, 1.5, 2.1, 4.25, 5.0, 6.4}{
            \node[text=gray, font = \tiny] at (\i+0.25,0.35) {$\emptyset$};
        }
    \end{scope}
    \begin{scope}[shift={(-4,-2)}]
        
        \draw[step=0.7cm,gray,very thin, opacity= 0.3] (1.8,0) grid (2.8,0.7);
        \draw[step=0.7cm,gray,very thin, opacity= 0.3] (4.18,0) grid (5.18,0.7);

        \draw[step=0.7cm,gray,very thin, opacity= 0.3] (-1.7,0) grid (1.0,0.7);
        \draw[step=0.7cm,gray,very thin, opacity= 0.3] (5.98,0) grid (8.68,0.7);
        
        \node at (-1.7-0.5,0.35) {$\cdots$};
        \node at (8.68+0.5,0.35) {$\cdots$};

        \node at (1.45,0.35) {$\cdots$};
        \node at (5.18+0.45,0.35) {$\cdots$};

        \node[font = \tiny] at (-0.1-0.25,0.3) {$\rightarrow$};
        
        \node [font = \tiny] at (-0.1-0.25,-0.8) {C};
        \draw[->, thick] (-0.1-0.25,-0.5) -- (-0.1-0.25,0);
        
        \node[font = \tiny] at (7.8-0.45,0.3) {$\leftarrow$};
        
        \node[font = \tiny] at (7.8-0.45,-0.8) {D};
        \draw[->, thick] (7.8-0.45,-0.5) -- (7.8-0.45,0);
    
        \foreach \i in {-0.6,0.8,3,5,7.1,8.5} {
            \node[text=gray, font = \tiny] at (\i-0.45,0.35) {$\emptyset$};
        }

    \end{scope}
    
    \begin{scope}[shift={(-3,-1)}]
        \draw[step=0.7cm,Green,very thin] (2.09,0) grid (2.8,0.7);
        \node[text=Green, font = \tiny] at (2.4,0.3) { +1};
    \end{scope}

    \draw[->, black, bend left=20,>=latex] (0.1,-1.6) to (-0.2,-1.0);
    \draw[->, black, bend right=20,>=latex] (-1.2,-1.6) to (-0.8,-1.0);
    \draw[->, black, bend left=20,>=latex] (-1.2, 0.3) to (-0.8,-0.3);
    \draw[->, black, bend right=20,>=latex] (0.2, 0.3) to (-0.2,-0.3);
   
\end{tikzpicture}
        \label{fig:sign-chain-task}
    }\hfill
    \subfigure[Performance when starting in state C or D]{
        \label{fig:sign-chain-example-performance}
        \hspace{2mm}
        \includegraphics[scale=1]{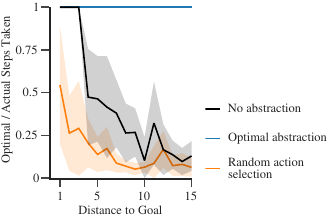}
        \hspace{2mm}
    }
    \caption{
        Sign-chain task example. 
        \subref{fig:sign-chain-task}: 
        During training, the agent either starts at location A or B and immediately observes a sign: either move left ($\leftarrow$) or move right ($\rightarrow$).
        At all other locations the agent only observes the same $\emptyset$ observation.
        All transitions are deterministic, and the rewarding state is terminal.
        During testing, the agent either starts at location C or D.
        These states are placed at a higher and variable distance to the rewarding goal state.
        \subref{fig:sign-chain-example-performance}: 
        The y-axis plots the optimal number of steps divided by the actual steps taken when using an optimal abstraction, using no abstraction, and using random action selection.
        The shaded area plot 95\% confidence SEM across five repeats.
    }
    \label{fig:sign-chain-example}
\end{figure}

\Cref{fig:sign-chain-example} presents the sign chain task illustrating a scenario where Type 1 and Type 4 errors are zero for only using three abstract states. 
The training start-state distribution places the agent either at state A or state B.
Initially, the agent observes a sign indicating whether to move left or right to reach the goal. 
An optimal abstraction would construct three abstract states: one for when the move-left sign is observed, another for when the move-right sign is observed, and a third for the rewarding goal state at the centre. 
Under a training start-state distribution that samples states A and B equally often, it is possible to learn an optimal policy. 
This policy generalizes to starting in states C and D, regardless of the agent's distance from the goal, because the state abstraction processes the initial sign observation to place the agent in the correct abstract state. 

\Cref{fig:sign-chain-example-performance} compares the test performance when using the optimal abstraction and when using no abstraction.
Here, a policy dictionary maps either the initially observed sign or the entire observation-action history since start to an action-selection distribution.
Under the optimal abstraction, OOD generalization is achievable in this example.
However, when predicting optimal actions conditioned on the full observation history test performance degrades quickly to random action selection because five or ten step histories were not observed during training.

This example highlights how a state abstraction enables OOD generalization: 
The underlying POMDP states that are reachable from the training and test start-state distributions are completely disjoint.
Using the correct state abstraction enables the RL agent to solve the test task optimally.
Moreover, the choice of start-state distribution also determines if good test performance is possible. 
For example, if the agent were to only start in states A and C during training and would then be tested in states B and D, then experiment would fail because an RL agent would no have learned how to process a move left (``$\leftarrow$'') sign.

\section{Discussion}
\label{sec:discussion}

The Big World Hypothesis~\citep{javed2024thebigworld} states that in many decision-making problems, the environment is much larger and more complex than what the agent can observe. 
Hence it must rely on approximate solutions and simpler internal representations that are sufficient for achieving its goals. 
In our setting, this idea arises when we use a small abstract state space to solve a POMDPs with an arbirary, and infinite, state space. 
Here, the entire observation and action history of the agent forms a Markovian state.
Since there are infinitely many histories, an RL agent must compress this information. 
This motivates our focus on small finite abstract state spaces that support near-optimal decision-making in environments that are too large to model exactly.
By considering small finite abstract state spaces, we demonstrate how OOD generalization is feasible because an intelligent system tracks an approximate abstract state across different task scales.

In this context POMDPs provide a natural framework for studying OOD generalization. Instead of learning a policy, value function, or world model on some fixed state space, the agent must construct and track a mental abstract state. 
Instead of simply increasing model size or over-parameterizing an architecture, our results suggest that resource-constrained architectures may actually be necessary.
Our examples demonstrate that constraining a system to work with fewer abstract states necessary to generalize to more complex tasks. 

This concept is also related to how computation is modelled in a Turing machine, a system that processes a sequence of inputs from an infinite-length memory tape using a finite number of internal states to decide on the next output. 
Hence, an abstract state can be viewed as a form of working memory that is used for decision-making. 
From this perspective, generalization in partially observable environments depends on how well the internal state captures the underlying principles and rules of a task, enabling prediction of future decisions.

\citet{kirtland2026memory} show that memory functions can be interpreted as abstractions over trajectory histories, enabling existing abstraction theory~\citep{li2006towards,abel2016OptimalBehaviorApproximate} to be applied in POMDP settings.  
Their framework classifies memory abstractions as model-preserving, $Q^*$ preserving, or $\pi^*$ preserving and establishes relationships between different classes of memory functions.
Our approach differs from~\citeauthor{kirtland2026memory} in that we treat the start-state distribution explicitly. 
Given a POMDP and a start distribution, we construct a corresponding history MDP (called the trajectory MDP in their work) and analyze how abstraction behaves when this distribution shifts between training and evaluation.
Moreover, our analysis starts out with a generic assumption that a state abstraction is used and then identifies a new variant of a softened model reduction---the successor weighted model reduction.
While our framework focuses on finite observation-action POMDPs, we do not make any additional assumptions about fixed history or episode lengths~\citep{efroni2022ProvableReinforcementLearning,dann2021ValueFunctionGapsImproved} or the complexity or structure of the task~\citep{jin2021BellmanEluderDimension,du2019ProvablyEfficientRL}.
How our results relate to other memory construction methods for POMDPs~\citep{eberhard2025memorytraces,ni2022recurrentpomdps} or how to obtain tighter value loss bounds is left to future work.

Prior works on partially observable systems with large or infinite state space~\citep{ross2007bayespomdp,doshi2009infinitepomdp,grinberg2012average} mainly focus on learning or policy evaluation in POMDPs. 
In contrast, we present a model of how OOD generalization can be achieved in RL and assume that training and test start-state distributions are disjoint. 

In current practice, generative AI systems---in particular Large Language Models (LLMs)---are scaled by increasing model capacity and dataset size to reach a specific performance level~\citep{kaplan2020ScalingLawsNeural,belkin2019DoubleDescent}.
The big world hypothesis discussed earlier hypothesizes that increasing model capacity alone may not be sufficient to capture all relevant aspects of the environment.
As in real-world settings, intelligent agents frequently encounter environments that differ in scale or complexity from those observed during training. 
Therefore, the ability to generalize beyond the training distribution is fundamental to building robust and scalable representations that capture the principles or rules that underlie a task. 
Our results, therefore, suggest that robust generalization can be achieved by using fewer abstract states instead of increasing model capacity.

\subsection{Limitations and future directions}

In this article, we present a model of how an RL agent could achieve OOD generalization and conclude that constraining abstract state spaces to be small and finite improves test performance. 
The design of algorithms that learn and maintain these abstractions in a POMDP is left for future research.

Our model assumes that the agent employs a single abstraction function for both learning and generalization to a test task. 
Developing methods for composing multiple abstraction functions or abstract reward and transition models, analogous to the Options framework~\citep{sutton1999MDPsSemiMDPsFrameworka}, and for constructing a library of such abstractions would enable an RL agent to adapt more flexibly to novel tasks. 
Furthermore, this capability would reduce Type 4 errors and is essential for agents operating in a continual learning setting that requires ongoing adaptation. 
We leave the exploration of these topics to future work.

Moreover, one may also obtain more flexible adaptation by only reusing a portion of the learned model. 
For example, reuse of the abstraction function~\citep{lehnert2020RewardpredictiveRepresentationsGeneralize} or composing different transition and reward functions~\citep{liu2020hiearchicalclustering,franklinCompositionalClusteringTask2018} would enable more flexible generalization and allow an agent to further reduce Type 4 errors.

Lastly,~\cref{thm:ood-pomdp} considers model reductions. 
While prior work shows that model reductions are suitable for generalizing across tasks with different transitions and rewards, other state abstraction types might afford the OOD generalization similar to the ones studied in this article. 
While we are not aware whether OOD generalization with other abstraction types is possible, we leave such an investigation to future work.

\section{Conclusion}
\label{sec:conclusion}

Traditionally, RL algorithms such as Q-learning~\citep{watkins1992Qlearning}, DQN~\citep{mnih2015human}, or AlphaGo~\citep{silver2016MasteringGameGo} are designed to predict certain quantities, for example, the discounted return, as accurately as possible. 
This article presents an alternative perspective: 
By constraining an agent to operate over a very small number of abstract states, one may gain the ability to generalize to more complex, larger-scale tasks while giving up the ability to precisely predict certain quantities or outcomes. 
We hope that our results motivate further research on RL algorithms that learn architectures that generalize and scale to more complex tasks.

\section*{Acknowledgement}
We would like to thank Jonas Lehnert for helpful feedback and comments on earlier drafts of this paper.
Nasehatul Mustakim was funded in part by the Natural Sciences and Engineering Research Council of Canada.

\appendix
\crefalias{section}{appendix}
\crefalias{subsection}{appendix}
\crefalias{subsubsection}{appendix}

\section{Definitions, lemmas, and proofs}
\label{app:theory}

\subsection{MDPs and POMDPs}
\label{app:mdps-pomdps}

\begin{definition}[MDP]
    \label{def:mdp}
    An MDP is a quintuple $M = \langle \statespace, \actionspace, p, r, \gamma \rangle$ where
    \begin{enumerate}[nosep,topsep=0pt,leftmargin=2em]
        \item a state space $\statespace$,
        \item a finite action space $\actionspace$,
        \item a transition function specifying transition probabilities with $p(s' | s,a)$,
        \item an expected reward function $r: \statespace \times \actionspace \to \realnum$, and
        \item a discount factor $\gamma \in [0, 1)$.
    \end{enumerate}
\end{definition}

\begin{definition}[POMDP]
    \label{def:pomdp}
    A finite observation and action POMDP is a septuple $P = \langle \statespace, \actionspace, \obsspace, p, r, \eta, \gamma \rangle$ where
    \begin{enumerate}[nosep,topsep=0pt,leftmargin=2em]
        \item a state space $\statespace$ (infinite),
        \item a finite action space $\actionspace$,
        \item a finite observation space $\obsspace$,
        \item a transition function specifying transition probabilities with $p(s' | s,a)$,
        \item an expected reward function $r: \statespace \times \actionspace \to \realnum$, 
        \item an observation function specifying the probability of an observation with $\eta(o|s)$, and
        \item a discount factor $\gamma \in [0, 1)$.
    \end{enumerate}
\end{definition}

\begin{definition}[History MDP]
    \label{def:history-mdp}
    Given a POMDP $P = \langle \statespace, \actionspace, \obsspace, p, r, \eta, \gamma \rangle$ and a start state distribution $d_\start$ we can construct a history MDP $M^\histories = \langle \histories, \actionspace, p_\histories, r_\histories, \gamma \rangle$ such that
    \begin{enumerate}[nosep,topsep=0pt,leftmargin=2em]
        \item the state space is the set of histories $\histories = \{ (o_0,a_1,o_1,...,a_t,o_t) \}$.
        \item the finite action space is the set $\actionspace$,
        \item the transition function is defined as $$p_\histories(\underbrace{h_t a o_{t+1}}_{=h_{t+1}} | h_t,a) = \sum_{s_t,s_{t+1} \in \statespace} \prob{s_t|h_t} p(s_{t+1}|s_t,a) \eta(o_{t+1}|s_{t+1})$$
        \item the expected reward function is defined as $r_\histories(h_t,a) =\sum_{s_t \in \statespace} \prob{s_t|h_t} r(s_t,a),$
        \item and the discount factor $\gamma \in [0, 1)$.
    \end{enumerate}
\end{definition}

\begin{definition}[Natural Number Markov Reward Process (MRP)]
    \label{def:natnum-mrp}
    For an MDP $M^\natnum = \langle \natnum,\actionspace,p_\natnum,r_\natnum,\gamma \rangle$ and a policy $\pi : \natnum \to \probsimplex{\abs{\actionspace}}$, an MRP is a pair $\langle \pmb{P}^\pi, \pmb{r}^\pi \rangle$ of an infinite-dimensional transition matrix $\pmb{P}^\pi: \natnum \times \natnum \to [0,1]$ and an infinite-dimensional reward vector $\pmb{r}^\pi: \natnum \to [0,\rmax]$. 
    For the reward vector we have that
    \begin{equation}
        \forall s \in \natnum,~ \pmb{r}^\pi(s) = \sum_{a \in \actionspace} \pi(a|s) r(s,a)
    \end{equation}
    and for each row of the transition matrix we have that
    \begin{equation}
        \forall s,s' \in \natnum,~ \pmb{P}^\pi(s) \in \probsimplex{\infty} ~\text{and}~ \pmb{P}^\pi(s,s') = \sum_{a \in \actionspace} \pi(a|s) p(s'|s,a).
    \end{equation}
\end{definition}

\subsection{Bisimilarity between history and natural number MDPs}
\label{app:hist-nat-num-mdp-proof}

\noindent \textbf{Restatement of \cref{prop:hist-nat-num-mdp}.}
    \getkeytheorem[body]{prop:hist-nat-num-mdp}

\begin{proof}
    Since $\histories$ is countably infinite, there exists a bijection $f:\histories \to\mathbb{N}$ (e.g. by picking an arbitrary ordering of the observation space $\obsspace$ and action space $\actionspace$ and sorting all histories in lexicographic order).
    Then, for $n \in \natnum$, the natural number MDP $M^\natnum = \langle \natnum, \actionspace, p_\natnum, r_\natnum, \gamma \rangle$ becomes
    \begin{eqnarray*}
        r_\natnum(n,a) = r_\histories(f^{-1}(n),a)  & \text{and} & p_\natnum(n' | n,a) = p_\histories(f^{-1}(n') | f^{-1}(n),a).
    \end{eqnarray*}
    Since $f^{-1}:\natnum\to H$ is a bijection, and $h' = f^{-1}(n')$ ranges over all of $\histories$, we can write,
    \begin{equation}
        \sum_{n'\in\natnum} p_\natnum(n'| n,a) =
        \sum_{n'\in\natnum} p_\histories(f^{-1}(n') | f^{-1}(n),a)= \sum_{h'\in \histories} p_\histories(h' | f^{-1}(n),a)=1 .
    \end{equation}
    Hence, construction of $M^\natnum$ is valid. 
    Next, we show that these two MDPs are bisimilar.
    Define a relation $R \subseteq \histories \times \natnum$ where each history state $h \in \histories$ is associated with its corresponding natural number state $n= f(h) $.
    For any $(h,n)\in R$ and any action $a\in \actionspace$,
    \begin{equation}
        r_\natnum(n,a) = r_\histories(f^{-1}(n),a) = r_\histories(h,a).
    \end{equation}
    Also, for any $(h',n')\in R$,
    \begin{equation}
        p_\natnum(n' | n,a) = p_\histories(f^{-1}(n') | f^{-1}(n),a)=  p_\histories(h' | h,a).
    \end{equation}
    Therefore, the relation $R$ is a bisimilar relation and MDPs $M^\histories$ and $M_\natnum$ are bisimilar.
\end{proof}

\subsection{Vectors, matrices, and norms}
\label{app:norms-and-vectors}

In this article, an infinite-dimensional vector $\pmb{v}$ is a function $\pmb{v}: \natnum \to \realnum$ and we denote the space of these vectors (functions) with $\realnum^\infty$.
An infinite-dimensional matrix $\pmb{M}$ is a function $\pmb{M}: \natnum \times \natnum \to \realnum$ and we denote the space of these matrices (functions) with $\realnum^{\infty \times \infty}$.
For these infinite-dimensional vectors we define norms as follows.

\begin{definition}[$L_1$ and $L_\infty$ norms]
    \label{def:l1-linf-norms}
    We consider a state space of natural numbers, $\mathcal S = \mathbb N$. For any function $\pmb{v}:\mathbb N \to \mathbb R$, the $L_1$ and $L_\infty$ norms are defined as
    \begin{align}
        \|\pmb{v}\|_1 = \sum_{i=0}^{\infty} |\pmb{v}(i)| ~~~\text{and}~~~ \|\pmb{v}\|_\infty = \max_{i \in \mathbb N} |\pmb{v}(i)| .
    \end{align}
\end{definition}

\begin{definition}[Weighted $L_1$ norm] 
\label{def:weighted-L1-norm}
    We consider a state space of natural numbers (i.e., $\mathcal S=\mathbb N$). For any probability distribution $\pmb{p}$ over $\mathbb N$ and any function $\pmb{v}:\mathbb N \rightarrow \mathbb R$, we define
    \begin{equation}
        \|\pmb{v}\|_{\pmb{p}} = \sum_{i =0}^\infty \pmb{p}(i) |\pmb{v}(i)|.    
    \end{equation}
    Then the weighted matrix $L_1$ norm for any probability distribution
    $\pmb{p}$ over $\mathbb N$ and any row-stochastic matrix
    $\pmb{M}$ is defined as
    \begin{equation}
        \|\pmb{M}\|_{\pmb{p}}=\sum_{i=0}^\infty \pmb{p}(i)\sum_{j=0}^\infty |\pmb{M}(i,j)|=
    \sum_{i =0}^\infty \pmb{p}(i) \|\pmb{M}(i)\|_1 .\label{eq:weighted matrix l1 norm}
    \end{equation}
\end{definition}

For these definitions we prove the following lemmas and propositions.

\begin{proposition}[Soundness of Weighted $L_1$ Norm]
    \label{prop:soundness-weighted-l1-norm}
    The weighted $L_1$ norm $\| \cdot \|_{\pmb{p}}$ (\cref{def:weighted-L1-norm}) satisfies the defining properties of a norm, i.e., triangle inequality, absolute homogeneity, and positiveness. Hence, it is a norm.
\end{proposition}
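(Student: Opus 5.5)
The plan is to check the three norm axioms directly from the defining series $\norm{\pmb{v}}_{\pmb{p}} = \sum_{i} \pmb{p}(i)\abs{\pmb{v}(i)}$, working on the subspace of vectors for which this series converges. That subspace includes every bounded vector, since $\sum_i \pmb{p}(i) = 1$; in particular it contains all reward and value vectors, which are bounded by $\rmax$ or $\vmax$. The series has non-negative terms, so its value is either a finite non-negative number or $+\infty$. This means rearrangements and termwise comparisons are legitimate, which is all the infinite-dimensional setting requires. The matrix version follows the same way, applying the vector $L_1$ norm row by row.

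\emph{Absolute homogeneity.} For a scalar $c$, we have $\abs{c\,\pmb{v}(i)} = \abs{c}\abs{\pmb{v}(i)}$ for every $i$. A convergent series of non-negative terms can be scaled termwise, so $\norm{c\pmb{v}}_{\pmb{p}} = \abs{c}\norm{\pmb{v}}_{\pmb{p}}$.

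\emph{Triangle inequality.} Start from the pointwise inequality $\abs{\pmb{u}(i)+\pmb{v}(i)} \le \abs{\pmb{u}(i)} + \abs{\pmb{v}(i)}$ and multiply by $\pmb{p}(i)\ge 0$. Partial sums of non-negative series preserve inequalities and are additive, so letting the partial sums tend to infinity gives $\norm{\pmb{u}+\pmb{v}}_{\pmb{p}} \le \norm{\pmb{u}}_{\pmb{p}} + \norm{\pmb{v}}_{\pmb{p}}$. This also shows the admissible subspace is closed under addition.

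\emph{Positiveness.} Non-negativity is immediate. The definiteness part, $\norm{\pmb{v}}_{\pmb{p}} = 0 \Rightarrow \pmb{v}=0$, is the main obstacle. If $\pmb{p}(i)=0$ for some $i$, then a vector supported on such coordinates has zero norm without being zero. So in general the functional is only a seminorm. I would handle this in one of two ways:
\begin{enumerate}[nosep]
\item assume $\pmb{p}(i)>0$ for all $i$, in which case every term $\pmb{p}(i)\abs{\pmb{v}(i)}$ must vanish and so $\pmb{v}=0$; or
\item pass to equivalence classes of vectors that agree $\pmb{p}$-almost everywhere, i.e.\ on the support of $\pmb{p}$, where the same argument gives definiteness.
\end{enumerate}
I would state the result as a norm on the space of such classes (equivalently, on $\ell_1(\pmb{p})$). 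This is harmless for the paper, since every bound only uses non-negativity, homogeneity and the triangle inequality.
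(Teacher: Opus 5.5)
Your proof is correct and follows the same route as the paper, which also checks the triangle inequality, absolute homogeneity and positiveness termwise from the defining series. The one real difference is in definiteness. The paper's proof simply asserts $\pmb{p}(i)>0$ for every $i$, although \cref{def:weighted-L1-norm} allows any distribution on $\natnum$, and the normalized SRs the norm is later applied to generally vanish on unreachable histories. Your remark that one gets only a seminorm in general, with full support or passage to $\pmb{p}$-a.e.\ classes as the fix, therefore makes explicit a hypothesis the paper leaves implicit. Likewise, your restriction to vectors with finite norm covers a convergence point the paper does not address.
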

\begin{proof}[\Cref{prop:soundness-weighted-l1-norm}]
    For a probability distribution $\pmb{p}$ over natural numbers $\mathbb N$, functions $\pmb{v},\pmb{w}:\mathbb N \rightarrow \mathbb R$,
    and a scalar $\alpha$,
    \begin{enumerate}
        \item Triangle inequality:
        \begin{equation}
            \|\pmb{v}+\pmb{w}\|_p = \sum_{i =0}^\infty \pmb{p}(i) | \pmb{v}(i)+\pmb{w}(i)| 
            \le \sum_{i =0}^\infty  \pmb{p}(i) | \pmb{v}(i)| +  \sum_{i =0}^\infty   \pmb{p}(i) | +\pmb{w}(i)| 
            = \| \pmb{v} \| _p + \| \pmb{w} \| _p 
        \end{equation}
        \item Absolute Homogeneity:
        \begin{equation}
            \| \alpha \pmb{v} \| _p = \sum_{i =0}^\infty   \pmb{p}(i) |\alpha  \pmb{v}(i)| =|\alpha |  \sum_{i =0}^\infty   \pmb{p}(i) | \pmb{v}(i)| = |\alpha |  \| \pmb{v} \| _p
        \end{equation}
        \item Positiveness:
        Since $\pmb{p}(i) >0$ and $| \pmb{v}(i)| \ge 0$,
        then $ \| \pmb{v} \| _p=  \sum_{i =0}^\infty   \pmb{p}(i) | \pmb{v}(i)|  \ge 0 $.
        Moreover,
        $ \| \pmb{v} \| _p=0  $ if and only if $ \pmb{v}(i) = 0 , \forall_{i \in \mathcal{S}} .  $  
    \end{enumerate}
\end{proof}

\begin{lemma}[Weighted $L_1$ Norm under Stochastic Matrix]
    \label{lemm:weighted-l1-norm-stochatic-matrix}
    Let $\pmb{p}$ be a probability distribution over $\mathcal S=\mathbb N$. Let $\pmb{P}$ be a function $\pmb{P}: \mathbb{N} \times \mathbb{N} \in [0,1] $ where $\forall_i, \sum_{j=1}^\infty \pmb{P}(i,j) =1 $ and   $\pmb{P}(i)$ denote the $i$-th row of $\pmb{P}$ viewed as a distribution. 
    We define a function $\pmb{p'} : \mathbb{N} \rightarrow [0,1]$ by
    \begin{equation*}
        \pmb{p'}(j)=  \sum_{i =0}^\infty \pmb{p}(i)\pmb{P}(i,j)
    \end{equation*}
    Then, for any function $\pmb{v}:\mathbb{N}\rightarrow\mathbb R$,
    \begin{equation*}
        \sum_{i =0}^\infty  \pmb{p}(i) \|\pmb{v}\|_{P(i)} =\|\pmb{v}\|_{\pmb{p}'} .
    \end{equation*}
\end{lemma}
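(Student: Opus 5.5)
The plan is to expand both sides as sums of non-negative terms and exchange the order of summation. Tonelli's theorem for non-negative double series justifies the exchange without any convergence assumption.

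First, unfold the left-hand side using \cref{def:weighted-L1-norm}. The inner norm is weighted by the row distribution $\pmb{P}(i)$, which is valid because each row sums to one. This gives
\begin{equation*}
    \sum_{i=0}^\infty \pmb{p}(i) \norm{\pmb{v}}_{\pmb{P}(i)} = \sum_{i=0}^\infty \pmb{p}(i) \sum_{j=0}^\infty \pmb{P}(i,j) \abs{\pmb{v}(j)} = \sum_{i=0}^\infty \sum_{j=0}^\infty \pmb{p}(i)\pmb{P}(i,j)\abs{\pmb{v}(j)} .
\end{equation*}
Every summand $\pmb{p}(i)\pmb{P}(i,j)\abs{\pmb{v}(j)}$ is non-negative. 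The double series, with values in $[0,\infty]$, can therefore be summed in either order with the same result. Swapping the sums and factoring $\abs{\pmb{v}(j)}$ out of the inner sum gives
\begin{equation*}
    \sum_{j=0}^\infty \abs{\pmb{v}(j)} \sum_{i=0}^\infty \pmb{p}(i)\pmb{P}(i,j) = \sum_{j=0}^\infty \pmb{p}'(j)\abs{\pmb{v}(j)} = \norm{\pmb{v}}_{\pmb{p}'} .
\end{equation*}

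For the right-hand side to be a weighted norm in the sense of \cref{def:weighted-L1-norm}, I will also check that $\pmb{p}'$ is a probability distribution. Each entry $\pmb{p}'(j)$ is non-negative. Applying the same Tonelli exchange to the constant function $\pmb{v}\equiv 1$ gives $\sum_j \pmb{p}'(j) = \sum_i \pmb{p}(i)\sum_j \pmb{P}(i,j) = 1$. It follows that $\pmb{p}'(j)\le 1$ for every $j$, so $\pmb{p}'$ does map into $[0,1]$ as claimed in the statement.

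The only real obstacle is justifying the interchange of infinite sums, since $\pmb{v}$ need not be bounded and both sides may equal $+\infty$. I handle this by invoking Tonelli's theorem (equivalently, monotone convergence for counting measure on $\natnum\times\natnum$), which requires only non-negativity. The identity then holds in $[0,\infty]$, including the case where both sides diverge. I will also note in passing the paper's mixed index conventions (sums starting at $0$ or $1$); these are immaterial and I will treat $\natnum$ uniformly.
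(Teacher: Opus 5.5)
Your proof is correct and follows the same approach as the paper. Both expand the left-hand side using the weighted-norm definition, swap the two sums, and recognize $\pmb{p}'(j)=\sum_i \pmb{p}(i)\pmb{P}(i,j)$, and both check that $\pmb{p}'$ is a probability vector by the same interchange. Your appeal to Tonelli's theorem for non-negative double series supplies the justification the paper leaves implicit, covering the case where both sides equal $+\infty$; your derivation of $\pmb{p}'(j)\le 1$ from $\sum_j \pmb{p}'(j)=1$ is also tighter than the paper's entrywise argument.
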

\begin{proof}[\Cref{lemm:weighted-l1-norm-stochatic-matrix}]
    We first show that $\pmb{p'} \in \Delta(\infty)$.
    Since $\pmb{p}(i) \in [0,1]$ and $\pmb{P}(i,j) \in [0,1]$, it follows that $\pmb{p'}(j) \in [0,1]$
    \begin{equation}
        \sum_{j=0}^\infty  \pmb{p}'(j) = \sum_{j=0}^\infty    \sum_{i =0}^\infty \pmb{p}(i)\pmb{P}(i,j) =  \sum_{i =0}^\infty \pmb{p}(i) \sum_{j=0}^\infty  \pmb{P}(i,j) = \sum_{i =0}^\infty \pmb{p}(i) =1 
    \end{equation}
    Thus $\pmb{p'} \in \Delta(\infty)$.

    Now we prove that the identity holds.
    By definition of the weighted $L_1$-norm,
    \begin{equation*}
        \sum_{i =0}^\infty  \pmb{p}(i) \|\pmb{v}\|_{\pmb{P}(i)}
        =  \sum_{j=0}^\infty    \sum_{i =0}^\infty \pmb{p}(i)\pmb{P}(i,j)|\pmb{v}(j)| 
        =  \sum_{j=0}^\infty  \pmb{p}'(j) |\pmb{v}(j)| 
        = \|\pmb{v}\|_{\pmb{p}'}.
    \end{equation*}
    It follows from the fact that $\pmb{p}^\top \pmb{P} = (\pmb{p}')^\top= \sum_{i =0}^\infty   \pmb{p}(i) \pmb{P}(i)$.
\end{proof}

\begin{lemma}
    \label{lemm:L1-Linf-inner-norm}
    For any two vectors $\pmb{v},\pmb{w} \in \realnum^\infty$ where $\norm{\pmb{v}}_1 \le \infty$ and $\norm{\pmb{w}}_\infty \le \infty$,
    \begin{align*}
        \big|\pmb{v}^\top \pmb{w}\big|=\|\pmb{w}\|_\infty  \|\pmb{v}\|_1 . 
    \end{align*}
\end{lemma}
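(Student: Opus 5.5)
The plan is to establish the H\"older-type bound $\big|\pmb{v}^\top \pmb{w}\big| \le \|\pmb{w}\|_\infty \|\pmb{v}\|_1$. As literally written with ``$=$'' the statement cannot hold in general: take $\pmb{v} = \pmb{e}_1$ and $\pmb{w} = \pmb{e}_2$, so $\pmb{v}^\top\pmb{w} = 0$ while the right-hand side equals $1$. I therefore read the statement as this inequality, which is also the form in which it is used later. I will likewise read the hypotheses as finiteness, $\|\pmb{v}\|_1 < \infty$ and $\|\pmb{w}\|_\infty < \infty$, with $\|\pmb{w}\|_\infty$ understood as a supremum (by \cref{def:l1-linf-norms}). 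The only real issue is that $\pmb{v}^\top\pmb{w} = \sum_{i} \pmb{v}(i)\pmb{w}(i)$ is an infinite series, so I must first show that it is well defined.

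First I show absolute convergence. For every index $i$ we have $|\pmb{w}(i)| \le \|\pmb{w}\|_\infty$, hence $|\pmb{v}(i)\pmb{w}(i)| \le \|\pmb{w}\|_\infty |\pmb{v}(i)|$. Summing these nonnegative terms gives
\begin{equation*}
\sum_{i=0}^\infty |\pmb{v}(i)\pmb{w}(i)| \le \|\pmb{w}\|_\infty \sum_{i=0}^\infty |\pmb{v}(i)| = \|\pmb{w}\|_\infty \|\pmb{v}\|_1 < \infty .
\end{equation*}
By the comparison test, $\sum_i \pmb{v}(i)\pmb{w}(i)$ converges absolutely. Its value is then independent of the ordering of the indices, which matters because states are indexed through arbitrary bijections such as $f$ and $k$.

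Second, I apply the triangle inequality to the partial sums, $\big|\sum_{i=0}^n \pmb{v}(i)\pmb{w}(i)\big| \le \sum_{i=0}^n |\pmb{v}(i)\pmb{w}(i)|$, and let $n \to \infty$. Continuity of $|\cdot|$ and the bound from the first step then yield $|\pmb{v}^\top\pmb{w}| \le \|\pmb{w}\|_\infty\|\pmb{v}\|_1$.

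I expect the main obstacle to be the interpretive one rather than the analysis. Equality holds only in special cases: for example, when $|\pmb{w}(i)| = \|\pmb{w}\|_\infty$ on the support of $\pmb{v}$ and $\mathrm{sign}(\pmb{v}(i)\pmb{w}(i))$ is constant there. I would add a remark making this explicit. I would also check that the downstream applications only need the upper bound, namely bounding $\pmb{P}(i)\pmb{v}$ differences by $\vmax$ times an $L_1$ transition error in \cref{prop:simulation-lemma}. In those applications the tighter factor $\tfrac{\gamma}{2}\vmax$ comes from centering $\pmb{w}$ (subtracting $\vmax/2$), which is legitimate because the difference of two probability rows sums to zero.
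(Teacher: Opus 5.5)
Your proposal is correct and follows the paper's proof, which is the same chain $\big|\pmb{v}^\top\pmb{w}\big| \le \sum_i |\pmb{v}(i)||\pmb{w}(i)| \le \|\pmb{w}\|_\infty \|\pmb{v}\|_1$. Like you, the paper in fact only establishes the inequality: the ``$=$'' in the statement and the ``$\le \infty$'' hypotheses are typos, so your reading is the intended one, and your explicit absolute-convergence step and use of a supremum instead of a maximum make rigorous what the paper's proof takes for granted.
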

\begin{proof}[\Cref{lemm:L1-Linf-inner-norm}]
    We can write
    \begin{align}
        \big|\pmb{v}^\top \pmb{w}\big|
        =|  \sum_{i = 1}^\infty  \pmb{v}(i)\pmb{w}(i)|
        \le   \sum_{i = 1}^\infty  |\pmb{v}(i)\pmb{w}(i)|
        &\le  \sum_{i = 1}^\infty  |\pmb{v}(i)||\pmb{w}(i)|\\
        &\le\max_{i \in \mathbb{N}} |\pmb{w}(i)|  \sum_{i = 1}^\infty |\pmb{v}(i)| &(\text{by~\cref{def:l1-linf-norms}})\\ 
        &=\|\pmb{w}\|_\infty  \|\pmb{v}\|_1 . 
    \end{align}
\end{proof}

\subsection{Value loss bounds for countably infinite state spaces}
\label{app:value-loss-bounds-infinity}

\begin{lemma}[One-step state value difference in countably infinite state spaces]
\label{lemm:one-step-value-function-difference-infty}
    Consider two natural number MRPs $\langle \pmb{P}, \pmb{r} \rangle$ and $\langle \pmb{P}', \pmb{r}' \rangle$ with value function vectors $\pmb{v}$ and $\pmb{v}'$ respectively.
    Then,
    \begin{equation}
        \forall i \in \natnum, ~ \abs{ \pmb{v}(i) - \pmb{v}'(i) } \le \abs{ \pmb{r}(i) - \pmb{r}'(i) } + \gamma \frac{\vmax}{2} \norm{ \pmb{P}(i) - \pmb{P}'(i) }_1 + \gamma \norm{ \pmb{v} - \pmb{v}' }_{ \pmb{P}(i) } .
    \end{equation}
\end{lemma}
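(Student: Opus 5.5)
I will start from the Bellman equations of the two MRPs, evaluated at a fixed state $i \in \natnum$:
\begin{equation*}
    \pmb{v}(i) = \pmb{r}(i) + \gamma \pmb{P}(i)^\top \pmb{v}
    \quad\text{and}\quad
    \pmb{v}'(i) = \pmb{r}'(i) + \gamma \pmb{P}'(i)^\top \pmb{v}' .
\end{equation*}
Subtracting them and adding and subtracting the cross term $\gamma \pmb{P}(i)^\top \pmb{v}'$ splits the difference into three pieces:
\begin{equation*}
    \pmb{v}(i) - \pmb{v}'(i) = \big(\pmb{r}(i) - \pmb{r}'(i)\big) + \gamma \pmb{P}(i)^\top (\pmb{v} - \pmb{v}') + \gamma \big(\pmb{P}(i) - \pmb{P}'(i)\big)^\top \pmb{v}' .
\end{equation*}
The triangle inequality handles the absolute value, and each piece is then bounded on its own.

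The first piece is already $\abs{\pmb{r}(i) - \pmb{r}'(i)}$. For the second piece, $\pmb{P}(i) \in \probsimplex{\infty}$ has non-negative entries, so
\begin{equation*}
    \abs{\pmb{P}(i)^\top(\pmb{v} - \pmb{v}')} \le \sum_j \pmb{P}(i,j)\abs{\pmb{v}(j) - \pmb{v}'(j)} = \norm{\pmb{v} - \pmb{v}'}_{\pmb{P}(i)}
\end{equation*}
by \cref{def:weighted-L1-norm}. Every series involved converges absolutely, because the rewards lie in $[0,\rmax]$ and so the values lie in $[0,\vmax]$. This justifies the rearrangements over the countably infinite index set.

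The third piece is the main step: it must yield the factor $\vmax/2$ rather than $\vmax$. The vector $\pmb{\delta} = \pmb{P}(i) - \pmb{P}'(i)$ sums to zero, since both rows are probability vectors (absolutely summable). Hence for any constant $c$ we have $\pmb{\delta}^\top \pmb{v}' = \pmb{\delta}^\top (\pmb{v}' - c\mathbf{1})$, where $\mathbf{1}$ is the all-ones vector. I will choose $c = \vmax/2$. Because $\pmb{v}'(j) \in [0,\vmax]$, this gives $\norm{\pmb{v}' - c\mathbf{1}}_\infty \le \vmax/2$. Applying \cref{lemm:L1-Linf-inner-norm}, read as an inequality $\le$, then gives
\begin{equation*}
    \abs{\pmb{\delta}^\top \pmb{v}'} \le \frac{\vmax}{2}\norm{\pmb{P}(i) - \pmb{P}'(i)}_1 .
\end{equation*}

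Combining the three bounds gives the claim. The only delicate points are the zero-sum centering trick, which needs $\sum_j \pmb{\delta}(j) = 0$ to hold for the infinite sum, and the non-negativity of rewards. If rewards could be negative, I would instead center at the midpoint of the value range $[-\vmax,\vmax]$, which would change the constant.
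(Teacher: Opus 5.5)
Your proposal is correct and follows essentially the same argument as the paper. Both start from the Bellman equations, add and subtract $\gamma \pmb{P}(i)\pmb{v}'$, and apply the triangle inequality; both then bound $\abs{\pmb{P}(i)(\pmb{v}-\pmb{v}')}$ by $\norm{\pmb{v}-\pmb{v}'}_{\pmb{P}(i)}$ and recenter at $\frac{\vmax}{2}\pmb{1}$ using the zero sum of $\pmb{P}(i)-\pmb{P}'(i)$, followed by the $L_1$--$L_\infty$ H\"older bound. You are also right to read \cref{lemm:L1-Linf-inner-norm} as an inequality, which is what the paper's proof of that lemma actually establishes.
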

\begin{proof}[\Cref{lemm:one-step-value-function-difference-infty}]
    First, we observe that $\pmb{P}(i) \in \probsimplex{\infty}$ and $\pmb{v} \in \realnum^\infty$.
    Then,
    \begin{eqnarray}
        \pmb{P}(i) \pmb{v} = \sum_{j=1}^\infty \pmb{P}(i,j) \pmb{v}(j) \le \infty
    \end{eqnarray}
    because $\pmb{v}(j) \in [0,\vmax]$ and the row vector $\pmb{P}(i)$ is a probability vector.
    Furthermore, we have that
    \begin{equation}
        \left( \pmb{P}(i) - \pmb{P}'(i) \right) \pmb{1} \frac{\vmax}{2} = \underbrace{\sum_{j=1}^\infty \left( \pmb{P}(i,j) - \pmb{P}'(i,j) \right)}_{=0} \frac{\vmax}{2} 
        = 0, \label{eq:value-shift}
    \end{equation}
    where $\pmb{1}$ is the vector of all ones.
    Then, using the Bellman equations for $\pmb{v}$ and $\pmb{v}'$ we have an arbitrary $i$ that
    \begin{align}
        & \abs{ \pmb{v}(i) - \pmb{v}'(i) } \nonumber \\
        &=   \abs{ ( \pmb{r}(i) + \gamma \pmb{P}(i) \pmb{v} ) - ( \pmb{r}'(i) + \gamma \pmb{P}'(i) \pmb{v}' ) } \\
        &\le \abs{ \pmb{r}(i) - \pmb{r}'(i) } + \gamma \abs{ \pmb{P}(i) \pmb{v} - \pmb{P}(i) \pmb{v}' + \pmb{P}(i) \pmb{v}' - \pmb{P}'(i) \pmb{v}' } \\
        &\le \abs{ \pmb{r}(i) - \pmb{r}'(i) } + \gamma \abs{ \pmb{P}(i) \pmb{v} - \pmb{P}(i) \pmb{v}' } + \gamma \abs{ \pmb{P}(i) \pmb{v}' - \pmb{P}'(i) \pmb{v}' } \\
        &=   \abs{ \pmb{r}(i) - \pmb{r}'(i) } + \gamma \abs{ \pmb{P}(i) ( \pmb{v} - \pmb{v}' ) } + \gamma \abs{ ( \pmb{P}(i) - \pmb{P}'(i) ) \pmb{v}' } \\
        &=   \abs{ \pmb{r}(i) - \pmb{r}'(i) } + \gamma \abs{ \pmb{P}(i) ( \pmb{v} - \pmb{v}' ) } + \gamma \abs{ ( \pmb{P}(i) - \pmb{P}'(i) ) ( \pmb{v}' - \pmb{1} \frac{\vmax}{2} ) } &(\text{\cref{eq:value-shift}}) \\
        &\le \abs{ \pmb{r}(i) - \pmb{r}'(i) } + \gamma \abs{ \pmb{P}(i) ( \pmb{v} - \pmb{v}' ) } + \gamma \norm{ \pmb{P}(i) - \pmb{P}'(i) }_1 \norm{ \pmb{v}' - \pmb{1} \frac{\vmax}{2} }_\infty &(\text{\cref{lemm:L1-Linf-inner-norm}}) \\
        &\le \abs{ \pmb{r}(i) - \pmb{r}'(i) } + \gamma \norm{ \pmb{v} - \pmb{v}' }_{ \pmb{P}(i) } + \gamma \norm{ \pmb{P}(i) - \pmb{P}'(i) }_1 \norm{ \pmb{v}' - \pmb{1} \frac{\vmax}{2} }_\infty &(\text{\cref{def:weighted-L1-norm}}) \\
        &\le \abs{ \pmb{r}(i) - \pmb{r}'(i) } + \gamma \norm{ \pmb{v} - \pmb{v}' }_{ \pmb{P}(i) } + \gamma \norm{ \pmb{P}(i) - \pmb{P}'(i) }_1 \frac{\vmax}{2} 
    \end{align}
    where the last line follows because every entry of $\pmb{v}'$ lies between zero and $\vmax$.
\end{proof}

\begin{lemma}[MRP Value Loss]
    \label{lemm:mrp-value-loss}
    Consider two natural number MRPs $\langle \pmb{P}, \pmb{r} \rangle$ and $\langle \pmb{P}', \pmb{r}' \rangle$ with value function vectors $\pmb{v}$ and $\pmb{v}'$ respectively.
    For the MRP $\langle \pmb{P}, \pmb{r} \rangle$ and a start-state distribution vector $\pmb{d}_0 \in \probsimplex{\infty}$, let $\pmb{d}_{\texttt{S}}$ be the normalized SR for MRP $\langle \pmb{P}, \pmb{r} \rangle$.\footnote{We omit the policy dependency because an MRP does not have any actions.}
    Then,
    \begin{equation*}
        \norm{ \pmb{v} - \pmb{v}' }_{\pmb{d}_0} \le \frac{1}{1 - \gamma} \left( \norm{ \pmb{r} - \pmb{r}' }_{ \pmb{d}_{\texttt{S}} } + \frac{\gamma}{2} \vmax \norm{ \pmb{P} - \pmb{P}' }_{ \pmb{d}_{\texttt{S}} } \right) .
    \end{equation*}
\end{lemma}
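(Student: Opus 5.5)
The plan is to iterate the pointwise bound of \cref{lemm:one-step-value-function-difference-infty} and weight it along the trajectory distribution of the MRP $\langle \pmb{P}, \pmb{r} \rangle$.

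Write $\pmb{e}(i) = \abs{ \pmb{r}(i) - \pmb{r}'(i) } + \frac{\gamma}{2}\vmax \norm{ \pmb{P}(i) - \pmb{P}'(i) }_1$ and $\pmb{\delta}(i) = \abs{ \pmb{v}(i) - \pmb{v}'(i) }$. Then \cref{lemm:one-step-value-function-difference-infty} reads $\pmb{\delta}(i) \le \pmb{e}(i) + \gamma \norm{\pmb{\delta}}_{\pmb{P}(i)}$ for every $i$. I will average this against a distribution $\pmb{p}$ and use \cref{lemm:weighted-l1-norm-stochatic-matrix}, which gives $\sum_i \pmb{p}(i)\norm{\pmb{\delta}}_{\pmb{P}(i)} = \norm{\pmb{\delta}}_{\pmb{p}'}$ with $\pmb{p}' = \pmb{p}^\top \pmb{P}$. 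The result is the recursive inequality
\[
\norm{\pmb{\delta}}_{\pmb{p}} \le \norm{\pmb{e}}_{\pmb{p}} + \gamma \norm{\pmb{\delta}}_{\pmb{p}^\top \pmb{P}} .
\]
Starting from $\pmb{p} = \pmb{d}_0$ and setting $\pmb{d}_t = \pmb{d}_0^\top \pmb{P}^t$, which lies in $\probsimplex{\infty}$ by the same lemma, unrolling $n$ times yields
\[
\norm{\pmb{\delta}}_{\pmb{d}_0} \le \sum_{t=0}^{n-1} \gamma^t \norm{\pmb{e}}_{\pmb{d}_t} + \gamma^n \norm{\pmb{\delta}}_{\pmb{d}_n} .
\]
The remainder is at most $\gamma^n \vmax$, because both value vectors take values in $[0,\vmax]$. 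Letting $n \to \infty$ removes it.

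Next I identify the series with the normalized SR. By \cref{eq:sr-state-def}, up to the indexing offset $t \mapsto t-1$, we have $\pmb{d}_{\texttt{S}}(i) = (1-\gamma)\sum_{t\ge 0} \gamma^t \pmb{d}_t(i)$, since $\Pr(s_{t+1}=i) = \pmb{d}_t(i)$. All terms are nonnegative, so Tonelli lets us exchange the sum over $t$ with the sum over $i$:
\[
\sum_{t\ge 0} \gamma^t \norm{\pmb{e}}_{\pmb{d}_t} = \frac{1}{1-\gamma}\norm{\pmb{e}}_{\pmb{d}_{\texttt{S}}} .
\]
The triangle inequality (\cref{prop:soundness-weighted-l1-norm}) together with the matrix norm definition \cref{eq:weighted matrix l1 norm} then splits $\norm{\pmb{e}}_{\pmb{d}_{\texttt{S}}}$ into $\norm{\pmb{r}-\pmb{r}'}_{\pmb{d}_{\texttt{S}}} + \frac{\gamma}{2}\vmax \norm{\pmb{P}-\pmb{P}'}_{\pmb{d}_{\texttt{S}}}$, which is the claimed bound.

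The main obstacle is the rigor of the infinite-dimensional limits. Three points need care: that $\pmb{d}_t$ remains a probability vector, that the tail term vanishes (this needs bounded values), and that the double series may be exchanged. Nonnegativity handles the exchange through Tonelli. Bounded rewards in $[0,\rmax]$ guarantee that every quantity involved is finite, so no convergence issue arises beyond these.
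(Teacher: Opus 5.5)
Your proof is correct and follows essentially the same route as the paper. Both iterate the one-step bound of \cref{lemm:one-step-value-function-difference-infty}, propagate the weighting through \cref{lemm:weighted-l1-norm-stochatic-matrix} to get a finite-horizon bound with a $\gamma^n$ remainder, let the horizon go to infinity, and identify $(1-\gamma)\sum_{t\ge 0}\gamma^t \pmb{d}_t$ with $\pmb{d}_{\texttt{S}}$; your explicit $\gamma^n \vmax$ bound on the remainder and the Tonelli exchange make rigorous two steps that the paper's induction passes over quickly.
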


\begin{proof}[\Cref{lemm:mrp-value-loss}]
    First we prove a $T$-step version of~\cref{lemm:mrp-value-loss} by induction and then generalize this bound to infinite horizons.

    \textit{Induction Hypothesis:} 
    \begin{equation}
        \norm{ \pmb{v} - \pmb{v}' }_{\pmb{d}_0} \le \norm{ \pmb{r} - \pmb{r}' }_{ \sum_{t=0}^T \gamma^{t} \pmb{d}_t^\pi } + \frac{\gamma}{2} \vmax \norm{ \pmb{P} - \pmb{P}' }_{ \sum_{t=0}^T \gamma^{t} \pmb{d}_t^\pi } + \gamma^T \norm{ \pmb{v} - \pmb{v}' }_{ \pmb{d}_T } \label{eq:t-step-value-difference-infty}
    \end{equation}

    \textit{Base Case:} For $T=1$, we have that
    \begin{align}
        & \norm{ \pmb{v} - \pmb{v}' }_{\pmb{d}_0} \nonumber \\
        &= \sum_{i=1}^\infty \pmb{d}_0(i) \abs{ \pmb{v}(i) - \pmb{v}'(i) } \\
        &\le \sum_{i=1}^\infty \pmb{d}_0(i) \left( \abs{ \pmb{r}(i) - \pmb{r}'(i) } + \frac{\gamma}{2} \vmax \norm{ \pmb{P}(i) - \pmb{P}'(i) }_1 + \gamma \norm{ \pmb{v} - \pmb{v}' }_{ \pmb{P}(i) } \right) &(\text{\cref{lemm:one-step-value-function-difference-infty}}) \\
        &=   \norm{ \pmb{r} - \pmb{r}' }_{\pmb{d}_0} + \frac{\gamma}{2} \vmax \norm{ \pmb{P} - \pmb{P}' }_{ \pmb{d}_0 } + \gamma \norm{ \pmb{v} - \pmb{v}' }_{ \sum_{i=1}^\infty \pmb{d}_0(i) \pmb{P}(i) } &(\text{\cref{def:weighted-L1-norm}}) \\
        &=   \norm{ \pmb{r} - \pmb{r}' }_{\pmb{d}_0} + \frac{\gamma}{2} \vmax \norm{ \pmb{P} - \pmb{P}' }_{ \pmb{d}_0 } + \gamma \norm{ \pmb{v} - \pmb{v}' }_{ \pmb{d}_1 }, \label{eq:one-step-value-diff-recursion}
    \end{align}
    because $\pmb{d}_1 = \sum_{i=1}^\infty \pmb{d}_0(i) \pmb{P}(i)$.

    \textit{Induction Step:} Applying the bound in~\cref{eq:t-step-value-difference-infty} for a start distribution vector $\pmb{d}_{T}$, we have
    \begin{align}
        \norm{ \pmb{v} - \pmb{v}' }_{\pmb{d}_0} 
        &\le \norm{ \pmb{r} - \pmb{r}' }_{ \sum_{t=0}^T \gamma^{t} \pmb{d}_t^\pi } + \frac{\gamma}{2} \vmax \norm{ \pmb{P} - \pmb{P}' }_{ \sum_{t=0}^T \gamma^{t} \pmb{d}_t^\pi } + \gamma^T \norm{ \pmb{v} - \pmb{v}' }_{ \pmb{d}_T } \\
        &\le \norm{ \pmb{r} - \pmb{r}' }_{ \sum_{t=0}^T \gamma^{t} \pmb{d}_t^\pi } + \frac{\gamma}{2} \vmax \norm{ \pmb{P} - \pmb{P}' }_{ \sum_{t=0}^T \gamma^{t} \pmb{d}_t^\pi } \nonumber \\
        &~~~~~+ \gamma^T \norm{ \pmb{r} - \pmb{r}' }_{\pmb{d}_T} + \gamma^{T} \frac{\gamma}{2} \vmax \norm{ \pmb{P} - \pmb{P}' }_{ \pmb{d}_T } + \gamma^{T+1} \norm{ \pmb{v} - \pmb{v}' }_{ \pmb{d}_{T+1} } \\
        &=   \norm{ \pmb{r} - \pmb{r}' }_{ \sum_{t=0}^{T+1} \gamma^{t} \pmb{d}_t^\pi } + \frac{\gamma}{2} \vmax \norm{ \pmb{P} - \pmb{P}' }_{ \sum_{t=0}^{T+1} \gamma^{t} \pmb{d}_t^\pi } + \gamma^{T+1} \norm{ \pmb{v} - \pmb{v}' }_{ \pmb{d}_{T+1} }
    \end{align}

    \textit{Final Bound:} Next, we take the limit where $T \to \infty$ of the $T$-step bound in~\cref{eq:t-step-value-difference-infty}.
    First, we note that for the limits
    \begin{align}
        \lim_{T \to \infty}  \norm{ \pmb{r} - \pmb{r}' }_{ \sum_{t=0}^T \gamma^{t} \pmb{d}_t^\pi } = \norm{ \pmb{r} - \pmb{r}' }_{ \sum_{t=0}^\infty \gamma^{t} \pmb{d}_t^\pi } \\
        \lim_{T \to \infty}  \norm{ \pmb{P} - \pmb{P}' }_{ \sum_{t=0}^T \gamma^{t} \pmb{d}_t^\pi } = \norm{ \pmb{P} - \pmb{P}' }_{ \sum_{t=0}^\infty \gamma^{t} \pmb{d}_t^\pi }
    \end{align}
    because increasing $T$ adds additional weight terms.
    Moreover, we can view the terms $\norm{ \pmb{r} - \pmb{r}' }_{ \sum_{t=0}^T \gamma^{t} \pmb{d}_t^\pi }$ and $\norm{ \pmb{P} - \pmb{P}' }_{ \sum_{t=0}^T \gamma^{t} \pmb{d}_t^\pi }$ as sequences that are strictly non-decreasing in $T$ and therefore
    \begin{align}
        \forall_{T \ge 0},&&  \norm{ \pmb{r} - \pmb{r}' }_{ \sum_{t=0}^T \gamma^{t} \pmb{d}_t^\pi } &\le \norm{ \pmb{r} - \pmb{r}' }_{ \sum_{t=0}^\infty \gamma^{t} \pmb{d}_t^\pi } \\
        \forall_{T \ge 0},&&  \norm{ \pmb{P} - \pmb{P}' }_{ \sum_{t=0}^T \gamma^{t} \pmb{d}_t^\pi } &\le \norm{ \pmb{P} - \pmb{P}' }_{ \sum_{t=0}^\infty \gamma^{t} \pmb{d}_t^\pi }.
    \end{align}
    This allows us to obtain the limit as $T \to \infty$ and an upper bound:
    \begin{align}
         \norm{ \pmb{v} - \pmb{v}' }_{\pmb{d}_0} 
         &\le \lim_{T \to \infty} \left[ \norm{ \pmb{r} - \pmb{r}' }_{ \sum_{t=0}^T \gamma^{t} \pmb{d}_t^\pi } + \frac{\gamma}{2} \vmax \norm{ \pmb{P} - \pmb{P}' }_{ \sum_{t=0}^T \gamma^{t} \pmb{d}_t^\pi } + \gamma^T \norm{ \pmb{v} - \pmb{v}' }_{ \pmb{d}_T } \right] \\
         &= \norm{ \pmb{r} - \pmb{r}' }_{ \sum_{t=0}^\infty \gamma^{t} \pmb{d}_t^\pi } + \frac{\gamma}{2} \vmax \norm{ \pmb{P} - \pmb{P}' }_{ \sum_{t=0}^\infty \gamma^{t} \pmb{d}_t^\pi } \\
         &= \frac{1}{1 - \gamma} \left (\norm{ \pmb{r} - \pmb{r}' }_{ (1 - \gamma) \sum_{t=0}^\infty \gamma^{t} \pmb{d}_t^\pi } + \frac{\gamma}{2} \vmax \norm{ \pmb{P} - \pmb{P}' }_{ (1 - \gamma) \sum_{t=0}^\infty \gamma^{t} \pmb{d}_t^\pi } \right )\\
         &= \frac{1}{1 - \gamma} \left ( \norm{ \pmb{r} - \pmb{r}' }_{ \pmb{d}_{\texttt{S}}^\pi } + \frac{\gamma}{2} \vmax \norm{ \pmb{P} - \pmb{P}' }_{ \pmb{d}_{\texttt{S}}^\pi } \right),
    \end{align}
    because $\pmb{d}_{\texttt{S}}^\pi = (1 - \gamma) \sum_{t=0}^\infty \gamma^{t} \pmb{d}_t^\pi$.
\end{proof}

\textbf{Restatement of \cref{prop:simulation-lemma}}.
    \getkeytheorem[body]{prop:simulation-lemma} 
\begin{proof}[Proof of~\cref{prop:simulation-lemma}]
    We observe that $\norm{ \pmb{r}^\pi - \widehat{\pmb{r}}^\pi }_{ \widehat{\pmb{d}}^\pi_{\texttt{S}} } \le \eps_r$ and $\norm{ \pmb{P}^\pi - \widehat{\pmb{P}}^\pi }_{ \widehat{\pmb{d}}^\pi_{\texttt{S}} } \le \eps_p$ and apply~\cref{lemm:mrp-value-loss}.
\end{proof}

\subsubsection{Proof of~\cref{prop:value-loss-learning-finite-steps}}
\label{app:value-loss-learning-finite-steps-proof}

First the following lemma is proven on the learned reward and transition functions.

\begin{lemma}[Reward and transition bound]
    \label{lemm:one-step-reward-and-transition-bound}
    We consider a natural number MDP $M^\natnum$ with all rewards lying in the interval $[0,\rmax]$.
    Then for the optimal policy $\pi^\star$ and the learned $\widehat{\pi}$ (\cref{asmpt:learning-finite-steps}), the corresponding MRPs will be $\langle \pmb{r}^{\pi^\star}, \pmb{P}^{\pi^\star} \rangle$, $\langle \pmb{r}^{\widehat{\pi}}, \pmb{P}^{\widehat{\pi}} \rangle$.

    For these MRPs, we have that for all $s \in \natnum$,
    \begin{equation}
        \abs{ \pmb{r}^{\pi^\star}(s)-\pmb{r}^{\widehat{\pi}}(s) } \le
        \begin{cases}
            \eps \rmax & s \in \statespace_\obs \\
            \rmax      & s \in \statespace_\unk
        \end{cases}
        \label{eq:piecewise-reward-bound}
    \end{equation}
    and 
    \begin{equation}
        \norm{ \pmb{P}^{\pi^\star}(s)-\pmb{P}^{\widehat{\pi}}(s) }_1 \le
        \begin{cases}
            \eps & s \in \statespace_\obs \\
            2    & s \in \statespace_\unk
        \end{cases}
        \label{eq:piecewise-transition-bound}
    \end{equation}
\end{lemma}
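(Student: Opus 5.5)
The plan is to write both MRP quantities as policy-weighted averages over the same MDP rows and bound the per-state differences using the action-probability differences from \cref{asmpt:learning-finite-steps}. Fix a state $s \in \natnum$ and write $\pmb{\pi}^\star_s$ and $\widehat{\pmb{\pi}}_s$ for the two action-selection vectors. By \cref{def:natnum-mrp},
\begin{equation*}
    \pmb{r}^{\pi^\star}(s) - \pmb{r}^{\widehat{\pi}}(s) = \sum_{a \in \actionspace} \big( \pi^\star(a|s) - \widehat{\pi}(a|s) \big) r(s,a),
\end{equation*}
and
\begin{equation*}
    \pmb{P}^{\pi^\star}(s) - \pmb{P}^{\widehat{\pi}}(s) = \sum_{a \in \actionspace} \big( \pi^\star(a|s) - \widehat{\pi}(a|s) \big) p(\cdot|s,a).
\end{equation*}
Here $\statespace_\obs$ and $\statespace_\unk$ are the images of $\histories_\obs$ and $\histories_\unk$ under the bijection of \cref{prop:hist-nat-num-mdp}.

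\textbf{Rewards.} I apply \cref{lemm:L1-Linf-inner-norm} to the finite vectors $\pmb{\pi}^\star_s - \widehat{\pmb{\pi}}_s$ and $r(s,\cdot)$, giving
\begin{equation*}
    \abs{\pmb{r}^{\pi^\star}(s) - \pmb{r}^{\widehat{\pi}}(s)} \le \norm{\pmb{\pi}^\star_s - \widehat{\pmb{\pi}}_s}_1 \rmax .
\end{equation*}
For $s \in \statespace_\obs$ the assumption gives $\norm{\cdot}_1 \le \eps$, so the bound is $\eps \rmax$. For $s \in \statespace_\unk$ I would not use the crude bound $\norm{\cdot}_1 \le 2$, since it gives $2\rmax$. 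Instead I use that both rewards are convex combinations of values in $[0,\rmax]$, so their difference lies in $[-\rmax,\rmax]$ and the bound $\rmax$ follows.

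\textbf{Transitions.} I use the triangle inequality for the $L_1$ norm together with $\norm{p(\cdot|s,a)}_1 = 1$:
\begin{equation*}
    \norm{\pmb{P}^{\pi^\star}(s) - \pmb{P}^{\widehat{\pi}}(s)}_1 \le \sum_{a} \abs{\pi^\star(a|s) - \widehat{\pi}(a|s)} = \norm{\pmb{\pi}^\star_s - \widehat{\pmb{\pi}}_s}_1 .
\end{equation*}
This is at most $\eps$ on observed states. On unknown states the bound $2$ follows either from the same inequality, since the $L_1$ distance between two probability vectors is at most $2$, or directly because both rows are probability vectors in $\probsimplex{\infty}$.

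\textbf{Main obstacle.} The only delicate point is justifying the rearrangement of the countable sum in the transition case. Each $p(\cdot|s,a)$ is summable and $\actionspace$ is finite, so Tonelli's theorem allows swapping the order of summation. The $\rmax$ (rather than $2\rmax$) case for rewards also needs the convexity argument above rather than the Hölder bound.
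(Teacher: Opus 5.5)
Your proposal is correct and follows the paper's proof essentially step by step. The paper uses the same expansions of the MRP quantities as policy-weighted sums. It uses the same reward bound $\rmax\norm{\pmb{\pi}^\star_s - \widehat{\pmb{\pi}}_s}_1$, and for transitions the same triangle-inequality and unit-row-sum argument reducing to $\norm{\pmb{\pi}^\star_s - \widehat{\pmb{\pi}}_s}_1$. On unknown states the paper only asserts the $\rmax$ reward bound, and one-sidedly; your convexity argument, with both rewards in $[0,\rmax]$, supplies the rigorous two-sided justification.
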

\begin{proof}[\Cref{lemm:one-step-reward-and-transition-bound}]
    First we derive the reward bound (\cref{eq:piecewise-reward-bound}) and then the transition bound (\cref{eq:piecewise-transition-bound}).
    
    \textit{Reward bound derivation.}
    We have that
    \begin{equation}
        \pmb{r}^\pi(s) = \sum_{a\in \actionspace} \pi(a| s) r(s,a)\quad s\in \statespace
        \label{eq:mrp_reward_def}
    \end{equation}
    Using~\cref{eq:mrp_reward_def}, for any $s\in \statespace$,
    \begin{equation}
        \pmb{r}^{\pi^\star}(s)-\pmb{r}^{\widehat{\pi}}(s)
        = \sum_{a\in \actionspace}\big(\pi^\star(a|s)-\widehat{\pi}(a|s)\big)r(s,a).
        \label{eq:reward-diff-identity}
    \end{equation}
    For $s\in \statespace_\obs$, $\| \pmb{\pi}^\star_s - \widehat{\pmb{\pi}}_s \|_1\le \eps$. 
    We take absolute values in~\cref{eq:reward-diff-identity} and use $r(s,a) \in [0 ,\rmax]$:
    \begin{align}
        | \pmb{r}^{\pi^\star}(s)-\pmb{r}^{\widehat{\pi}}(s) |
        &\le \sum_{ a\in \actionspace } | \pi^\star(a|s) - \widehat{\pi}(a|s) | r(s,a) \nonumber \\
        &\le \rmax\sum_{a\in \actionspace}|\pi^\star(a| s)-\widehat{\pi}(a|s) | 
        = \rmax \| \pmb{\pi}^\star_s - \widehat{\pmb{\pi}}_s \|_1 \nonumber 
        \le \eps\rmax
    \end{align}
    For $s\in \statespace_\unk$, the policy $\widehat{\pi}$ selects action arbitrarily and therefore the one-step reward loss can be at most $\rmax$ and
    \begin{equation}
        \pmb{r}^{\pi^\star}(s)-\pmb{r}^{\widehat{\pi}}(s) \le \rmax.
    \end{equation}
    
    \textit{Transition bound derivation.}
    For a fixed state $s$, we define the row-wise absolute difference between two MRP transition functions by
    \begin{equation}
        \norm{ \pmb{P}^{\pi^\star}(s)-\pmb{P}^{\widehat{\pi}}(s) }_1 
        = \sum_{i = 1}^\infty \abs{ \pmb{P}^{\pi^\star}(s,i)-\pmb{P}^{\widehat{\pi}}(s,i) } \label{eq:row-wise-absolute-diff}
    \end{equation}
    because we assume that the state space is the natural numbers $\natnum$ and we consider the L1-norm on infinite-dimensional vectors.
    Because~\cref{eq:row-wise-absolute-diff} only computes the L1-norm for the difference between two probability vectors, the term in~\cref{eq:row-wise-absolute-diff} is bounded.
    Moreover, we have that
    \begin{align}
        \norm{ \pmb{P}^{\pi^\star}(s)-\pmb{P}^{\widehat{\pi}}(s) }_1 
        &=   \sum_{i = 1}^\infty \abs{ \pmb{P}^{\pi^\star}(s,i)-\pmb{P}^{\widehat{\pi}}(s,i) } \nonumber \\
        &=   \sum_{i = 1}^\infty \abs{ \sum_{a \in \actionspace} \pi^\star(a|s) \pmb{P}^a(s,i) - \widehat{\pi}(a|s) \pmb{P}^a(s,i) } \nonumber \\
        &=   \sum_{i = 1}^\infty \abs{ \sum_{a \in \actionspace} ( \pi^\star(a|s) - \widehat{\pi}(a|s) ) \pmb{P}^a(s,i) } \nonumber \\
        &\le \sum_{i = 1}^\infty       \sum_{a \in \actionspace} \abs{ \pi^\star(a|s) - \widehat{\pi}(a|s) } \abs{ \pmb{P}^a(s,i) } \nonumber \\
        &=   \underbrace{\sum_{a \in \actionspace} \abs{ \pi^\star(a|s) - \widehat{\pi}(a|s) }}_{=\norm{ \pmb{\pi}^\star_s - \widehat{\pmb{\pi}}_s }_1} \underbrace{\sum_{i = 1}^\infty \abs{ \pmb{P}^a(s,i) }}_{ =\norm{\pmb{P}^a(s)}_1 } \nonumber \\
        &=   \norm{ \pmb{\pi}^\star_s - \widehat{\pmb{\pi}}_s }_1 \underbrace{\norm{\pmb{P}^a(s)}_1 }_{=1} \nonumber \\
        &=   \norm{ \pmb{\pi}^\star_s - \widehat{\pmb{\pi}}_s }_1 \label{eq:transition-bound}
    \end{align}
    Because we assume that the policy $\widehat{\pi}$ is $\eps$-optimal for every state $s \in \statespace_\obs$ (\cref{asmpt:learning-finite-steps}), we have that
    \begin{equation}
        \forall_{s \in \statespace_\obs},~ \norm{ \pmb{P}^{\pi^\star}(s)-\pmb{P}^{\widehat{\pi}}(s) }_1 \le \norm{ \pmb{\pi}^\star_s - \widehat{\pmb{\pi}}_s }_1 \le \eps.
    \end{equation}
    For states $s \in \statespace_\unk$, the policy $\widehat{\pi}$ can select actions by sampling an arbitrary probability vector. 
    In this case, the term $\norm{ \pmb{\pi}^\star_s - \widehat{\pmb{\pi}}_s }_1$ because the difference between two arbitrary probability vectors can be at most two.
    Therefore,
    \begin{equation}
        \forall_{s \in \statespace_\unk},~ \norm{ \pmb{P}^{\pi^\star}(s)-\pmb{P}^{\widehat{\pi}}(s) }_1 \le \norm{ \pmb{\pi}^\star_s - \widehat{\pmb{\pi}}_s }_1 \le 2.
    \end{equation}
\end{proof}

\noindent \textbf{Restatement of~\cref{prop:value-loss-learning-finite-steps}.}
    \getkeytheorem[body]{prop:value-loss-learning-finite-steps}
\begin{proof}[\Cref{prop:value-loss-learning-finite-steps}]
    The proof is an application of~\cref{lemm:mrp-value-loss} to the history $M^\histories$ under start state distribution $\pmb{d}_0$ where the scaled SR can be picked for either MRP (either $\pmb{d}^{\widehat{\pi}}$ or $\pmb{d}^{{\pi}^{\star}}$).
    Because the history MDP is bisimilar to a natural number MDP, we observe that $J(\pi^\star,\pmb{d}_0) - J(\widehat{\pi},\pmb{d}_0) = \norm{ \pmb{v}^{\pi*} - \pmb{v}^{\widehat{\pi} }}_{\pmb{d}_0}$.
    By~\cref{lemm:one-step-reward-and-transition-bound}, we can write the reward difference term for any probability vector $\pmb{d}$ as
    \begin{align}
        \norm{ \pmb{r} - \pmb{r}' }_{ \pmb{d} } 
        &= \sum_{i \in \natnum} \pmb{d}(i) \abs{ \pmb{r}(i) - \pmb{r}'(i) } \\
        &= \sum_{i \in \statespace_\obs} \pmb{d}(i) \abs{ \pmb{r}(i) - \pmb{r}'(i) } + \sum_{i \in \statespace_\unk} \pmb{d}(i) \abs{ \pmb{r}(i) - \pmb{r}'(i) } \\
        &\le \sum_{i \in \statespace_\obs} \pmb{d}(i) \eps \rmax + \sum_{i \in \statespace_\unk} \pmb{d}(i) \rmax \\ 
        &= \pmb{d}( \statespace_\obs ) \eps \rmax + \pmb{d}( \statespace_\unk ) \rmax . \label{eq:reward-diff-bnd-proof}
    \end{align}
    Similarly, we have for the transition matrix difference that
    \begin{align}
        \norm{ \pmb{P} - \pmb{P}' }_{ \pmb{d} } 
        &= \sum_{i \in \natnum} \pmb{d}(i) \norm{ \pmb{P}(i) - \pmb{P}'(i) }_1 \\
        &= \sum_{i \in \statespace_\obs} \pmb{d}(i) \norm{ \pmb{P}(i) - \pmb{P}'(i) }_1 + \sum_{i \in \statespace_\unk} \pmb{d}(i) \norm{ \pmb{P}(i) - \pmb{P}'(i) }_1  \\
        &= \sum_{i \in \statespace_\obs} \pmb{d}(i) \eps + \sum_{i \in \statespace_\unk} \pmb{d}(i) 2  \\
        &= \pmb{d}( \statespace_\obs ) \eps + \pmb{d}( \statespace_\unk ) 2 . \label{eq:transition-diff-bnd-proof}
    \end{align}
    The final results follows by substituting~\cref{eq:reward-diff-bnd-proof,eq:transition-diff-bnd-proof} into~\cref{lemm:mrp-value-loss} for either $\pmb{d}^{\widehat{\pi}}$ or $\pmb{d}^{{\pi}^{\star}}$ to obain
    \begin{align}
         J(\pi^\star,\pmb{d}_0) - J(\widehat{\pi},\pmb{d}_0) 
        &\le \frac{1}{1 - \gamma} \left( \pmb{d}(\histories_\obs) \eps \left( \rmax + \frac{\gamma}{2} \vmax \right) + \pmb{d}(\histories_\unk) \left( \rmax + \frac{\gamma}{2} \vmax 2 \right) \right) \nonumber \\
        &\le \frac{1}{1 - \gamma} \left( \pmb{d}(\histories_\obs) \eps \left( \rmax + \frac{\gamma}{2} \vmax \right) + \pmb{d}(\histories_\unk) \vmax \right) .
    \end{align}
\end{proof}

\subsection{Proof of the telescoping performance bound for countably infinite states}
\label{app:telescoping-perf-diff-proof}

\begin{lemma}[Discounted Return]
    \label{lemm:discounted-return}
    Consider any start state distribution $\pmb{d}_0 \in \probsimplex{\infty}$ and let $\pmb{d}^\pi \in \probsimplex{\infty}$ be the $\gamma$-discounted state distribution induced by the start state distribution $\pmb{d}_0$ and the policy $\pi$ for a natural number MDP $M^\natnum = \langle \natnum, \actionspace, p_\natnum, r_\natnum, \gamma \rangle$.
    Then,
    \begin{equation}
        J(\pi) = \frac{1}{1 - \gamma} \expec{ (s,a) \sim \pmb{d}^\pi, r \sim r_\natnum(s,a) }{ r }
    \end{equation}
\end{lemma}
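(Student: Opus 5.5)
We prove \cref{lemm:discounted-return} by expanding the definition of the discounted return in \cref{eq:discounted-return-pi}. We exchange expectation and summation, and then regroup the time-indexed terms using the definition of the normalized state-action SR in \cref{eq:sr-state-action-def}.

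Write $r_t$ for the reward received after taking $a_t$ in $s_t$. Using the tower property, $\mathbb{E}[r_t \mid s_t,a_t] = r_\natnum(s_t,a_t)$, so
\begin{equation*}
    J(\pi) = \expec{\pi,\pmb{d}_0}{\sum_{t=1}^\infty \gamma^{t-1} r_t} = \sum_{t=1}^\infty \gamma^{t-1} \expec{\pi,\pmb{d}_0}{ r_\natnum(s_t,a_t) } .
\end{equation*}
Interchanging the expectation and the infinite sum is the one point needing care. Since rewards lie in $[0,\rmax]$, every term is non-negative and the partial sums are dominated by $\rmax/(1-\gamma)$. Monotone (or dominated) convergence therefore justifies the exchange.

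Next, write each $\expec{\pi,\pmb{d}_0}{ r_\natnum(s_t,a_t) }$ as the sum $\sum_{(i,a) \in \natnum \times \actionspace} \prob{s_t=i,a_t=a}\, r_\natnum(i,a)$. This gives a double series over $t$ and over the countable index set of state-action pairs. All terms are non-negative, so Tonelli's theorem for series lets us swap the order of summation:
\begin{equation*}
    J(\pi) = \sum_{(i,a)} r_\natnum(i,a) \sum_{t=1}^\infty \gamma^{t-1} \prob{s_t=i,a_t=a} = \frac{1}{1-\gamma} \sum_{(i,a)} \pmb{d}^\pi(k(i,a))\, r_\natnum(i,a) .
\end{equation*}
The last equality is exactly \cref{eq:sr-state-action-def}, once $\expec{}{\ind{s_t=i,a_t=a}} = \prob{s_t=i,a_t=a}$ is used. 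It remains to note that $\pmb{d}^\pi \in \probsimplex{\infty}$, since its entries sum to $(1-\gamma)\sum_t \gamma^{t-1} = 1$. The final sum is then the expectation of $r$ with $(s,a)\sim\pmb{d}^\pi$ and $r \sim r_\natnum(s,a)$, which is the claim.

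The main obstacle is only the rigorous justification of the two interchanges of infinite sums and expectations on the countably infinite state-action space. Non-negativity and boundedness of rewards handle both. If rewards were allowed to be signed, I would instead invoke Fubini with absolute summability, which is bounded by $\rmax/(1-\gamma)$.
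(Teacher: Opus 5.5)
Your proposal is correct and follows essentially the same route as the paper: expand $J(\pi)$, move the expectation inside the discounted sum, and regroup the time-indexed terms into the discounted occupancy $(1-\gamma)\sum_{t}\gamma^{t-1}\prob{s_t=i,a_t=a}$, which the paper writes as $(1-\gamma)\sum_t \gamma^{t-1}\pmb{d}_t(s)\pi(a|s)$. The only difference is that you explicitly justify the interchanges of sums and expectations (monotone convergence and Tonelli, using non-negative bounded rewards), whereas the paper performs them without comment.
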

\begin{proof}[Proof of~\cref{lemm:discounted-return}]
    First, we observe that 
    \begin{equation}
        \pmb{d}^\pi(s,a) = (1-\gamma) \sum_{t=1}^\infty \gamma^{t-1} \pmb{d}_t(s) \pi(a|s).
    \end{equation}
    Then, the total return is then
    \begin{align}
        J(\pi) 
        &= \expec{\pmb{d}_0, \pi}{ \sum_{t=1}^\infty \gamma^{t - 1} r_t } \\
        &= \expec{\pmb{d}_0, \pi}{ \sum_{t=1}^\infty \gamma^{t - 1} \expec{r \sim r_\natnum(s_t,a_t)}{r} } \\
        &= \frac{1 - \gamma}{1 - \gamma} \sum_{t=1}^\infty \gamma^{t - 1} \expec{s_t \sim \pmb{d}_t, a_t \sim \pi}{ \expec{r \sim r_\natnum(s_t,a_t)}{r} } 
        = \frac{1}{1 - \gamma} \expec{(s,a) \sim \pmb{d}, r \sim r_\natnum(s_t,a_t)}{ r } .
    \end{align}
\end{proof}

\begin{lemma}[Operator Difference]
    \label{lemm:operator-difference}
    For a natural number MDP $M^\natnum$ with state-action bijection $k: \natnum \times \actionspace \to \natnum$, we have $\forall \pmb{f} \in \realnum^\infty$ and start distributions $\pmb{d}_0 \in \probsimplex{\infty}$ that
    \begin{equation}
        \expec{ s \sim \pmb{d}_0, a \sim \pi }{\pmb{f}(k(s,a))} - J(\pi) = \frac{1}{1 - \gamma} \expec{(s,a) \sim \pmb{d}^\pi}{ (\pmb{f} - \bellmanopinf^\pi_{M^\natnum} \pmb{f}) (k(s,a)) }.
    \end{equation}
\end{lemma}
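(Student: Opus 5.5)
The plan is to prove the identity with a telescoping argument over the discounted visitation distributions, mirroring the finite-state proof of~\citet[Theorem 2]{xie2020q}. We pass to infinite horizons using the boundedness of $\pmb{f}$, in the same way as in the proof of~\cref{lemm:mrp-value-loss}. Throughout we assume $\norm{\pmb{f}}_\infty < \infty$, so that every infinite sum below converges absolutely. Let $\pmb{d}_t$ denote the state distribution at time $t$ (with $\pmb{d}_1 = \pmb{d}_0$ in the indexing of~\cref{eq:sr-state-action-def}), and write $\nu_t \defeq \expec{s\sim\pmb{d}_t, a\sim\pi}{\pmb{f}(k(s,a))}$.

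\textbf{Step 1 (the Bellman operator under the visitation distribution).} By definition~\cref{eq:bellmanopinf}, with $\nu(\pmb{f},\pi)(s')=\expec{a'\sim\pi}{\pmb{f}(k(s',a'))}$ for a stochastic $\pi$,
\begin{equation*}
\expec{s\sim\pmb{d}_t,a\sim\pi}{\bellmanopinf^\pi_{M^\natnum}\pmb{f}(k(s,a))} = \expec{s\sim\pmb{d}_t,a\sim\pi}{r_\natnum(s,a)} + \gamma\, \nu_{t+1}.
\end{equation*}
The second term arises because $\sum_s \pmb{d}_t(s)\sum_a\pi(a|s)\pmb{P}_\natnum(k(s,a))$ equals $\pmb{d}_{t+1}$. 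This is the same calculation as in~\cref{lemm:weighted-l1-norm-stochatic-matrix}, and Fubini/Tonelli justifies exchanging the sums since $\pmb{f}$ is bounded.

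\textbf{Step 2 (telescoping).} Multiply the identity from Step 1 by $\gamma^{t-1}$ and sum over $t=1,\dots,T$ to obtain
\begin{equation*}
\sum_{t=1}^T \gamma^{t-1}\expec{\pmb{d}_t,\pi}{(\pmb{f}-\bellmanopinf^\pi_{M^\natnum}\pmb{f})(k(s,a))} = \nu_1 - \gamma^T \nu_{T+1} - \sum_{t=1}^T\gamma^{t-1}\expec{\pmb{d}_t,\pi}{r_\natnum(s,a)}.
\end{equation*}
Here $\nu_1=\expec{s\sim\pmb{d}_0,a\sim\pi}{\pmb{f}(k(s,a))}$.

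\textbf{Step 3 (limit).} Let $T\to\infty$. The remainder satisfies $|\gamma^T\nu_{T+1}|\le\gamma^T\norm{\pmb{f}}_\infty\to0$. The reward sum converges to $J(\pi)$ by~\cref{lemm:discounted-return}. The left-hand side converges to $\frac{1}{1-\gamma}\expec{(s,a)\sim\pmb{d}^\pi}{(\pmb{f}-\bellmanopinf^\pi_{M^\natnum}\pmb{f})(k(s,a))}$ by the definition~\cref{eq:sr-state-action-def} of the normalized state-action SR; this uses dominated convergence with the summable bound $\gamma^{t-1}(\norm{\pmb{f}}_\infty(1+\gamma)+\rmax)$.

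The main obstacle is the careful justification of exchanging infinite sums over states, actions, and time in infinite dimensions, together with the need for $\pmb{f}$ to be bounded (or at least for $\bellmanopinf^\pi\pmb{f}$ to be well defined). I will state the boundedness assumption explicitly and handle the exchanges via absolute convergence.
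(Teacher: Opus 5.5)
Your proposal is correct and uses the same telescoping argument as the paper. The paper writes $\frac{1}{1-\gamma}\expec{(s,a)\sim\pmb{d}^\pi}{(\pmb{f}-\bellmanopinf^\pi_{M^\natnum}\pmb{f})(k(s,a))}$ as $\sum_{t\ge 1}\gamma^{t-1}\expec{}{\pmb{f}(k(s_t,a_t)) - r_t - \gamma\pmb{f}(k(s_{t+1},a_{t+1}))}$, cancels the index-shifted $\pmb{f}$-terms directly in the infinite series, and is left with $\expec{}{\pmb{f}(k(s_1,a_1))} - J(\pi)$. Your finite-$T$ truncation with the vanishing remainder $\gamma^T\nu_{T+1}$, together with your explicit assumption $\norm{\pmb{f}}_\infty<\infty$, makes rigorous what the paper uses implicitly when it splits the series into separately convergent sums. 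This is a necessary restriction rather than a gap: on a deterministic right-moving chain started at state $1$ with $\pmb{f}(k(s,a))=\gamma^{-s}$, the remainder equals $\gamma^{-1}$ for every $T$, so the identity as stated for all $\pmb{f}\in\realnum^\infty$ fails.
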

\begin{proof}[Proof of~\cref{lemm:operator-difference}]
  
    \begin{align}
        & \frac{1}{1 - \gamma} \expec{(s,a) \sim \pmb{d}^\pi}{ (\pmb{f} - \bellmanopinf^\pi_{M^\natnum} \pmb{f}) (k(s,a)) } \nonumber \\
        &=\sum_{t=1}^\infty \gamma^{t-1} \expec{}{ (\pmb{f} - \bellmanopinf^\pi_{M^\natnum} \pmb{f}) (k(s_t,a_t)) }\\
        &=\sum_{t=1}^\infty \gamma^{t-1} \expec{}{ \pmb{f} (k(s_t,a_t)) - r_t - \gamma \pmb{f} (k(s_{t+1},a_{t+1}))}\\
        &=\sum_{t=1}^\infty \gamma^{t-1} \expec{}{ \pmb{f} (k(s_t,a_t))} -\sum_{t=1}^\infty \gamma^{t-1} \expec{}{r_t} - \sum_{t=1}^\infty \gamma^{t-1} \expec{}{\gamma \pmb{f} (k(s_{t+1},a_{t+1}))}\\
        &=\sum_{t=1}^\infty \gamma^{t-1} \expec{}{ \pmb{f} (k(s_t,a_t))}  - \sum_{t=2}^\infty \gamma^{t-1} \expec{}{\gamma \pmb{f} (k(s_{t},a_{t}))}-\sum_{t=1}^\infty \gamma^{t-1} \expec{}{r_t}\\
         &= \expec{}{ \pmb{f} (k(s_1,a_1))} -\sum_{t=1}^\infty \gamma^{t-1} \expec{}{r_t}
    \end{align}
    Then by using the definition of discounted sum and $ s_1 \sim \pmb{d}_0, a_1 \sim \pi(\cdot|s_1) $ we can write,
    \begin{equation}
        \expec{}{ \pmb{f} (k(s_1,a_1))}=\expec{ s \sim \pmb{d}_0, a \sim \pi }{\pmb{f}(k(s,a))}.
    \end{equation}
    Combining these, we finally have,
    \begin{equation}
        \frac{1}{1 - \gamma} \expec{(s,a) \sim \pmb{d}^\pi}{ (\pmb{f} - \bellmanopinf^\pi_{M^\natnum} \pmb{f}) (k(s,a)) }=  \expec{ s \sim \pmb{d}_0, a \sim \pi }{\pmb{f}(k(s,a))} - J(\pi).
    \end{equation}
\end{proof}

\begin{lemma}[Telescoping Performance Difference]
    \label{lemm:policy-perf-diff}
    For any vector $\pmb{f} \in \realnum^\infty$, state-action bijection $k: \natnum \times \actionspace \to \natnum$, and start state distribution $\pmb{d}_0 \in \probsimplex{\infty}$, we have for two policies $\pi$ and $\pi_{\pmb{f}}$ that
    \begin{align*}
        J(\pi, \pmb{d}_0) - J(\pi_{\pmb{f}}, \pmb{d}_0) 
        \le \frac{1}{1 - \gamma} \Big( & \expec{ (s,a) \sim \pmb{d}^{\pi} }{ (\bellmanopinf^{\pi_{\pmb{f}}}_{M^\natnum} \pmb{f} - \pmb{f})(k(s,a)) } \\
                                       & - \expec{ (s,a) \sim \pmb{d}^{\pi_{\pmb{f}}} }{ (\pmb{f} - \bellmanopinf^{\pi_{\pmb{f}}}_{M^\natnum} \pmb{f})(k(s,a)) } \Big),
    \end{align*}
    where $\pi_{ \pmb{f} }(s) = \arg \max_a \pmb{f}(k(s,a))$.
    Moreover, we have that
    \begin{equation}
        J(\pi, \pmb{d}_0) - J(\pi_{\pmb{f}}, \pmb{d}_0)  
        \le \frac{1}{1 - \gamma}
        \Big(
              \norm{ \pmb{f} - \bellmanopinf^{\pi_{\pmb{f}}}_{M^\natnum} \pmb{f} }_{\pmb{d}^\pi}
            + \norm{ \pmb{f} - \bellmanopinf^{\pi_{\pmb{f}}}_{M^\natnum} \pmb{f} }_{\pmb{d}^{\pi_{\pmb{f}}}}
        \Big)
    \end{equation}
\end{lemma}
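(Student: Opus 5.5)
The argument reduces to applying \cref{lemm:operator-difference} twice, once for $\pi$ and once for $\pi_{\pmb{f}}$, and using the greedy property of $\pi_{\pmb{f}}$ to relate the two start-state terms.

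\textbf{Decomposition.} Insert and subtract $\expec{s\sim\pmb{d}_0}{\max_a \pmb{f}(k(s,a))}$, which equals $\expec{s \sim \pmb{d}_0, a\sim\pi_{\pmb{f}}}{\pmb{f}(k(s,a))}$ since $\pi_{\pmb{f}}$ is greedy. This gives
\begin{equation*}
J(\pi) - J(\pi_{\pmb{f}}) = \Big(J(\pi) - \expec{\pmb{d}_0,\pi_{\pmb{f}}}{\pmb{f}}\Big) + \Big(\expec{\pmb{d}_0,\pi_{\pmb{f}}}{\pmb{f}} - J(\pi_{\pmb{f}})\Big).
\end{equation*}

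\textbf{Second bracket.} \Cref{lemm:operator-difference} with policy $\pi_{\pmb{f}}$ gives exactly $\frac{1}{1-\gamma}\expec{\pmb{d}^{\pi_{\pmb{f}}}}{(\pmb{f}-\bellmanopinf^{\pi_{\pmb{f}}}\pmb{f})}$. (The displayed lemma has a minus sign before this term. Its sign convention appears inconsistent, but the norm version is unaffected, and I would state the signed version accordingly.)

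\textbf{First bracket.} Apply \cref{lemm:operator-difference} with policy $\pi$, rearranged as
\begin{equation*}
J(\pi) = \expec{\pmb{d}_0,\pi}{\pmb{f}} - \frac{1}{1-\gamma}\expec{\pmb{d}^\pi}{(\pmb{f}-\bellmanopinf^\pi\pmb{f})}.
\end{equation*}
The lemma then has to be used with $\bellmanopinf^{\pi}$ rather than $\bellmanopinf^{\pi_{\pmb{f}}}$, and this mismatch is the main obstacle. I would resolve it pointwise with greedy dominance:
\begin{equation*}
\bellmanopinf^{\pi}\pmb{f}(k(s,a)) = r + \gamma\sum_{s'} P(s'|s,a)\,\pmb{f}(k(s',\pi(s'))) \le \bellmanopinf^{\pi_{\pmb{f}}}\pmb{f}(k(s,a)),
\end{equation*}
because $\pmb{f}(k(s',\pi(s'))) \le \max_a \pmb{f}(k(s',a))$. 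For stochastic $\pi$ the same holds with an average over actions.

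Alternatively, and more cleanly, telescope directly along trajectories of $\pi$. Write $\sum_t \gamma^{t-1}\big(r_t + \gamma \max_a \pmb{f}(s_{t+1},a) - \max_a \pmb{f}(s_t,a)\big)$, and bound each term by $r_t + \gamma \max_a \pmb{f}(s_{t+1},a) - \pmb{f}(s_t,a_t) = (\bellmanopinf^{\pi_{\pmb{f}}}\pmb{f}-\pmb{f})(k(s_t,a_t))$, again using $\pmb{f}(s_t,a_t) \le \max_a \pmb{f}(s_t,a)$. Summing gives
\begin{equation*}
J(\pi) - \expec{\pmb{d}_0}{\max_a \pmb{f}} \le \frac{1}{1-\gamma}\expec{\pmb{d}^\pi}{(\bellmanopinf^{\pi_{\pmb{f}}}\pmb{f}-\pmb{f})}.
\end{equation*}
Justifying this interchange of sum and expectation on countably infinite spaces requires $\pmb{f}$ bounded and $\gamma<1$, so the series converges absolutely. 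I would state that assumption explicitly; it mirrors the argument in \cref{lemm:operator-difference}.

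\textbf{Norm version.} Bound each expectation by the expectation of the absolute value. Since $\pmb{d}^\pi, \pmb{d}^{\pi_{\pmb{f}}} \in \probsimplex{\infty}$, each such expectation is precisely the weighted $L_1$ norm of \cref{def:weighted-L1-norm}, which yields the final inequality.
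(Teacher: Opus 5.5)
Your proposal is correct and follows the paper's proof. Both insert a start-state term, apply \cref{lemm:operator-difference} once for $\pi$ and once for $\pi_{\pmb{f}}$, use greediness of $\pi_{\pmb{f}}$ at the start state and to replace $\bellmanopinf^{\pi}_{M^\natnum}\pmb{f}$ by $\bellmanopinf^{\pi_{\pmb{f}}}_{M^\natnum}\pmb{f}$, and then bound the expectations by weighted $L_1$ norms.

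You are also right about the sign. The paper's own derivation ends with a plus sign in front of the $\pmb{d}^{\pi_{\pmb{f}}}$ term, so the minus in the displayed statement is a typo. The minus version fails, for example, for $\pi=\pi_{\pmb{f}}$ and a large constant $\pmb{f}$, while the norm version is unaffected.
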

\begin{proof}[Proof of~\cref{lemm:policy-perf-diff}]
    To simplify the notation, we omit the start state distribution $\pmb{d}_0$ form the $J(\cdot)$ function and observe
    \begin{align}
        J(\pi) - J(\pi_{\pmb{f}}) 
        &= J(\pi) - \expec{s \sim \pmb{d}_0, a \sim \pi}{ \pmb{f}(k(s,a)) } + \expec{s \sim \pmb{d}_0, a \sim \pi}{ \pmb{f}(k(s,a)) } - J(\pi_{\pmb{f}}) \\
        &\le J(\pi) - \expec{s \sim \pmb{d}_0, a \sim \pi}{ \pmb{f}(k(s,a)) } + \expec{s \sim \pmb{d}_0, a \sim \pi_{\pmb{f}}}{ \pmb{f}(k(s,a)) } - J(\pi_{\pmb{f}}) \label{eq:perf-diff-proof-1}
    \end{align}
    because $\pi_{\pmb{f}}(s) = \arg \max_{a} \pmb{f}(k(s,a))$ and therefore 
    \begin{equation}
        \expec{s \sim \pmb{d}_0, a \sim \pi}{ \pmb{f}(k(s,a)) } \le \expec{s \sim \pmb{d}_0, a \sim \pi_{\pmb{f}}}{ \pmb{f}(k(s,a)) }.
    \end{equation}
    We apply~\cref{lemm:operator-difference} twice for each term in~\cref{eq:perf-diff-proof-1} and obtain
    \begin{align}
        J(\pi) - J(\pi_{\pmb{f}}) 
        \le & \frac{1}{1 - \gamma} \expec{(s,a) \sim \pmb{d}^\pi}{ (\bellmanopinf^\pi_{M^\natnum} \pmb{f} - \pmb{f})(k(s,a)) } \nonumber \\
            & + \frac{1}{1 - \gamma} \expec{(s,a) \sim \pmb{d}^{\pi_{\pmb{f}}}}{ (\pmb{f} - \bellmanopinf^{\pi_{\pmb{f}}}_{M^\natnum} \pmb{f})(k(s,a)) } .
    \end{align}
    Using again the fact that $\pi_{ \pmb{f} }(s) = \arg \max_a \pmb{f}(k(s,a))$, we observe that
    \begin{equation}
        \bellmanopinf^{ \pi }_{M^\natnum} \pmb{f} \le \bellmanopinf^{ \pi_{\pmb{f}} }_{M^\natnum} \pmb{f}
    \end{equation}
    and therefore
    \begin{align}
        J(\pi) - J(\pi_{\pmb{f}}) 
        \le & \frac{1}{1 - \gamma} \expec{(s,a) \sim \pmb{d}^\pi}{ (\bellmanopinf^{\pi_{\pmb{f}}}_{M^\natnum} \pmb{f} - \pmb{f})(k(s,a)) } \nonumber \\
            & + \frac{1}{1 - \gamma} \expec{(s,a) \sim \pmb{d}^{\pi_{\pmb{f}}}}{ (\pmb{f} - \bellmanopinf^{\pi_{\pmb{f}}}_{M^\natnum} \pmb{f})(k(s,a)) } . \label{eq:perf-diff-proof-2}
    \end{align}
    Applying the distribution weighted L1 norm to the expected values in~\cref{eq:perf-diff-proof-2} leads to the second bound:
    \begin{equation}
        J(\pi) - J(\pi_{\pmb{f}}) 
        \le \frac{1}{1 - \gamma}
        \Big(
              \norm{ \pmb{f} - \bellmanopinf^{\pi_{\pmb{f}}}_{M^\natnum} \pmb{f} }_{\pmb{d}^\pi}
            + \norm{ \pmb{f} - \bellmanopinf^{\pi_{\pmb{f}}}_{M^\natnum} \pmb{f} }_{\pmb{d}^{\pi_{\pmb{f}}}}
        \Big).
    \end{equation}
\end{proof}

\subsection{Projection operators and model reductions}
\label{app:projection-operators-and-model-reduction}

Since the history MDP $M^\histories$ is bisimilar to a natural-number MDP $M^\natnum$, we define a state abstraction as a function $\phi: \natnum \to \statespace^\phi$.
The definitions presented in this section easily generalize to history MDPs and infinite state POMDPs.
To connect quantities defined in the abstract MDP with those in the natural-number state space, we introduce up- and down-projection operators.

The \emph{down-projection operators} $\Phi^{\downarrow}$ map vectors and matrices from the abstract MDP to the natural number MDP.
They are defined as
\begin{align}
    \forall_{(i,a) \in \natnum \times \actionspace},&& \left( {\Phi^\downarrow} \pmb{v}_\phi \right)(k(i,a)) &= \pmb{v}_\phi( \phi(i),a ) &\text{and}&& \left( {\Phi^\downarrow} \pmb{v}_\phi \right) &\in \realnum^\infty, \label{eq:down-projection-vec} \\
    \forall_{(i,a,s_\phi) \in \natnum \times \actionspace \times \statespace^\phi},&& \left( {\Phi^\downarrow} \pmb{P}_\phi \right)(k(i,a),s_\phi) &= \pmb{P}_\phi( (\phi(i),a),s_\phi ) &\text{and}&& \left( {\Phi^\downarrow} \pmb{P}_\phi \right) &\in \realnum^{\infty \times \abs{\statespace^\phi}}. \label{eq:down-projection-mat} 
\end{align}
Although we use the same symbol $\Phi^{\downarrow}$ in \cref{eq:down-projection-vec,eq:down-projection-mat} to improve readability, the down-projection operators are defined differently for vectors and matrices.

The \emph{up-projection operators} $\Phi^{\uparrow}$ map an infinite-dimensional probability vector $\pmb{p}$ or transition matrix $\pmb{P}$ from the natural number state space (countably infinite ground state space) into the abstract state space.
They are defined as
\begin{align}
    \forall_{(s_\phi,a) \in \statespace^\phi \times \actionspace},&& \left( {\Phi^\uparrow} \pmb{p} \right)( s_\phi ) &= \sum_{i | \phi(i) = s_\phi } \pmb{p}(k(i,a)) &\text{and}&& \left( {\Phi^\uparrow} \pmb{p} \right) &\in \realnum^{\abs{\statespace^\phi}}, \label{eq:up-projection-vec} \\
    \forall_{(i,s_\phi) \in \natnum \times \statespace^\phi},&& \left( {\Phi^\uparrow} \pmb{P} \right)( i, s_\phi ) &= \sum_{j | \phi(j) = s_\phi } \pmb{P}(i,j) &\text{and}&& \left( {\Phi^\uparrow} \pmb{P} \right) &\in \realnum^{\infty \times \abs{\statespace^\phi}}. \label{eq:up-projection-mat}
\end{align}
As before,~\Cref{eq:up-projection-vec,eq:up-projection-mat} use the same symbol $\Phi^{\uparrow}$ to improve readability and note that the up-projection operators are defined differently for vectors and matrices.
Depending on the abstraction function $\phi$, we note that the summations in~\cref{eq:up-projection-vec,eq:up-projection-mat} may be an infinite series that may not converge.
Here, we restrict the definition of the up projection to probability vectors and stochastic matrices.
Thus, the sum of all entries in~\cref{eq:up-projection-vec,eq:up-projection-mat} is upper bounded by one.

Using these projection operators, we can generalize the approximate model reduction definitions presented by~\citet{abel2016OptimalBehaviorApproximate} to countably infinite states.
An approximate model reduction preserves one-step rewards and transitions as much as possible: 
Suppose one-step rewards are stored in a state-action indexed vector $\pmb{r} \in \realnum^\infty$, and the abstract MDP's one step rewards are stored in a vector $\pmb{r}_\phi \in \realnum^{ \abs{\statespace^\phi \times \actionspace} }$.
If for every state-action pair $(i,a)$ 
\begin{equation}
    \pmb{r}(k(i,a)) \approx (\Phi^{\downarrow} \pmb{r}_\phi)(k(i,a)) = \pmb{r}_\phi(\phi(i),a), \label{eq:approx-model-reduction-reward}
\end{equation}
then we can use the abstraction function $\phi$ and abstract rewards $\pmb{r}_\phi$ to predict one-step rewards in the original task.
In this case, the state abstraction $\phi$ is said to approximately preserve one-step rewards.
Transition probabilities are preserved by ensuring that the (ground) state to state partition transition probability approximately matches the corresponding abstract state to abstract state transition probabilities~\citep{givan2003EquivalenceNotionsModel,li2006towards,abel2016OptimalBehaviorApproximate}.
Specifically, if the infinite-dimensional transition matrix of the natural number MDP $M^\natnum$ is $\pmb{P} \in \realnum^{\infty \times \infty}$ and the finite-dimensional transition matrix of the abstract MDP $M^\phi$ is $\pmb{P}_\phi \in \realnum^{ \abs{\statespace^\phi \times \actionspace} \times \abs{\statespace^\phi} }$, then
\begin{equation}
    \underbrace{
        (\Phi^{\uparrow} \pmb{P})(k(i,a),s_\phi) =  \sum_{j | \phi(j) = s_\phi } \pmb{P}(k(i,a),j)
    }_{\text{state to state partition transition prob.}}
    \approx 
    \underbrace{
        \vphantom{\sum_{j | \phi(j) = s_\phi }} (\Phi^{\downarrow} \pmb{P}_\phi)(k(i,a),s_\phi) = \pmb{P}_\phi((\phi(i),a),s_\phi) .
    }_{\text{abstract state to abstract state transition prob.}}
    \label{eq:approx-model-reduction-transition}
\end{equation}
A successor-weighted model reduction is then constructed such that the differences between the left and right sides in~\cref{eq:approx-model-reduction-reward,eq:approx-model-reduction-transition} are bounded as much as possible with respect to some normalized state-action SR, as stated in~\cref{def:successor-weighted-model-reduction}.

\subsection{Proof of OOD generalization bound}
\label{app:ood-pomdp-thm}

\begin{proposition}[Associativity of Infinite Abstraction Projections]
    \label{prop:infinite-abstract-projection-associativity}
    Consider an abstraction function $\phi: \natnum \to \statespace^\phi$ mapping the natural number to a finite set of abstract states and two arbitrary vectors $\pmb{v},\pmb{v}' \in \realnum^\infty$ to a vector $\pmb{v}_\phi, \pmb{v}_\phi' \in \realnum^{\abs{\statespace^\phi}}$ and an arbitrary scalar $\alpha \in \realnum$.
    Then,
    \begin{enumerate}[nosep,topsep=0pt,leftmargin=2em]
        \item $\alpha {\Phi^\downarrow} \pmb{v}_\phi = {\Phi^\downarrow} (\alpha \pmb{v}_\phi)$,
        \item $\alpha {\Phi^\uparrow} \pmb{v} = {\Phi^\uparrow} ( \alpha \pmb{v} )$,
        \item ${\Phi^\downarrow} \pmb{v}_\phi + {\Phi^\downarrow} \pmb{v}_\phi' = {\Phi^\downarrow} (\pmb{v}_\phi + \pmb{v}_\phi')$, and
        \item ${\Phi^\uparrow} \pmb{v} + {\Phi^\uparrow} \pmb{v}' = {\Phi^\uparrow} (\pmb{v} + \pmb{v}')$.
    \end{enumerate}
\end{proposition}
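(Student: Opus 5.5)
The plan is to verify each of the four identities entrywise, directly from the definitions of the down-projection (\cref{eq:down-projection-vec}) and up-projection (\cref{eq:up-projection-vec}) operators. Two vectors or matrices are equal exactly when they agree at every index. So for the down-projection identities we will fix an arbitrary state-action pair $(i,a) \in \natnum \times \actionspace$ and compare entries at index $k(i,a)$. For the up-projection identities we will fix an arbitrary abstract index (and action, as in the definition) and compare the defining sums.

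For items 1 and 3 the argument is pure evaluation. By definition, $(\Phi^\downarrow(\alpha \pmb{v}_\phi))(k(i,a)) = (\alpha\pmb{v}_\phi)(\phi(i),a) = \alpha\, \pmb{v}_\phi(\phi(i),a) = \alpha (\Phi^\downarrow \pmb{v}_\phi)(k(i,a))$. Likewise, $(\Phi^\downarrow(\pmb{v}_\phi+\pmb{v}_\phi'))(k(i,a)) = \pmb{v}_\phi(\phi(i),a)+\pmb{v}_\phi'(\phi(i),a)$, which is the sum of the two down-projected entries. The matrix version (\cref{eq:down-projection-mat}) follows identically, column by column in $s_\phi$.

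Items 2 and 4 involve the sum $\sum_{i \mid \phi(i)=s_\phi} \pmb{v}(k(i,a))$, which is possibly an infinite series. Scaling by $\alpha$ and splitting the sum of two series into two series are valid only when the series converge. Here we use the standing restriction stated after \cref{eq:up-projection-mat}: $\Phi^\uparrow$ is applied only to probability vectors, or more generally to vectors with $\norm{\pmb{v}}_1<\infty$, and to stochastic matrices. Then each series converges absolutely, since it is dominated by $\norm{\pmb{v}}_1$. Absolute convergence also makes the sum independent of how the preimage $\phi^{-1}(s_\phi)$ is enumerated. With that in hand, term-by-term linearity of convergent series gives $\sum \alpha\pmb{v} = \alpha\sum\pmb{v}$ and $\sum(\pmb{v}+\pmb{v}') = \sum\pmb{v}+\sum\pmb{v}'$. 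The matrix case applies the same argument to each row.

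The main obstacle is exactly this convergence and well-definedness issue for infinite preimages in items 2 and 4. Items 1 and 3 are trivial. I would handle it by stating explicitly that the vectors are assumed summable (as the paper's restriction implies). Note that $\alpha\pmb{v}$ and $\pmb{v}+\pmb{v}'$ need not be probability vectors, but they remain summable, so the operator is still well defined on them.
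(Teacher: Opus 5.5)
Your proposal is correct and follows the same route as the paper, whose proof says only that each identity follows by algebra from the defining equations of $\Phi^\downarrow$ and $\Phi^\uparrow$; that is exactly the entrywise check you carry out. Your extra care about absolute convergence of the preimage sums in items 2 and 4 fills in a detail the paper omits, and it is genuinely needed: you restrict to summable vectors, a class that $\alpha\pmb{v}$ and $\pmb{v}+\pmb{v}'$ stay in, whereas for an arbitrary $\pmb{v}\in\realnum^\infty$ the up-projection need not even be defined.
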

\begin{proof}
    The proof for each property follows by using algebra and~\cref{eq:down-projection-vec,eq:up-projection-vec}.
\end{proof}

\begin{lemma}
    \label{lemm:norm-and-projection}
    For a state abstraction $\phi: \natnum \to \statespace^\phi$ with a finite abstract state space,
    \begin{equation}
        \forall_{\pmb{v},\pmb{v}' \in \realnum^{\abs{\statespace^\phi}}}, ~ \forall_{\pmb{p} \in \probsimplex{\infty}}, ~ \norm{ {\Phi^\downarrow} \pmb{v} - {\Phi^\downarrow} \pmb{v}' }_{ \pmb{p} } = \norm{ \pmb{v} - \pmb{v}' }_{ {\Phi^\uparrow} \pmb{p} }.
    \end{equation}
\end{lemma}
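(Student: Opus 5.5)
The plan is to unfold both sides with the definitions of the weighted $L_1$ norm (\cref{def:weighted-L1-norm}), the down-projection (\cref{eq:down-projection-vec}) and the up-projection (\cref{eq:up-projection-vec}), and then regroup the infinite sum over ground state--action indices by abstract state--action pairs. Here $\pmb{v},\pmb{v}'$ are abstract vectors indexed by $\statespace^\phi\times\actionspace$, and $\pmb{p}$ is a state--action indexed probability vector (indexed through $k$). Accordingly, I read $({\Phi^\uparrow}\pmb{p})(s_\phi,a)=\sum_{i\mid\phi(i)=s_\phi}\pmb{p}(k(i,a))$, keeping the action index that \cref{eq:up-projection-vec} binds.

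First, linearity (\cref{prop:infinite-abstract-projection-associativity}) gives ${\Phi^\downarrow}\pmb{v}-{\Phi^\downarrow}\pmb{v}'={\Phi^\downarrow}(\pmb{v}-\pmb{v}')$. Set $\pmb{w}=\pmb{v}-\pmb{v}'$. Because $k$ is a bijection, the left-hand side becomes
\begin{equation*}
\norm{{\Phi^\downarrow}\pmb{w}}_{\pmb{p}}=\sum_{(i,a)\in\natnum\times\actionspace}\pmb{p}(k(i,a))\,\abs{\pmb{w}(\phi(i),a)} .
\end{equation*}
Every term is nonnegative and the series is dominated by $\max\abs{\pmb{w}}\cdot\sum\pmb{p}=\max\abs{\pmb{w}}<\infty$; the maximum exists because $\statespace^\phi\times\actionspace$ is finite. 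The series therefore converges absolutely, and Tonelli's theorem (rearrangement of nonnegative series) allows us to partition the index set into the finitely many blocks $\{(i,a):\phi(i)=s_\phi\}$, one for each $(s_\phi,a)$.

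Within a block the factor $\abs{\pmb{w}(s_\phi,a)}$ is constant, so it pulls out. The sum becomes
\begin{equation*}
\sum_{(s_\phi,a)}\abs{\pmb{w}(s_\phi,a)}\sum_{i\mid\phi(i)=s_\phi}\pmb{p}(k(i,a))=\sum_{(s_\phi,a)}({\Phi^\uparrow}\pmb{p})(s_\phi,a)\,\abs{\pmb{w}(s_\phi,a)} .
\end{equation*}
The right-hand side is exactly $\norm{\pmb{w}}_{{\Phi^\uparrow}\pmb{p}}$, where ${\Phi^\uparrow}\pmb{p}$ is a probability vector on $\statespace^\phi\times\actionspace$ since its entries sum to $\sum\pmb{p}=1$.

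The main obstacle is not computational. It is making the index bookkeeping consistent, since \cref{eq:up-projection-vec} is written as if ${\Phi^\uparrow}\pmb{p}$ were indexed by $s_\phi$ alone. I will state the state--action indexing explicitly and justify the regrouping of the infinite series by nonnegativity and absolute convergence.
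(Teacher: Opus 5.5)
Your proof is correct and takes the same route as the paper's: unfold the weighted norm and the down-projection, regroup the series into the blocks $\{i : \phi(i) = s_\phi\}$, pull out the factor $\abs{\pmb{v}-\pmb{v}'}$ that is constant on each block, and recognize the inner sums as the entries of ${\Phi^\uparrow}\pmb{p}$. The paper works at the state level, writing $\pmb{v}(\phi(i))$ and $\pmb{p}(i)$, and does not justify the regrouping; your explicit state--action indexing through $k$ and your nonnegativity argument for rearranging the series add care to the same method rather than changing it.
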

\begin{proof}[Proof of~\cref{lemm:norm-and-projection}]
    Consider a state abstraction $\phi: \natnum \to \statespace^\phi$ with $\abs{ \statespace^\phi } \le \infty$, two vectors $\pmb{v},\pmb{v}' \in \realnum^{\abs{\statespace^\phi}}$ and a probability vector $\pmb{p} \in \probsimplex{\infty}$.
    Then,
    \begin{align}
        \norm{ {\Phi^\downarrow} \pmb{v} - {\Phi^\downarrow} \pmb{v}' }_{ \pmb{p} }
        &= \sum_{i=1}^\infty \pmb{p}(i) \abs{ ({\Phi^\downarrow} \pmb{v})(i) - ({\Phi^\downarrow} \pmb{v}')(i) } \\
        &= \sum_{i=1}^\infty \pmb{p}(i) \abs{  \pmb{v}(\phi(i)) -  \pmb{v}'(\phi(i)) } \\
        &= \sum_{s_\phi \in \statespace^\phi} \sum_{i | \phi(i) = s_\phi} \pmb{p}(i) \abs{  \pmb{v}(\phi(i)) -  \pmb{v}'(\phi(i)) } 
        = \norm{ \pmb{v}(s_\phi) -  \pmb{v}'(s_\phi) }_{ {\Phi^\uparrow} \pmb{p} }  .
    \end{align}
\end{proof}

\begin{lemma}[Up--down projection identity]
    \label{lemm:up-down-id}
    Let $\phi : \natnum \to \statespace_\phi$ be a state abstraction with $|\statespace_\phi| \le \infty$, and let $\mathbf{P} \in \realnum^{\infty \times \infty}$. 
    Then for every $i \in \natnum$ and every $\mathbf{x} \in \realnum^{|\statespace_\phi|}$,
    \begin{equation}
        \pmb{P}(i) ({\Phi^\downarrow} \pmb{x})   = ({\Phi^\uparrow}\pmb{P}(i)) \pmb{x}.
    \end{equation}
\end{lemma}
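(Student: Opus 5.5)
The plan is to expand both sides entrywise using the definitions of the down- and up-projection operators in \cref{eq:down-projection-vec,eq:up-projection-mat}, and then regroup the infinite series by abstract state. Fix a row $i \in \natnum$ and $\pmb{x} \in \realnum^{\abs{\statespace_\phi}}$. By definition, $({\Phi^\downarrow}\pmb{x})(j) = \pmb{x}(\phi(j))$, so the left-hand side is the series $\sum_{j=1}^\infty \pmb{P}(i,j)\,\pmb{x}(\phi(j))$. The right-hand side is the finite sum $\sum_{s_\phi \in \statespace_\phi} \big(\sum_{j \mid \phi(j) = s_\phi} \pmb{P}(i,j)\big)\pmb{x}(s_\phi)$. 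The identity therefore amounts to showing that the first series can be rearranged into blocks indexed by the level sets $\phi^{-1}(s_\phi)$, which partition $\natnum$.

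The steps are as follows. First, I would note that the level sets $\{ j : \phi(j) = s_\phi \}$ are disjoint and their union over $s_\phi$ is all of $\natnum$. Second, on each level set $\pmb{x}(\phi(j)) = \pmb{x}(s_\phi)$ is constant, so it can be factored out of the inner sum. Third, I would justify exchanging the order of summation, which is the only delicate point. I would argue absolute convergence: as in the paper's restriction of ${\Phi^\uparrow}$ to stochastic matrices, the row $\pmb{P}(i)$ is a probability vector, so $\sum_j \abs{\pmb{P}(i,j)} = 1$. Moreover, $\abs{\pmb{x}(\phi(j))} \le \norm{\pmb{x}}_\infty < \infty$ because $\pmb{x}$ is a finite-dimensional vector when $\statespace_\phi$ is finite. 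Hence $\sum_j \abs{\pmb{P}(i,j)\pmb{x}(\phi(j))} \le \norm{\pmb{x}}_\infty$, in the spirit of \cref{lemm:L1-Linf-inner-norm}. An absolutely convergent series can be rearranged and partitioned into sub-series arbitrarily without changing its value. Each inner sum $\sum_{j \mid \phi(j)=s_\phi}\pmb{P}(i,j)$ then converges, being bounded by one, and the outer sum over finitely many $s_\phi$ is trivially well defined.

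The main obstacle is this rearrangement step. The lemma is stated for general $\pmb{P} \in \realnum^{\infty\times\infty}$ and allows $|\statespace_\phi| \le \infty$, and in that generality the series need not converge absolutely. I would therefore make explicit the standing restriction already imposed on the up-projection: the rows $\pmb{P}(i)$ are absolutely summable (in particular, rows of stochastic matrices), and $\pmb{x}$ is bounded. If the abstract space were countably infinite, the same argument would go through unchanged, since absolute convergence still permits partitioning $\natnum$ into countably many level sets. Under these conditions the identity follows directly, and it will then be used row by row to convert terms such as $\pmb{P}(i)({\Phi^\downarrow}\pmb{v}_\phi)$ into $({\Phi^\uparrow}\pmb{P})(i)\pmb{v}_\phi$ in the proof of \cref{lemm:performance-loss-succ-weighted-model-reduction}.
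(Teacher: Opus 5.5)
Your proposal is correct and follows the paper's proof. Both arguments expand $\pmb{P}(i)({\Phi^\downarrow}\pmb{x})$ as $\sum_{j}\pmb{P}(i,j)\pmb{x}(\phi(j))$, regroup the series over the level sets $\{j : \phi(j)=s_\phi\}$, and identify the inner sums with $({\Phi^\uparrow}\pmb{P}(i))(s_\phi)$; your absolute-convergence argument (absolutely summable rows, bounded $\pmb{x}$) supplies a justification for the regrouping that the paper leaves implicit, and it is consistent with the paper's restriction of ${\Phi^\uparrow}$ to stochastic matrices.
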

\begin{proof}[Proof of~\cref{lemm:up-down-id}]
    For all $\pmb{x}$, 
    \begin{align}
        \pmb{P}(i) ({\Phi^\downarrow} \pmb{x}) 
        = \sum_{j=1}^\infty \pmb{P}(i,j) \pmb{x} (\phi(j)) 
        = \sum_{s_\phi'} \underbrace{\sum_{j|\phi(j) = s_\phi} \pmb{P}(i,j)}_{={\Phi^\uparrow}\pmb{P}(i)} \pmb{x} (s_\phi') 
        &= \sum_{s_\phi'} ({\Phi^\uparrow}\pmb{P}(i))(s_\phi') \pmb{x} (s_\phi') \\
        &= ({\Phi^\uparrow}\pmb{P}(i)) \pmb{x}.   
    \end{align}
    where the last step follows from~\cref{eq:up-projection-vec}.
\end{proof}

\begin{lemma}[note={Generalization bound for natural number MDPs},store={lemm:gen-bound-natnum-mdp}]
    \label{lemm:gen-bound-natnum-mdp}
    For a state abstraction $\phi: \natnum \to \statespace_\phi$ with $\abs{ \statespace^\phi } < \infty$, consider a natural-number MDP $M^\natnum$ and an abstract MDP $M^\phi$.
    Let $\pmb{g} \in \realnum^{ \abs{\statespace^\phi \times \actionspace} }$ be the fixed point for $\bellmanop^{\pi_{\pmb{g}}}$ for this abstract MDP $M^\phi$.
    We assume that the abstract MDP $M^\phi$ need not be connected to the MDP $M^\natnum$. 
    Then, for any distribution vector $\pmb{d} \in \probsimplex{\infty}$,
    \begin{equation*}
        \norm{ {\Phi^{\downarrow}} \pmb{g} - \bellmanopinf^{\pi_{ {\Phi^{\downarrow}} \pmb{g} }}_{M^\natnum} {\Phi^{\downarrow}} \pmb{g} }_{ \pmb{d} }
        \le \norm{ {\Phi^\downarrow} \pmb{r}_{M^\phi} - \pmb{r}_{M^\natnum} }_{ \pmb{d} } + \frac{\gamma}{2} \vmax \norm{ {\Phi^\downarrow} \pmb{P}_{M^\phi} - {\Phi^\uparrow} \pmb{P}_{M^\natnum} }_{ \pmb{d} } .
    \end{equation*}
\end{lemma}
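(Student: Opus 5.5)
The plan is to expand the Bellman residual pointwise at each state-action index $k(i,a)$, use the fixed-point property of $\pmb{g}$ in $M^\phi$ to cancel the down-projected value, and then bound the remaining reward and transition mismatches separately. Write $\pmb{f} = {\Phi^\downarrow}\pmb{g}$. Since $\pmb{f}(k(i,a)) = \pmb{g}(\phi(i),a)$, the greedy policy $\pi_{\pmb{f}}$ at ground state $i$ chooses the same action as $\pi_{\pmb{g}}$ at $\phi(i)$. Hence the next-state value vector $\nu(\pmb{f},\pi_{\pmb{f}})$ in \cref{eq:bellmanopinf} equals ${\Phi^\downarrow}\pmb{x}$ with $\pmb{x}(s_\phi) = \pmb{g}(s_\phi,\pi_{\pmb{g}}(s_\phi))$, i.e., a state-indexed down-projection. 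This identification is the first step, and it is where ties in the argmax need to be broken consistently, so that $\pi_{\pmb{f}} = \pi_{\pmb{g}}\circ\phi$.

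Next I will use the fixed-point identity $\pmb{g}(s_\phi,a) = \pmb{r}_{M^\phi}(s_\phi,a) + \gamma \pmb{P}_{M^\phi}((s_\phi,a))\,\pmb{x}$. Substituting $s_\phi=\phi(i)$ gives
\begin{equation*}
\pmb{f}(k(i,a)) - \bellmanopinf^{\pi_{\pmb{f}}}_{M^\natnum}\pmb{f}(k(i,a)) = \big({\Phi^\downarrow}\pmb{r}_{M^\phi} - \pmb{r}_{M^\natnum}\big)(k(i,a)) + \gamma\big( ({\Phi^\downarrow}\pmb{P}_{M^\phi})(k(i,a))\,\pmb{x} - \pmb{P}_{M^\natnum}(k(i,a))\,{\Phi^\downarrow}\pmb{x}\big).
\end{equation*}
I apply \cref{lemm:up-down-id} to the last product, rewriting it as $({\Phi^\uparrow}\pmb{P}_{M^\natnum})(k(i,a))\,\pmb{x}$. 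The transition term then becomes an inner product of the row difference $\big({\Phi^\downarrow}\pmb{P}_{M^\phi} - {\Phi^\uparrow}\pmb{P}_{M^\natnum}\big)(k(i,a))$, a finite-dimensional vector, with $\pmb{x}$.

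To bound this inner product, I will use the same centring trick as in \cref{lemm:one-step-value-function-difference-infty}. Both rows are probability vectors over $\statespace^\phi$, so their difference sums to zero. I can therefore replace $\pmb{x}$ by $\pmb{x} - \frac{\vmax}{2}\pmb{1}$ and apply \cref{lemm:L1-Linf-inner-norm}. This gives a bound of $\frac{\vmax}{2}\norm{\cdot}_1$ on the row difference, provided $\pmb{x}(s_\phi)\in[0,\vmax]$. That holds because $\pmb{g}$ is a fixed point in an MDP with rewards in $[0,\rmax]$. Taking absolute values, applying the triangle inequality, multiplying by $\pmb{d}(k(i,a))$ and summing over all indices yields exactly the weighted norms in the statement, by \cref{def:weighted-L1-norm}.

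The main obstacle is bookkeeping rather than analysis. The key points are the correct identification of $\nu({\Phi^\downarrow}\pmb{g},\pi_{{\Phi^\downarrow}\pmb{g}})$ with a down-projection, and the validity of \cref{lemm:up-down-id} for an infinite row. The latter follows from absolute convergence: $\pmb{x}$ is bounded and the row is stochastic, so the regrouping by partition blocks is justified. I would also check that ${\Phi^\downarrow}\pmb{P}_{M^\phi}$ rows are genuine distributions over $\statespace^\phi$, which is what the centring step needs. The lemma's note that $M^\phi$ ``need not be connected'' to $M^\natnum$ is harmless, since the argument uses only that $\pmb{r}_{M^\phi}$ and $\pmb{P}_{M^\phi}$ define some valid finite abstract MDP.
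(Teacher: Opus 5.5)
Your proposal is correct and follows essentially the same route as the paper's proof: identify $\nu(\Phi^\downarrow\pmb{g},\pi_{\Phi^\downarrow\pmb{g}})$ with a down-projection of the greedy abstract values, substitute the fixed-point equation of $\pmb{g}$, apply the up--down projection identity, centre at $\frac{\vmax}{2}\pmb{1}$ and use the $L_1$--$L_\infty$ inequality, then average against $\pmb{d}$. Your bookkeeping is somewhat cleaner than the paper's: you treat the transition row difference as an $\abs{\statespace^\phi}$-dimensional vector, and you justify $\pmb{g}\in[0,\vmax]$ and the regrouping of the infinite sum. The tie-breaking concern is harmless here, since $\nu$ depends only on $\max_a \pmb{g}(\phi(i),a)$.
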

\begin{proof}[Proof of~\cref{lemm:gen-bound-natnum-mdp}]
    Let $\pmb{f} = {\Phi^{\downarrow}} \pmb{g}$ and from the definition of $\nu$, we observe
    \begin{equation}
        {\Phi^{\downarrow}} \nu( \pmb{g}, \pi_{ \pmb{g} } ) = \nu( {\Phi^{\downarrow}} \pmb{g}, \pi_{ {\Phi^{\downarrow}} \pmb{g} } ). \label{eq:nu-phi-proj-id}
    \end{equation}
    Then, for some state-action indexing bijection $k: \natnum \times \actionspace \to \natnum$,
    \begin{align}
        & ( {\Phi^{\downarrow}} \pmb{g} - \bellmanopinf^{\pi_{ {\Phi^{\downarrow}} \pmb{g} }}_{M^\natnum} {\Phi^{\downarrow}} \pmb{g} )(k(s,a)) \nonumber \\
        &= ({\Phi^{\downarrow}} \pmb{g})(k(s,a)) - \pmb{r}_{M^\natnum}(k(s,a)) - \gamma \pmb{P}_{M^\natnum}(k(s,a)) \nu( {\Phi^{\downarrow}} \pmb{g}, \pi_{ {\Phi^{\downarrow}} \pmb{g} } ) \nonumber \\
        &= \pmb{g}(\phi(s),a) - \pmb{r}_{M^\natnum}(k(s,a)) - \gamma \pmb{P}_{M^\natnum}(k(s,a)) \nu( {\Phi^{\downarrow}} \pmb{g}, \pi_{ {\Phi^{\downarrow}} \pmb{g} } ) \nonumber \\
        &= (\bellmanop_{M^\phi}^{\pi_{\pmb{g}}} \pmb{g})(\phi(s),a) - \pmb{r}_{M^\natnum}(k(s,a)) - \gamma \pmb{P}_{M^\natnum}(k(s,a)) \nu( {\Phi^{\downarrow}} \pmb{g}, \pi_{ {\Phi^{\downarrow}} \pmb{g} } ) ~~~~~~ (\text{by $\pmb{g} = \bellmanop^{\pi_{\pmb{g}}} \pmb{g}$}) \nonumber \\
        &= \pmb{r}_{M^\phi}(\phi(s),a) + \gamma \pmb{P}_{M^\phi}(\phi(s),a) \nu( \pmb{g}, \pi_{\pmb{g}} )  - \pmb{r}_{M^\natnum}(k(s,a)) - \gamma \pmb{P}_{M^\natnum}(k(s,a)) \nu( {\Phi^{\downarrow}} \pmb{g}, \pi_{ {\Phi^{\downarrow}} \pmb{g} } ) \nonumber \\
        &= \pmb{r}_{M^\phi}(\phi(s),a) - \pmb{r}_{M^\natnum}(k(s,a)) + \gamma \left[ \pmb{P}_{M^\phi}(\phi(s),a) \nu( \pmb{g}, \pi_{\pmb{g}} ) - \pmb{P}_{M^\natnum}(k(s,a)) \nu( {\Phi^{\downarrow}} \pmb{g}, \pi_{ {\Phi^{\downarrow}} \pmb{g} } ) \right]. \label{eq:gen-bound-natnum-mdp-proof-1} 
    \end{align}
    For the transition term in~\cref{eq:gen-bound-natnum-mdp-proof-1} we observe that
    \begin{align}
        & \pmb{P}_{M^\phi}(\phi(s),a) \nu( \pmb{g}, \pi_{\pmb{g}} ) - \pmb{P}_{M^\natnum}(k(s,a)) \nu( {\Phi^{\downarrow}} \pmb{g}, \pi_{ {\Phi^{\downarrow}} \pmb{g} } ) \nonumber \\
        &= \pmb{P}_{M^\phi}(\phi(s),a) \nu( \pmb{g}, \pi_{\pmb{g}} ) - \pmb{P}_{M^\natnum}(k(s,a)) {\Phi^{\downarrow}} \nu( {\Phi^{\downarrow}} \pmb{g}, \pi_{ \pmb{g} } ) . &(\text{by~\cref{eq:nu-phi-proj-id}})
    \end{align}
    By~\cref{lemm:up-down-id},
    \begin{equation}
        \forall \pmb{x} \in \realnum^{\abs{\statespace^\phi}}, ~ \pmb{P}_{M^\natnum}(k(s,a)) {\Phi^{\downarrow}} \pmb{x} = ( {\Phi^\uparrow} \pmb{P}_{M^\natnum}(k(s,a)) ) \pmb{x}.
    \end{equation}
    Therefore
    \begin{align}
        & ( {\Phi^{\downarrow}} \pmb{g} - \bellmanopinf^{\pi_{ {\Phi^{\downarrow}} \pmb{g} }}_{M^\natnum} {\Phi^{\downarrow}} \pmb{g} )(k(s,a)) \nonumber \\
        &= \pmb{r}_{M^\phi}(\phi(s),a) - \pmb{r}_{M^\natnum}(k(s,a)) + \gamma \left[ \pmb{P}_{M^\phi}(\phi(s),a) \nu( \pmb{g}, \pi_{\pmb{g}} )  - ( {\Phi^\uparrow} \pmb{P}_{M^\natnum}(k(s,a)) ) \nu( \pmb{g}, \pi_{ \pmb{g} } ) \right] \nonumber \\
        &= \pmb{r}_{M^\phi}(\phi(s),a) - \pmb{r}_{M^\natnum}(k(s,a)) + \gamma \left( \pmb{P}_{M^\phi}(\phi(s),a) - {\Phi^\uparrow} \pmb{P}_{M^\natnum}(k(s,a))  \right) \nu( \pmb{g}, \pi_{ \pmb{g} } ) \nonumber \\
        &= \pmb{r}_{M^\phi}(\phi(s),a) - \pmb{r}_{M^\natnum}(k(s,a)) + \gamma \left( ({\Phi^\downarrow} \pmb{P}_{M^\phi})(k(s,a)) - {\Phi^\uparrow} \pmb{P}_{M^\natnum}(k(s,a))  \right) \nu( \pmb{g}, \pi_{ \pmb{g} } ) \nonumber \\
        &= ({\Phi^\downarrow} \pmb{r}_{M^\phi} - \pmb{r}_{M^\natnum})(k(s,a)) + \gamma \left( {\Phi^\downarrow} \pmb{P}_{M^\phi} - {\Phi^\uparrow} \pmb{P}_{M^\natnum} \right)(k(s,a)) \nu( \pmb{g}, \pi_{ \pmb{g} } ) \label{eq:pre-vmax-half-trick}
    \end{align}
    Note that the term $\left( {\Phi^\downarrow} \pmb{P}_{M^\phi} - {\Phi^\uparrow} \pmb{P}_{M^\natnum} \right)(k(s,a)) \nu( \pmb{g}, \pi_{ \pmb{g} } )$ computes a dot product between to infinite-dimensional vectors.
    Thus we can write
    \begin{align}
        & \left( {\Phi^\downarrow} \pmb{P}_{M^\phi} - {\Phi^\uparrow} \pmb{P}_{M^\natnum} \right)(k(s,a)) \left( \nu( \pmb{g}, \pi_{ \pmb{g} } ) - \frac{\vmax}{2} \pmb{1} \right) \nonumber \\
        &= \sum_{j=1}^\infty \left( {\Phi^\downarrow} \pmb{P}_{M^\phi} - {\Phi^\uparrow} \pmb{P}_{M^\natnum} \right)(k(s,a),j) \left( \nu( \pmb{g}, \pi_{ \pmb{g} } )(j) - \frac{\vmax}{2} \pmb{1} \right) \nonumber \\
        &= \sum_{j=1}^\infty \left( {\Phi^\downarrow} \pmb{P}_{M^\phi} - {\Phi^\uparrow} \pmb{P}_{M^\natnum} \right)(k(s,a),j) \nu( \pmb{g}, \pi_{ \pmb{g} } )(j) - 
        \underbrace{
            \left( \sum_{j=1}^\infty \left( {\Phi^\downarrow} \pmb{P}_{M^\phi} - {\Phi^\uparrow} \pmb{P}_{M^\natnum} \right)(k(s,a),j) \right)
        }_{=0} 
        \frac{\vmax}{2} \pmb{1} \nonumber \\
        &= \sum_{j=1}^\infty \left( {\Phi^\downarrow} \pmb{P}_{M^\phi} - {\Phi^\uparrow} \pmb{P}_{M^\natnum} \right)(k(s,a),j) \nu( \pmb{g}, \pi_{ \pmb{g} } )(j) \label{eq:vmax-half-trick}
    \end{align}
    Substituting~\cref{eq:vmax-half-trick} into~\cref{eq:pre-vmax-half-trick} results in the following bound:
    \begin{align}
        & ( {\Phi^{\downarrow}} \pmb{g} - \bellmanopinf^{\pi_{ {\Phi^{\downarrow}} \pmb{g} }}_{M^\natnum} {\Phi^{\downarrow}} \pmb{g} )(k(s,a)) \nonumber \\
        &= ({\Phi^\downarrow} \pmb{r}_{M^\phi} - \pmb{r}_{M^\natnum})(k(s,a)) + \gamma \left( {\Phi^\downarrow} \pmb{P}_{M^\phi} - {\Phi^\uparrow} \pmb{P}_{M^\natnum} \right)(k(s,a)) \left( \nu( \pmb{g}, \pi_{ \pmb{g} } ) - \frac{\vmax}{2} \pmb{1} \right) . 
    \end{align}
    For $i = k(s,a)$ we have
    \begin{align*}
        & \abs{( {\Phi^{\downarrow}} \pmb{g} - \bellmanopinf^{\pi_{ {\Phi^{\downarrow}} \pmb{g} }}_{M^\natnum} {\Phi^{\downarrow}} \pmb{g} )(i)} \\
        &\le \abs{({\Phi^\downarrow} \pmb{r}_{M^\phi} - \pmb{r}_{M^\natnum})(i)} + \gamma \norm{ ({\Phi^\downarrow} \pmb{P}_{M^\phi} - {\Phi^\uparrow} \pmb{P}_{M^\natnum})(i) }_1 \norm{ \nu( \pmb{g}, \pi_{ \pmb{g} } ) - \frac{\vmax}{2} \pmb{1} }_\infty \\
        &\le \abs{({\Phi^\downarrow} \pmb{r}_{M^\phi} - \pmb{r}_{M^\natnum})(i)} + \gamma \norm{ ({\Phi^\downarrow} \pmb{P}_{M^\phi} - {\Phi^\uparrow} \pmb{P}_{M^\natnum})(i) }_1 \frac{\vmax}{2}
    \end{align*}
    Picking a distribution over all state action pairs $\pmb{d} \in \probsimplex{ \infty }$ we have
    \begin{equation*}
        \norm{ {\Phi^{\downarrow}} \pmb{g} - \bellmanopinf^{\pi_{ {\Phi^{\downarrow}} \pmb{g} }}_{M^\natnum} {\Phi^{\downarrow}} \pmb{g} }_{ \pmb{d} }
        \le \norm{ {\Phi^\downarrow} \pmb{r}_{M^\phi} - \pmb{r}_{M^\natnum} }_{ \pmb{d} } + \frac{\gamma}{2} \vmax \norm{ {\Phi^\downarrow} \pmb{P}_{M^\phi} - {\Phi^\uparrow} \pmb{P}_{M^\natnum} }_{ \pmb{d} } .
    \end{equation*}
\end{proof}

\noindent \textbf{Restatment of~\cref{lemm:performance-loss-succ-weighted-model-reduction}.}
    \getkeytheorem[body]{lemm:performance-loss-succ-weighted-model-reduction}
\begin{proof}[Proof of~\cref{lemm:performance-loss-succ-weighted-model-reduction}]
    The proof is an application of the bound in~\cref{lemm:gen-bound-natnum-mdp} to the bound in~\cref{lemm:policy-perf-diff} by setting $\pmb{f} = {\Phi^\downarrow} \pmb{g}$ for $\pmb{g} \in \realnum^{\abs{ \statespace^\phi \times \actionspace }}$ and the using the identity
    \begin{equation*}
        \norm{ \pmb{f} - \bellmanopinf^{\pi_{\pmb{f}}}_{M^\natnum} \pmb{f} }_{\pmb{d}^\pi}
        = \norm{ {\Phi^{\downarrow}} \pmb{g} - \bellmanopinf^{\pi_{ {\Phi^{\downarrow}} \pmb{g} }}_{M^\natnum} {\Phi^{\downarrow}} \pmb{g} }_{ \pmb{d} } 
    \end{equation*}
    for the respective distributions $\pmb{d}$.
\end{proof}

\noindent \textbf{Restatement of~\cref{thm:ood-pomdp}.}
    \getkeytheorem[body]{thm:ood-pomdp}
\begin{proof}[Proof of~\cref{thm:ood-pomdp}]
    By~\cref{prop:infinite-abstract-projection-associativity}, we have that
    \begin{eqnarray}
        {\Phi^\downarrow} \pmb{r}_{M^\phi_\train} = {\Phi^\downarrow} \pmb{r}_{M^\phi_\test} + {\Phi^\downarrow} \pmb{\eps}_r^\ood, & & {\Phi^\downarrow} \pmb{P}_{M^\phi_\train} = {\Phi^\downarrow} \pmb{P}_{M^\phi_\test} + {\Phi^\downarrow} \pmb{\eps}_p^\ood .
    \end{eqnarray}
    Therefore,
    \begin{align*}
        \norm{ {\Phi^\downarrow} \pmb{r}_{M_\train^\phi} - \pmb{r}_{M_\test^\natnum} }_{ \pmb{d} } 
        &= \norm{ {\Phi^\downarrow} ( \pmb{r}_{M^\phi_\test} + \pmb{\eps}_r^\ood ) - \pmb{r}_{M_\test^\natnum} }_{ \pmb{d} } \\
        &\le \norm{ {\Phi^\downarrow} \pmb{r}_{M^\phi_\test} - \pmb{r}_{M_\test^\natnum} }_{ \pmb{d} } + \norm{ {\Phi^\downarrow} \pmb{\eps}_r^\ood }_{ \pmb{d} } \\ 
        &= \norm{ {\Phi^\downarrow} \pmb{r}_{M^\phi_\test} - \pmb{r}_{M_\test^\natnum} }_{ \pmb{d} } + \norm{ \pmb{\eps}_r^\ood }_{ {\Phi^\uparrow} \pmb{d} } &(\text{by~\cref{lemm:norm-and-projection}}) \\
        &\le \norm{ \pmb{\eps}_r^\phi }_{ \pmb{d} } + \norm{ \pmb{\eps}_r^\ood }_{ {\Phi^\uparrow} \pmb{d} }
    \end{align*}
    where $\pmb{d} \in \{ \pmb{d}^{\pi^\star}, \pmb{d}^{\pi^\star_{M^\phi_\train}} \}$.
    Similarly, 
    \begin{align*}
        \norm{ {\Phi^\downarrow} \pmb{P}_{M_\train^\phi} - {\Phi^\uparrow} \pmb{P}_{M_\test^\natnum} }_{ \pmb{d} } 
        &= \norm{ {\Phi^\downarrow} (\pmb{P}_{M_\test^\phi} + \pmb{\eps}_p^\ood) - {\Phi^\uparrow} \pmb{P}_{M_\test^\natnum} }_{ \pmb{d} } \\
        &\le \norm{ {\Phi^\downarrow} \pmb{P}_{M_\test^\phi} - {\Phi^\uparrow} \pmb{P}_{M_\test^\natnum}  }_{ \pmb{d} } + \norm{ {\Phi^\downarrow} \pmb{\eps}_p^\ood }_{ \pmb{d} } \\
        &= \norm{ {\Phi^\downarrow} \pmb{P}_{M_\test^\phi} - {\Phi^\uparrow} \pmb{P}_{M_\test^\natnum}  }_{ \pmb{d} } + \norm{ \pmb{\eps}_p^\ood }_{ {\Phi^\uparrow} \pmb{d} } &(\text{by~\cref{lemm:norm-and-projection}}) \\
        &\le \norm{ \pmb{\eps}_p^\phi }_{ \pmb{d} } + \norm{ \pmb{\eps}_p^\ood }_{ {\Phi^\uparrow} \pmb{d} }.
    \end{align*}
    Putting these results together gives us
    \begin{align*}
        \norm{ {\Phi^{\downarrow}} \pmb{g} - \bellmanopinf^{\pi_{ {\Phi^{\downarrow}} \pmb{g} }}_{M^\natnum} {\Phi^{\downarrow}} \pmb{g} }_{ \pmb{d} }
        \le \norm{ \pmb{\eps}_r^\phi }_{ \pmb{d} } + \norm{ \pmb{\eps}_r^\ood }_{ {\Phi^\uparrow} \pmb{d} } + \frac{\gamma}{2} \vmax \norm{ \pmb{\eps}_r^\phi }_{ \pmb{d} } + \frac{\gamma}{2} \vmax \norm{ \pmb{\eps}_p^\ood }_{ {\Phi^\uparrow} \pmb{d} }
    \end{align*}
    Putting these results together with~\cref{lemm:policy-perf-diff} for $\pmb{f} = {\Phi^\downarrow} \pmb{g}$ where $\pi_{ \pmb{f} } = \pi^\star_{M^\phi_\train}$ and a natural number MDP $M^\natnum_\test$ gives us
    \begin{align*}
        & J(\pi^\star, \pmb{d}_\start^\test) - J(\pi^\star_{M^\phi_\train}, \pmb{d}_\start^\test) \\
        & \le \frac{1}{1 - \gamma}
        \Big(
              \norm{ {\phi^\downarrow} \pmb{g} - \bellmanopinf^{\pi_{ {\Phi^{\downarrow}} \pmb{g} }}_{M^\natnum} {\phi^\downarrow} \pmb{g} }_{ \pmb{d}^{\pi^\star} }
            + \norm{ {\phi^\downarrow} \pmb{g} - \bellmanopinf^{\pi_{ {\Phi^{\downarrow}} \pmb{g} }}_{M^\natnum} {\phi^\downarrow} \pmb{g} }_{ \pmb{d}^{\pi^\star_{M^\phi_\train}} }
        \Big) \\
        & \le \frac{1}{1 - \gamma} 
            \sum_{ \pmb{d} \in \{ {\Phi^\uparrow} \pmb{d}^{\pi^\star_{M^\natnum_\test}} , {\Phi^\uparrow} \pmb{d}^{\pi^\star_{M^\phi_\train}} \} } \norm{ \pmb{\eps}_r^\phi }_{ \pmb{d} } + \frac{\gamma}{2} \vmax \norm{ \pmb{\eps}_p^\phi }_{ \pmb{d} } + \norm{ \pmb{\eps}_r^\ood }_{ \pmb{d} } + \frac{\gamma}{2} \vmax \norm{ \pmb{\eps}_p^\ood }_{ \pmb{d} } 
    \end{align*}
    Note that $\pi^\star_{M^\phi_\train} = \pi^\phi$ is the policy learned under the training start-state distribution using state abstraction $\phi$.
    Because the natural-number MDP is bisimilar to the history MDP induced under the corresponding start distribution, the same bound transfers to the POMDP $P$ setting for start-state distribution $\pmb{d}_\start^\test$.
\end{proof}

\subsection{Proof of corollary}
\label{app:proof-of-corollary}

\subsubsection{Approximation errors decrease with the abstract state space size}
\label{app:approx-error-abs-state-decrease}

The approximation error (\cref{thm:ood-pomdp,lemm:performance-loss-succ-weighted-model-reduction}) can be bounded with the number of abstract states by observing the following property of distribution weighted norms.
For example, consider the infinite-dimensional vector $\pmb{v}$ and some infinite-dimensional probability distribution vector $\pmb{d} \in \probsimplex{\infty}$.
If the weighted norm $\norm{ \pmb{v} }_{ \pmb{d} }$ is small, then entries for which $\pmb{d}(i)$ is large the absolute value of the corresponding vector entries $\pmb{v}(i)$ is small.
Similarly, entries $\pmb{v}(i)$ that are high in absolute magnitude, will be de-emphasized by $\pmb{d}(i)$.
Assuming that every entry of $\pmb{v}$ is bounded in magnitude by $C$, we can write for some index subset $\mathcal{I} \subset \natnum$ that
\begin{align}
    \norm{ \pmb{v} }_{ \pmb{d} } 
    = \sum_{i \in \natnum} \pmb{d}(i) \abs{ \pmb{v}(i) } 
    &= \sum_{i \in \mathcal{I}} \pmb{d}(i) \abs{ \pmb{v}(i) } + \sum_{i \not \in \mathcal{I}} \pmb{d}(i) \abs{ \pmb{v}(i) } \\
    &\le \max_{ i \in \mathcal{I} } \abs{ \pmb{v}(i) } \underbrace{\sum_{i \in \mathcal{I}} \pmb{d}(i)}_{=1-\delta} + C \underbrace{\sum_{i \not \in \mathcal{I}} \pmb{d}(i)}_{=\delta} 
    = (1 - \delta) \max_{ i \in \mathcal{I} } \abs{ \pmb{v}(i) } + \delta C. \label{eq:maximum-norm-connection}
\end{align}
\Cref{eq:maximum-norm-connection} shows that a maximum norm computed over the index set $\mathcal{I}$ can be used to upper-bound the distribution-weighted norm over the full vector $\pmb{v}$.
Using this property, the following lemma bounds the reward and transition approximation errors of a successor-weighted model reduction (\cref{def:successor-weighted-model-reduction}) with the abstract state space size $\abs{\statespace^\phi}$.

\begin{lemma}[Approximation errors decrease with abstract state space size]
    \label{lemm:approx-error-decrease-abstract-states}
    For any natural number MDP and any distribution vector $\pmb{d}$, the approximation errors (\cref{def:successor-weighted-model-reduction}) are bounded by
    \begin{eqnarray*}
        \norm{ \pmb{r} - {\Phi^\downarrow} \pmb{r}_\phi }_{ \pmb{d} } = O \Big( \abs{\statespace^\phi}^{- \frac{1}{\abs{\actionspace}}} \Big)
        & \text{and} &
        \norm{ {\Phi^\uparrow} \pmb{P} - {\Phi^\downarrow} \pmb{P}_\phi }_{ \pmb{d} } = O \Big( \abs{\statespace^\phi}^{- \frac{1}{\abs{\statespace^\phi}\abs{\actionspace}} } \Big).
    \end{eqnarray*}
\end{lemma}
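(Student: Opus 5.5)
The plan is a covering (quantization) argument, combined with the split of distribution-weighted norms in \cref{eq:maximum-norm-connection}. We are free to choose the abstraction $\phi$ and the abstract model $(\pmb{r}_\phi,\pmb{P}_\phi)$, and the bound only needs to hold for a well-chosen pair, since the approximation error can only improve with the best abstraction of a given size. For each state $i$, I would collect its one-step model into a \emph{signature}. For rewards the signature is the vector $(\pmb{r}(k(i,a)))_{a\in\actionspace}\in[0,\rmax]^{\abs{\actionspace}}$. I would partition the cube $[0,\rmax]^{\abs{\actionspace}}$ into $m^{\abs{\actionspace}}$ axis-aligned cells of side $\rmax/m$, with $m=\lfloor \abs{\statespace^\phi}^{1/\abs{\actionspace}}\rfloor$, and let $\phi(i)$ be the cell containing $i$'s reward signature. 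I would set $\pmb{r}_\phi(s_\phi,a)$ to the cell centre. Then every entry satisfies $\abs{(\pmb{r}-{\Phi^\downarrow}\pmb{r}_\phi)(k(i,a))}\le \rmax/(2m)$, so for any $\pmb{d}$ the weighted norm is at most $\rmax/(2m)=O(\abs{\statespace^\phi}^{-1/\abs{\actionspace}})$. Here \cref{eq:maximum-norm-connection} is used with $\mathcal{I}=\natnum$ and $\delta=0$. For a general $\pmb{d}$, one could instead cover only a set $\mathcal{I}$ of mass $1-\delta$ and absorb the remainder into $\delta C$. This is unnecessary for the uniform covering, but it makes the dependence on $\pmb{d}$ explicit.

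For transitions the signature is harder, because it depends on $\phi$ itself. The row $({\Phi^\uparrow}\pmb{P})(k(i,a))$ lies in $\probsimplex{\abs{\statespace^\phi}}$ for each action. The joint signature therefore lies in a product of $\abs{\actionspace}$ simplices, of dimension roughly $\abs{\statespace^\phi}\abs{\actionspace}$. My plan is to refine $\phi$ so that states sharing an abstract state also have up-projected transition rows that are close in $L_1$. I would set $\pmb{P}_\phi((s_\phi,a),\cdot)$ to a representative row, for instance the $\pmb{d}$-weighted average of the rows in the cell; because the simplex is convex, this average is a valid distribution. An $\eta$-net of $[0,1]^{\abs{\statespace^\phi}\abs{\actionspace}}$ in sup norm has about $\eta^{-\abs{\statespace^\phi}\abs{\actionspace}}$ cells. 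Matching this count with the budget of $\abs{\statespace^\phi}$ abstract states gives $\eta\approx\abs{\statespace^\phi}^{-1/(\abs{\statespace^\phi}\abs{\actionspace})}$. That is the claimed rate, up to the $L_1$-versus-sup conversion, which I would absorb into the $O(\cdot)$.

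The main obstacle is the circularity in the transition case. The partition being refined determines the coordinates of the signature, so a direct fixed-point construction is ill-posed. To break it, I would first allocate the reward cells and treat the transition signature with respect to a fixed coarse partition $\phi_0$. I would then refine $\phi_0$ by the transition covering and accept the resulting (looser) rate, or alternatively iterate once and argue that the exponent only changes by constants. I would also sidestep convergence issues for infinite sums by using the fact that the up-projections are of stochastic rows, so every entry is bounded by one, as the paper notes. This yields the required $C$ in \cref{eq:maximum-norm-connection}, and the $\delta$ tail can be taken to zero.
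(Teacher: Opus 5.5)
Your reward argument is correct and is essentially the paper's binning argument, done more carefully. Choosing $m=\lfloor \abs{\statespace^\phi}^{1/\abs{\actionspace}}\rfloor$ makes the budget explicit ($m^{\abs{\actionspace}}\le\abs{\statespace^\phi}$ cells). Taking $\mathcal{I}=\natnum$, $\delta=0$ removes the additive $\delta\rmax$ term, which the paper's version never ties to $\abs{\statespace^\phi}$.

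The transition half, however, does not go through as you describe it. The circularity you identify is real; the paper's count of ``distinct transition vectors of dimension $\abs{\statespace^\phi}$'' passes over it as well. Your fix does not break it, for three reasons:
\begin{itemize}
\item The error in \cref{def:successor-weighted-model-reduction} is measured with ${\Phi^\uparrow}$ of the \emph{final} $\phi$. Two states whose $\phi_0$-projected rows coincide can have refined-projected rows at $L_1$ distance $2$, because they may move deterministically into different sub-cells of the same $\phi_0$-cell. Merging them then forces an error of at least $1$ on one of them.
\item Refining multiplies the number of cells, so the budget $\abs{\statespace^\phi}$ is exceeded. Nothing supports the claim that one more iteration changes the exponent only by constants, since approximate refinement of a countable MDP need not stabilise.
\item The sup-to-$L_1$ conversion on $\probsimplex{\abs{\statespace^\phi}}$ costs a factor up to $\abs{\statespace^\phi}$. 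This cannot be absorbed into an $O(\cdot)$ stated in terms of $\abs{\statespace^\phi}$.
\end{itemize}

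The missing observation is that the claimed transition rate is vacuous, so no covering is needed. Write $n=\abs{\statespace^\phi}$. Since $\ln n/n\le 1/e$, we have $n^{-1/(n\abs{\actionspace})}=\exp\big(-\tfrac{\ln n}{n\abs{\actionspace}}\big)\ge e^{-1/(e\abs{\actionspace})}$. So the target is bounded below by a positive constant and tends to $1$. Equivalently, your $\eta$-net with $\eta\approx n^{-1/(n\abs{\actionspace})}\to 1$ is essentially a single cell.

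On the other hand, take any $\phi$ and any row-stochastic $\pmb{P}_\phi$. Each row of ${\Phi^\uparrow}\pmb{P}$ and of ${\Phi^\downarrow}\pmb{P}_\phi$ is a probability vector on $\statespace^\phi$, so each row difference has $L_1$ norm at most $2$. Hence, for every $\pmb{d}$,
\[
\norm{{\Phi^\uparrow}\pmb{P}-{\Phi^\downarrow}\pmb{P}_\phi}_{\pmb{d}}\le 2\le 2e^{1/(e\abs{\actionspace})}\,n^{-1/(n\abs{\actionspace})}.
\]

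So take your reward grid as $\phi$, with any stochastic $\pmb{P}_\phi$ (your $\pmb{d}$-weighted cell averages are fine). Both bounds then hold simultaneously for a single abstraction. You no longer need to combine two partitions, which would multiply cell counts and degrade the reward rate.
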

\begin{proof}[Proof of~\cref{lemm:approx-error-decrease-abstract-states}]
    We consider the reward and transition approximation error bound separately.
    
    \textbf{Reward approximation error bound.}
    Using the strategy outlined in~\cref{eq:maximum-norm-connection}, we consider a set $\mathcal{I} \subset \natnum$ such that $\delta$ is small.
    Let 
    \begin{equation}
        \eps_r^\phi = \max_{i \in \mathcal{I}} \abs{ (\pmb{r} - {\Phi^\downarrow} \pmb{r}_\phi)(i) }.
    \end{equation}
    Consider any state $s$ for which the reward approximation error is at most $\eps_r^\phi$.
    By construction of the set $\mathcal{I}$, $k(s,a) \in \mathcal{I}$.
    Because every reward value falls into the interval $[0,\rmax]$, we can partition this interval into $\lceil \frac{1}{\eps_r^\phi} \rceil$ intervals.
    We can construct a state abstraction by taking every state that belongs to the index set $\mathcal{I}$, write out the one-step rewards associated with this state as a vector, and then bin this vector into one of the $\lceil \frac{1}{\eps_r^\phi} \rceil^\abs{\actionspace}$ many different reward partitions.
    Associating each abstract state with one of these partitions, we observe that 
    \begin{equation*}
        |\statespace^\phi| \le (\frac{1}{\eps_r^\phi})^{|\actionspace|}
        \iff 
        \eps_r^\phi \le |\statespace^\phi|^{-\frac{1}{{|\actionspace|}}} .
    \end{equation*}
    Therefore, we apply~\cref{eq:maximum-norm-connection} and obtain
    \begin{equation}
        \norm{ \pmb{r} - {\Phi^\downarrow} \pmb{r}_\phi }_{ \pmb{d} } \le (1 - \delta_r) |\statespace^\phi|^{-\frac{1}{{|\actionspace|}}} + \delta_r \rmax
    \end{equation}
    and thus the reward approximation error is in $O \left( \abs{\statespace^\phi}^{- \frac{1}{\abs{\actionspace}}} \right)$.

    \textbf{Transition approximation error bound.}
    Similarly, we consider a set $\mathcal{I} \subset \natnum$ such that $\delta$ is small and let
    \begin{equation}
        \eps_p^\phi = \max_{i \in \mathcal{I}} \norm{ ( {\Phi^\uparrow} \pmb{P} - {\Phi^\downarrow} \pmb{P}_\phi )(i) }_1.
    \end{equation}
    Because the probability vectors have entries that fall into the range $[0,1]$, we can partition this interval into $\lceil \frac{1}{\eps_p^\phi} \rceil$ many segments.
    Because the rows are vectors of dimension $\abs{ \statespace^\phi }$, the number of distinct transition vectors for a fixed action is bounded by $\lceil \frac{1}{\eps_p^\phi}\rceil^{|\statespace^\phi|}$. 
    Then for all actions in $\actionspace$, the total number of distinct transition  is bounded by $\lceil\frac{1}{\eps_p^\phi} \rceil^{|\statespace^\phi||\actionspace|}$. 
    Therefore, the number of abstract states satisfies
    \begin{equation*}
        \lceil \frac{1}{\eps_p^\phi}\rceil^{|\statespace^\phi||\actionspace|} \ge |\statespace^\phi|
        \iff
        {\eps_p^\phi} \le |\statespace^\phi|^{-\frac{1}{  {|\statespace^\phi||\actionspace|} }}.
    \end{equation*}
    By~\cref{eq:maximum-norm-connection} we obtain similar to before that
    \begin{equation}
        \norm{ {\Phi^\uparrow} \pmb{P} - {\Phi^\downarrow} \pmb{P}_\phi }_{ \pmb{d} } \le (1 - \delta_p) |\statespace^\phi|^{-\frac{1}{  {|\statespace^\phi||\actionspace|} }} + 2 \delta_p 
    \end{equation}
    and thus the transition approximation error is in $O \left( \abs{\statespace^\phi}^{- \frac{1}{\abs{\statespace^\phi} \abs{\actionspace}}} \right)$.

    To obtain the final result, these two individual results can be combined into one state abstraction by finding an integer subset $\mathcal{I}$ such that both $\delta_r$, $\delta_p$, $\eps_r^\phi$ and $\eps_p^\phi$ are small.
\end{proof}

\subsection{OOD estimation error bound}
\label{app:ood-estimation-error-bound}

A Dirichlet distribution with parameters $\alpha_1,...,\alpha_n$ is a density function over the $n$-dimensional probability simplex.
Specifically, the probability of sampling a distribution vector $\pmb{p} \sim \text{Dir}(1,...,1)$ is uniform across the set of possible probability vectors $\probsimplex{n}$.
Moreover, for every entry $\pmb{p}(i)$ of this vector follows a $\text{Beta}(1, n-1)$ distribution with mean $\frac{1}{n}$.

First we prove the following lemmas before proving~\cref{corl:state-space-dependency}.
    
\begin{lemma}
    \label{lemm:subset-prob}
    Let $\pmb{p} \in \Delta(n)$ be a probability vector that is sampled uniformly at random from a flat Dirichlet distribution.
    Consider an index subset $\mathcal{I} \subset \{ 1,...,n \}$ of size $k$.
    Then,
    \begin{equation}
        \expec{\pmb{p} \sim \text{Dir}(1,...,1)}{\pmb{p}(\mathcal{I})} = \frac{k}{n}.
    \end{equation}
\end{lemma}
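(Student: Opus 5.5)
The plan is to use linearity of expectation together with the marginal distribution of each coordinate of a flat Dirichlet vector, which is recalled just before the statement. By definition, $\pmb{p}(\mathcal{I}) = \sum_{i \in \mathcal{I}} \pmb{p}(i)$ is a finite sum of $k$ random variables. Each of them is bounded in $[0,1]$ and therefore integrable. Linearity of expectation then gives
\begin{equation*}
    \expec{\pmb{p} \sim \text{Dir}(1,\dots,1)}{\pmb{p}(\mathcal{I})} = \sum_{i \in \mathcal{I}} \expec{\pmb{p} \sim \text{Dir}(1,\dots,1)}{\pmb{p}(i)} .
\end{equation*}
No independence between the coordinates is needed, which matters because the coordinates are negatively correlated through the simplex constraint.

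The main step is to show that $\expec{}{\pmb{p}(i)} = \frac{1}{n}$ for every $i$. I will give two arguments, so the step does not depend on a single fact.

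\emph{Marginal argument.} The aggregation property of the Dirichlet distribution says that $(\pmb{p}(i), 1 - \pmb{p}(i)) \sim \text{Dir}(1, n-1)$, that is, $\pmb{p}(i) \sim \text{Beta}(1, n-1)$. Its mean is $\frac{1}{1 + (n-1)} = \frac{1}{n}$.

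\emph{Symmetry argument.} The flat Dirichlet density is constant on $\probsimplex{n}$, so it is invariant under any permutation of the coordinates. Hence all coordinates $\pmb{p}(i)$ have the same distribution and the same mean $m$. Because $\sum_{i=1}^n \pmb{p}(i) = 1$ almost surely, linearity gives $n m = 1$, so $m = \frac{1}{n}$.

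I will use the symmetry argument as the primary proof because it is self-contained. The Beta marginal stated earlier serves as a consistency check.

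Substituting $\expec{}{\pmb{p}(i)} = \frac{1}{n}$ into the sum gives $\sum_{i \in \mathcal{I}} \frac{1}{n} = \frac{\abs{\mathcal{I}}}{n} = \frac{k}{n}$, which is the claim. I do not expect any real obstacle. The only point that needs care is justifying exchangeability, which follows from the uniform density on the simplex.
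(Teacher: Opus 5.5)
Your proof is correct and follows the paper's argument: linearity of expectation over the $k$ coordinates in $\mathcal{I}$, then $\mathbb{E}[\pmb{p}(i)] = \frac{1}{n}$ for each coordinate. The paper gets the per-coordinate mean from the $\text{Beta}(1, n-1)$ marginal stated just before the lemma, which is your secondary argument; your exchangeability argument is an equally valid, self-contained way to justify that one step.
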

\begin{proof}
    For an arbitrary index set $\mathcal{I}$, we have that
    \begin{align}
        \expec{\pmb{p} \sim \text{Dir}(1,...,1)}{\pmb{p}(\mathcal{I})} 
        = \expec{\pmb{p} \sim \text{Dir}(1,...,1)}{\sum_{i \in \mathcal{I}} \pmb{p}_i} 
        = \sum_{i \in \mathcal{I}} \expec{\pmb{p} \sim \text{Dir}(1,...,1)}{\pmb{p}_i} 
        = \sum_{i \in \mathcal{I}} \frac{1}{n} 
        = \frac{k}{n}.
    \end{align}
\end{proof} 

\begin{lemma}
    \label{lemm:unknown-indices}
    Suppose we draw indices $1,...,n$ from a probability vector $\pmb{p}$ with replacement $T$ times.
    Let $\mathcal{I}$ be the set of indices not observed in the final sample.
    Then,
    \begin{equation}
        \expec{\mathcal{I} \sim \pmb{p}}{ \abs{\mathcal{I}} } = \sum_{i=1}^n (1 - \pmb{p}_i)^T.
    \end{equation}
\end{lemma}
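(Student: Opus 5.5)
The plan is to write $\abs{\mathcal{I}}$ as a sum of indicator variables and then use linearity of expectation. For each index $i \in \{1,\dots,n\}$, let $X_i = \ind{ i \notin \text{sample} }$, the indicator that index $i$ is never drawn in any of the $T$ draws. By definition of $\mathcal{I}$,
\begin{equation*}
    \abs{\mathcal{I}} = \sum_{i=1}^n X_i .
\end{equation*}
Expectation is linear and the sum is finite, so no convergence issues arise and $\expec{}{\abs{\mathcal{I}}} = \sum_{i=1}^n \prob{X_i = 1}$.

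Next I compute $\prob{X_i = 1}$. The $T$ draws are independent and identically distributed according to $\pmb{p}$. In a single draw, index $i$ is not selected with probability $1 - \pmb{p}_i$. By independence, the probability that $i$ is missed in all $T$ draws is the product of these $T$ probabilities, $(1 - \pmb{p}_i)^T$. Substituting this into the sum gives the stated identity.

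The only step that needs care is the independence assumption. "With replacement" means the draws are i.i.d. given the fixed vector $\pmb{p}$, and the expectation in the lemma is taken over the draws with $\pmb{p}$ held fixed, so the product formula applies directly. Beyond that, the argument is a routine computation.
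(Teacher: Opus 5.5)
Your proposal is correct and follows essentially the same argument as the paper: write $\abs{\mathcal{I}}$ as a sum of indicators $X_i$ for index $i$ never being drawn, use $\prob{X_i = 1} = (1-\pmb{p}_i)^T$, and apply linearity of expectation. Your explicit remark that the i.i.d.\ draws (with $\pmb{p}$ held fixed) justify the product formula is a detail the paper leaves implicit.
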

\begin{proof}
    First, we define an indicator random variable $X_i$ that takes the value one if index $i$ is not sampled and takes the value zero if the index $i$ is sampled.
    Then,
    \begin{equation}
        \prob{\text{$i$ not sampled in $T$ draws}} = \expec{X_i}{  X_i } = (1 - \pmb{p}_i)^T.
    \end{equation}
    Furthermore, we have that
    \begin{align*}
        \expec{\mathcal{I} \sim \pmb{p}}{ \abs{\mathcal{I}} } 
        = \expec{\mathcal{I} \sim \pmb{p}}{ \sum_{i = 1}^n X_i } 
        = \sum_{i = 1}^n \prob{\text{$i$ not sampled in $T$ draws}}
        = \sum_{i = 1}^n (1 - \pmb{p}_i)^T.
    \end{align*}
\end{proof}

\begin{lemma}
    \label{lemm:unknown-indices-sampled}
    Suppose we sample a probability vector $\pmb{p} \in \Delta(n)$ uniformly at random from a flat Dirichlet distribution. 
    Then, we sample indices from this vector $T$ times with replacement.
    Then,
    \begin{equation}
        \expec{\pmb{p} \sim \text{Dir}(1,...,1), \mathcal{I} \sim \pmb{p}}{ \abs{\mathcal{I}} } = \frac{n(n-1)}{T+n-1}.
    \end{equation}
\end{lemma}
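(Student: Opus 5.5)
Conditioning on $\pmb{p}$ and applying \cref{lemm:unknown-indices} gives
\begin{equation*}
    \expec{\pmb{p}, \mathcal{I}}{\abs{\mathcal{I}}} = \expec{\pmb{p} \sim \text{Dir}(1,\dots,1)}{\sum_{i=1}^n (1-\pmb{p}_i)^T} = \sum_{i=1}^n \expec{\pmb{p}}{(1-\pmb{p}_i)^T}
\end{equation*}
by the tower property and linearity of expectation. Under the flat Dirichlet, each marginal $\pmb{p}_i$ is $\text{Beta}(1,n-1)$, as recalled at the start of this subsection. Hence $1-\pmb{p}_i \sim \text{Beta}(n-1,1)$, whose density on $[0,1]$ is $(n-1)x^{n-2}$. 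Every summand has the same value, so the whole computation reduces to a single one-dimensional moment.

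That moment is
\begin{equation*}
    \expec{}{(1-\pmb{p}_i)^T} = (n-1)\int_0^1 x^{T+n-2}\,dx = \frac{n-1}{T+n-1},
\end{equation*}
which can equally be read off the standard Beta moment formula $\prod_{r=0}^{T-1}\frac{n-1+r}{n+r}$, a telescoping product. Summing over the $n$ indices gives $\frac{n(n-1)}{T+n-1}$, as claimed.

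The only step that needs care is justifying the $\text{Beta}(1,n-1)$ marginal, which the paper asserts without proof. If needed, I would derive it directly: the uniform density on the simplex is $(n-1)!$, and integrating out the other coordinates leaves $\pmb{p}_i$ with density proportional to the $(n-2)$-dimensional volume of $\{\,\sum_{j\neq i}\pmb{p}_j = 1-\pmb{p}_i\,\}$. That volume scales as $(1-\pmb{p}_i)^{n-2}$, which is exactly the $\text{Beta}(1,n-1)$ density. The degenerate case $n=1$ is trivial: $\pmb{p}_1 = 1$, so both sides equal $0$ for $T \ge 1$.
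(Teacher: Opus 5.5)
Your proposal is correct and follows the paper's proof: both condition on $\pmb{p}$, apply \cref{lemm:unknown-indices}, use the $\text{Beta}(1,n-1)$ marginal to get $\expec{}{(1-\pmb{p}_i)^T} = \frac{n-1}{T+n-1}$, and sum over the $n$ indices. Your derivation of the marginal density and your remark on the case $n=1$ are small additions the paper omits; the argument itself is the same.
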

\begin{proof}
    First, we observe that
    \begin{align}
        \expec{\pmb{p}_i \sim \text{Beta}(1, n-1)}{(1 - \pmb{p}_i)^T} 
        &= \int_0^1 \prob{\pmb{p}_i} (1 - \pmb{p}_i)^T \text{d} \pmb{p}_i \\
        &= \int_0^1 (n-1)(1 - \pmb{p}_i)^{n-2} (1 - \pmb{p}_i)^T \text{d} \pmb{p}_i 
        =\frac{n-1}{T+n-1}. \label{eq:expec-pi-T}
    \end{align}
    \Cref{eq:expec-pi-T} is obtained using the identity $\int_0^1 (1 - p)^a dp = \frac{1}{a+1}$ for $a > 0$ and $p \in [0,1]$.
    By~\cref{eq:expec-pi-T,lemm:unknown-indices} we obtain
    \begin{equation}
        \expec{\pmb{p} \sim \text{Dir}(1,...,1), \mathcal{I} \sim \pmb{p}}{ \abs{\mathcal{I}} } = \sum_{i=1}^n \expec{}{(1 - \pmb{p}_i)^T} = \frac{n(n-1)}{T+n-1}.
    \end{equation}
\end{proof}

\begin{lemma}
    \label{lemm:uniform-prior-lemma}
    Suppose we sample training and test distribution vectors $\pmb{d}^\train$ and $\pmb{d}^\test$ sampled uniformly at random using a flat Dirichlet distribution.
    From the training distribution, we sample $T$ times a subset of known states $\statespace^\phi_{\text{knw}}$.
    Then,
    \begin{equation*}
        \expec{ \statespace^\phi_{\text{knw}},\pmb{d}^\test }{
            \pmb{d}^\test(\statespace^\phi_{\text{knw}}) \eps \vmax 
            + \pmb{d}^\test(\statespace^\phi_{\text{unk}}) \vmax
        } = \left( 1 - S \right) \eps \vmax + S \vmax
    \end{equation*}
    where $S = \frac{\abs{ \statespace^\phi }-1}{T+\abs{ \statespace^\phi }-1}$.
\end{lemma}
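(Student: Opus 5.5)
We prove \cref{lemm:uniform-prior-lemma} by linearity of expectation, using the independence of $\pmb{d}^\test$ from the known-state set $\statespace^\phi_{\text{knw}}$: the latter depends only on $\pmb{d}^\train$ and the $T$ draws. Write $n = \abs{\statespace^\phi}$. Since $\statespace^\phi_{\text{unk}} = \statespace^\phi \setminus \statespace^\phi_{\text{knw}}$, we have $\pmb{d}^\test(\statespace^\phi_{\text{knw}}) = 1 - \pmb{d}^\test(\statespace^\phi_{\text{unk}})$. The quantity inside the expectation is therefore
\begin{equation*}
    \eps \vmax + (1 - \eps) \vmax \, \pmb{d}^\test(\statespace^\phi_{\text{unk}}),
\end{equation*}
which is affine in $\pmb{d}^\test(\statespace^\phi_{\text{unk}})$. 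It suffices to show $\mathbb{E}[\pmb{d}^\test(\statespace^\phi_{\text{unk}})] = S$; substituting back then gives $(1-S)\eps\vmax + S\vmax$.

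\textbf{Step 1: condition on the unknown set and average over $\pmb{d}^\test$.} Fix the realized set $\statespace^\phi_{\text{unk}}$ and let $\abs{\statespace^\phi_{\text{unk}}} = k$. Because $\pmb{d}^\test \sim \text{Dir}(1,\dots,1)$ is independent of this set, \cref{lemm:subset-prob} gives
\begin{equation*}
    \mathbb{E}_{\pmb{d}^\test}\big[\pmb{d}^\test(\statespace^\phi_{\text{unk}}) \,\big|\, \statespace^\phi_{\text{unk}}\big] = \frac{k}{n}.
\end{equation*}

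\textbf{Step 2: average over the training draws.} By the tower property, $\mathbb{E}[\pmb{d}^\test(\statespace^\phi_{\text{unk}})] = \mathbb{E}[\abs{\statespace^\phi_{\text{unk}}}]/n$. The unknown set is exactly the set of indices not observed in $T$ draws with replacement from $\pmb{d}^\train \sim \text{Dir}(1,\dots,1)$. \cref{lemm:unknown-indices-sampled} therefore gives $\mathbb{E}[\abs{\statespace^\phi_{\text{unk}}}] = n(n-1)/(T+n-1)$. Dividing by $n$ yields $S = (n-1)/(T+n-1)$.

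\textbf{Main obstacle.} The only delicate point is justifying the conditioning in Step 1. We must make explicit that $\pmb{d}^\train$, the $T$ samples, and $\pmb{d}^\test$ are generated so that $\pmb{d}^\test$ is independent of the sampled known set; \cref{asmpt:uniform-prior-assumption} is read as independent draws. Without this, \cref{lemm:subset-prob} cannot be applied to a random index set. Given independence, the argument is Fubini/tower over a finite collection of subsets, all quantities are bounded, and exchanging sums and expectations is routine.
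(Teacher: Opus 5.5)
Your proposal is correct and follows essentially the same route as the paper's proof. Both condition on the realized unknown set and apply \cref{lemm:subset-prob} to average over $\pmb{d}^\test$, then apply \cref{lemm:unknown-indices-sampled} to obtain $\mathbb{E}\abs{\statespace^\phi_{\text{unk}}}/\abs{\statespace^\phi} = S$. Your rewriting via $\pmb{d}^\test(\statespace^\phi_{\text{knw}}) = 1 - \pmb{d}^\test(\statespace^\phi_{\text{unk}})$ and your explicit independence of $\pmb{d}^\test$ from the training draws are only presentational refinements; the paper leaves that independence implicit.
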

\begin{proof}
    \begin{align}
        & \expec{ \statespace^\phi_{\text{unk}},\pmb{d}^\test }{
            \pmb{d}^\test(\statespace^\phi_{\text{knw}}) \eps \vmax
            + \pmb{d}^\test(\statespace^\phi_{\text{unk}}) \vmax
        } \\
        &= \frac{ \expec{\statespace^\phi_{\text{unk}}}{ \abs{ \statespace^\phi_{\text{knw}} } } }{ \abs{ \statespace^\phi } } \eps \vmax
        + \frac{ \expec{\statespace^\phi_{\text{unk}}}{ \abs{ \statespace^\phi_{\text{unk}} } } }{ \abs{ \statespace^\phi } } \vmax &(\text{by~\cref{lemm:subset-prob}})\\
        &= \left( 1 - \frac{\abs{ \statespace^\phi }-1}{T+\abs{ \statespace^\phi }-1} \right) \eps \vmax
        + \frac{\abs{ \statespace^\phi }-1}{T+\abs{ \statespace^\phi }-1} \vmax &(\text{by~\cref{lemm:unknown-indices-sampled}})
    \end{align}
\end{proof}

\subsection{Proof of~\cref{corl:state-space-dependency}}

\textbf{Restatement of~\cref{corl:state-space-dependency}.}
    \getkeytheorem[body]{corl:state-space-dependency}
\begin{proof}[Proof of~\cref{corl:state-space-dependency}]
    By~\cref{thm:ood-pomdp,lemm:approx-error-decrease-abstract-states} the approximation error is bounded by a term $O \left( \frac{1}{\abs{\statespace^\phi}^{\abs{\actionspace}}} \right)$.

    The OOD estimation error is bounded by an estimation error between the learned and ground truth training MDP and the difference between the training and test abstract MDPs:
    \begin{align*}
        \norm{ \widehat{\pmb{r}}_{M_\train^\phi} - \pmb{r}_{M_\test^\phi} }_{\pmb{d}} &\le 
        \norm{ \widehat{\pmb{r}}_{M_\train^\phi} - \pmb{r}_{M_\train^\phi} }_{\pmb{d}} + \norm{ \pmb{\eps}_r^\ood }_{\pmb{d}} \\
        \norm{ \widehat{\pmb{P}}_{M_\train^\phi} - \pmb{P}_{M_\test^\phi} }_{\pmb{d}} &\le 
         \norm{ \widehat{\pmb{P}}_{M_\train^\phi} - \pmb{P}_{M_\train^\phi} }_{\pmb{d}} + \norm{ \pmb{\eps}_p^\ood }_{\pmb{d}}
    \end{align*}
    These bound follow by triangle inequality.
    By~\cref{asmpt:learning-finite-steps-abstraction}, we have that
    \begin{align*}
        \norm{ \widehat{\pmb{r}}_{M_\train^\phi} - \pmb{r}_{M_\train^\phi} }_{\pmb{d}} 
            &= \pmb{d}(\statespace^\phi_{\text{knw}}) \eps 
            + \pmb{d}(\statespace^\phi_{\text{unk}}) \rmax, \\
        \norm{ \widehat{\pmb{P}}_{M_\train^\phi} - \pmb{P}_{M_\train^\phi} }_{\pmb{d}} 
            &= \pmb{d}(\statespace^\phi_{\text{knw}}) \eps 
            + \pmb{d}(\statespace^\phi_{\text{unk}}) 2 .
    \end{align*}
    Thus, we have that
    \begin{align*}
        & \norm{ \widehat{\pmb{r}}_{M_\train^\phi} - \pmb{r}_{M_\test^\phi} }_{\pmb{d}} + \frac{\gamma}{2} \vmax \norm{ \widehat{\pmb{P}}_{M_\train^\phi} - \pmb{P}_{M_\test^\phi} }_{\pmb{d}} \\
        &= \pmb{d}(\statespace^\phi_{\text{knw}}) \eps 
            + \pmb{d}(\statespace^\phi_{\text{unk}}) \rmax
            + \norm{ \pmb{\eps}_r^\ood }_{\pmb{d}}
            + \frac{\gamma}{2} \vmax \pmb{d}(\statespace^\phi_{\text{knw}}) \eps 
            + \frac{\gamma}{2} \vmax \pmb{d}(\statespace^\phi_{\text{unk}}) 2 
            + \frac{\gamma}{2} \vmax \norm{ \pmb{\eps}_p^\ood }_{\pmb{d}} \\
        &\le \pmb{d}(\statespace^\phi_{\text{unk}}) \eps \vmax + \pmb{d}(\statespace^\phi_{\text{unk}}) \vmax + \norm{ \pmb{\eps}_r^\ood }_{\pmb{d}} + \frac{\gamma}{2} \vmax \norm{ \pmb{\eps}_p^\ood }_{\pmb{d}}
    \end{align*}
    Let
    \begin{equation}
        B = \max_{\pmb{d} \in \{ \pmb{d}^{\pi^\star} , \pmb{d}^{\pi^\phi} \} } \norm{ \pmb{\eps}_r^\ood }_{\pmb{d}} + \frac{\gamma}{2} \vmax \norm{ \pmb{\eps}_p^\ood }_{\pmb{d}} .
    \end{equation}
    Then, by~\cref{thm:ood-pomdp,lemm:uniform-prior-lemma}, we have that 
    \begin{equation*}
        J(\pi^\star, \pmb{d}_\start^\test) - J(\pi^\phi, \pmb{d}_\start^\test) 
        \le O \left( \frac{1}{\abs{\statespace^\phi}^{\abs{\actionspace}}}
        + \left( 1 - \frac{\abs{ \statespace^\phi }-1}{T+\abs{ \statespace^\phi }-1} \right) \eps
        + \frac{\abs{ \statespace^\phi }-1}{T+\abs{ \statespace^\phi }-1} + B \right),
    \end{equation*}
    while ignoring the dependency on $\gamma$, $\vmax$, and a factor 2.
\end{proof}

\section{Implementation details of tabular generalization experiments}
\label{app:experiments}

\paragraph{Warm--cold example}
The dictionary is constructed on a training lattice of radius $r=3$ by exhaustively enumerating all valid start states (\cref{fig:warm-cold-start-states}) and action sequences up to history lengths $k$, with experiments evaluating different history suffix lengths $k$.
To evaluate generalization across scale, testing is performed on both the training lattice and larger lattices (for example start distances of 10, 50, or 100), where test states are sampled exhaustively from the lattice boundary (outer diamond in~\cref{fig:warm-cold-start-states}). 
During testing, the optimal action counts stored in the policy dictionary are normalized to compute an action selection probability distribution.
If during testing no history suffix is found in the dictionary, actions are selected uniformly at random.
For each test start state, five independent walks are performed. 
Episodes terminate upon reaching the goal or after a maximum of 500 steps.

\paragraph{Sign-chain example}
In the sign chain example, the no abstraction method simulates every possible action sequence starting from each training start state.
The resulting histories (at each time step) are then stored in the policy dictionary together with a count of which action would be optimal at the corresponding ground state.
These counts are then used to compute an action selection probability distribution for selecting actions during testing.
If during testing a history is not found, actions are selected uniformly at random.
The same procedure is used for the optimal abstraction except that only the first observation is preserved when inserting a key into the policy dictionary.

\bigskip
\bibliography{library}

\end{document}